
\documentclass{article}

\usepackage{microtype}
\usepackage{graphicx}
\usepackage{subcaption}
\usepackage{booktabs} 
\usepackage{multirow}
\usepackage{enumitem}
\usepackage{hhline}

\usepackage{hyperref}

\usepackage{color}




\usepackage[accepted]{icml2026}

\usepackage{amsmath}
\usepackage{amssymb}
\usepackage{mathtools}
\usepackage{amsthm}

\usepackage[capitalize,noabbrev]{cleveref}

\theoremstyle{plain}
\newtheorem{theorem}{Theorem}[section]
\newtheorem{proposition}[theorem]{Proposition}
\newtheorem{lemma}[theorem]{Lemma}

\theoremstyle{definition}
\newtheorem{definition}[theorem]{Definition}
\newtheorem{assumption}[theorem]{Assumption}
\theoremstyle{remark}

\usepackage[textsize=tiny]{todonotes}


\usepackage{amsmath,amsfonts,bm}
\usepackage{mathtools}

\NewDocumentCommand{\Cot}{oooo}{%
    \mathsf{CoT}\IfNoValueF{#1}{
    	[#1%
  			\IfNoValueF{#2}{ ,#2}%
    		\IfNoValueF{#3}{ ,#3}
    		\IfNoValueF{#4}{ ,#4}
    	]}%
}
\DeclareMathOperator{\id}{id}

\newcommand{\ReLU}{\mathrm{ReLU}}

\newcommand{\Val}{\mV}
\newcommand{\Key}{\mK}
\newcommand{\Query}{\mQ}
\newcommand{\Ext}{\operatorname{EXT}}

\newcommand{\key}{\vk}
\newcommand{\query}{\vq}

\newcommand{\AND}{\mathsf{AND}}
\newcommand{\OR}{\mathsf{OR}}

\newcommand{\FF}{\mathrm{FF}}


%
%

\newcommand{\interleave}[2]{{#1}^\frown{#2}}
\newcommand{\bin}{\mathsf{bin}}
\newcommand{\sbin}{\mathsf{sbin}}
\newcommand{\inner}[2]{\left\langle #1,#2 \right\rangle}
\newcommand{\rds}[1]{\left[#1\right]_s}

\newcommand{\DEPTH}{\mathsf{DEPTH}}
\newcommand{\LOOP}{\mathsf{Loop}}
\newcommand{\pLOOP}{\mathsf{pLoop}}

\newcommand{\pCoT}{\mathsf{pCoT}}
\newcommand{\CoT}{\mathsf{CoT}}
\newcommand{\CT}{\mathsf{CT}}
\newcommand{\pCT}{\mathsf{pCT}}

\newcommand{\NC}{\mathsf{NC}}
\newcommand{\TC}{\mathsf{TC}}
\newcommand{\AC}{\mathsf{AC}}
\newcommand{\poly}{\mathsf{poly}}

\newcommand{\SIZE}{\mathsf{SIZE}}

\newcommand{\MAJORITY}{\mathsf{MAJORITY}}









\def\eqref#1{equation~\ref{#1}}









\def\1{\bm{1}}



\def\rv{{\textnormal{v}}}





\def\vzero{{\bm{0}}}
\def\vone{{\bm{1}}}

\def\va{{\bm{a}}}
\def\vb{{\bm{b}}}

\def\ve{{\bm{e}}}

\def\vh{{\bm{h}}}

\def\vk{{\bm{k}}}

\def\vp{{\bm{p}}}
\def\vq{{\bm{q}}}
\def\vr{{\bm{r}}}

\def\vt{{\bm{t}}}
\def\vu{{\bm{u}}}

\def\vx{{\bm{x}}}
\def\vy{{\bm{y}}}
\def\vz{{\bm{z}}}


\def\mA{{\bm{A}}}
\def\mB{{\bm{B}}}

\def\mI{{\bm{I}}}

\def\mK{{\bm{K}}}

\def\mO{{\bm{O}}}

\def\mQ{{\bm{Q}}}

\def\mV{{\bm{V}}}
\def\mW{{\bm{W}}}

\DeclareMathAlphabet{\mathsfit}{\encodingdefault}{\sfdefault}{m}{sl}
\SetMathAlphabet{\mathsfit}{bold}{\encodingdefault}{\sfdefault}{bx}{n}


\def\gL{{\mathcal{L}}}

\def\gV{{\mathcal{V}}}













\icmltitlerunning{A Formal Comparison Between Chain of Thought and Latent
Thought}

\begin{document}

\twocolumn[
  \icmltitle{A Formal Comparison Between Chain of Thought and Latent Thought}



  \icmlsetsymbol{equal}{*}

  \begin{icmlauthorlist}
    \icmlauthor{Kevin Xu}{utokyo}
    \icmlauthor{Issei Sato}{utokyo}
  \end{icmlauthorlist}

  \icmlaffiliation{utokyo}{Department of Computer Science, The University of Tokyo, Japan}

  \icmlcorrespondingauthor{Kevin Xu}{kevinxu@g.ecc.u-tokyo.ac.jp}
  \icmlcorrespondingauthor{Issei Sato}{sato@g.ecc.u-tokyo.ac.jp}

  \icmlkeywords{Machine Learning, ICML}

  \vskip 0.3in
]



\printAffiliationsAndNotice{}  

\begin{abstract}
Chain of thought (CoT) elicits reasoning in large language models by explicitly generating intermediate tokens. In contrast, latent thought reasoning operates directly in the continuous latent space, enabling computation beyond discrete linguistic representations. While both approaches exploit iterative computation, their comparative capabilities remain underexplored. In this work, we present a formal analysis showing that latent thought admits more efficient parallel computation than inherently sequential CoT. In contrast, CoT enables approximate counting and sampling through stochastic decoding. These separations suggest the tasks for which depth-driven recursion is more suitable, thereby offering practical guidance for choosing between reasoning paradigms. Code is available at~\url{https://github.com/kevin671/cot-vs-loop}. 
\end{abstract}

\section{Introduction}
Transformer-based large language models (LLMs)~\cite{vaswani2017attention} have shown strong performance across diverse tasks and have recently been extended to complex reasoning. Rather than directly predicting final answers, generating intermediate reasoning steps, known as chain of thought (CoT)~\cite{wei2022chain}, enhances reasoning abilities.
This naturally raises the question: \emph{why is CoT effective for complex tasks?} Recent studies have approached this question by framing reasoning as a computational problem and analyzing its complexity~\cite{feng2023towards, merrill2024the, li2024chain, nowak2024}, showing that CoT improves performance by increasing the model’s effective depth through iterative computation, thereby enabling the solution of problems that would otherwise be infeasible.

As an alternative to CoT, recent work has explored latent thought, which reasons directly in the hidden state space rather than in the discrete token space. This paradigm includes chain of continuous thought (Coconut)~\cite{hao2025training}, which replaces next tokens with hidden state, and \emph{looped Transformer (looped TF)}, in which output hidden states are iteratively fed back as inputs~\cite{dehghani2018universal}. Such iterative architectures have been shown to enhance expressivity: Coconut enables the simultaneous exploration of multiple traces~\cite{zhu2025reasoning, gozeten2025continuous}, while looped TF satisfies universality~\cite{giannou2023looped, xu2025on} and demonstrates competitive empirical performance~\cite{csordas2024moeut, bae2025mixture, zhu2025scaling}. 

\begin{figure}[t]
  \centering
    \includegraphics{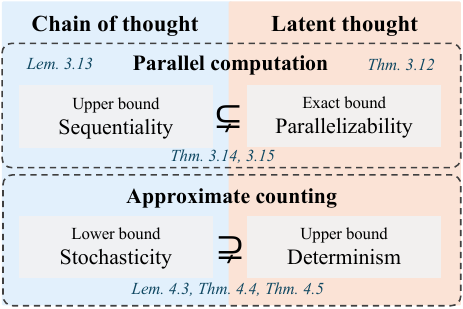}
    \caption{Overview of our formal comparison of the expressive power of chain-of-thought and latent thought reasoning.}
    \label{fig:overview}
\end{figure}

These reasoning paradigms share the core idea of iteratively applying Transformers to enhance expressive power, which naturally leads to a fundamental question:
\begin{center}
\emph{What is the separation between \\ chain of thought and latent thought?}
\end{center}
Recent studies characterize how expressivity scales with the number of iterations. Specifically, it has been shown that looped TF subsumes deterministic CoT~\cite{saunshi2025reasoning}, and exhibits a strict separation with only a logarithmic number of iterations~\cite{merrill2025a}. Nevertheless, fundamental questions remain open:
\begin{center}
\emph{Does the separation extend beyond the logarithmic regime?}\\ 
\emph{Is latent thought always more expressive than CoT?}
\end{center}

\subsection{Our Contributions}
In this work, we address both questions by clarifying the respective strengths and limitations of the two reasoning paradigms through a formal complexity-theoretic analysis of their expressive power. Specifically, we show that latent thought gains efficiency from its parallelizability, yielding separations beyond the polylogarithmic regime. In contrast, CoT benefits from stochasticity, which enables approximate counting. An overview is given in Fig.~\ref{fig:overview}.

\paragraph{Latent thought enables parallel reasoning.} By formalizing decision problems as the evaluation of directed acyclic graphs (DAGs), we reveal the parallel computational capability of latent thought utilizing continuous hidden states. This analysis can be formalized by relating the class of decision problems realizable by the model to Boolean circuits. 
In particular, Boolean circuits composed of logic gates such as AND, OR, NOT, and Majority,
with polylogarithmic depth $\log^k n$ for $k \in \mathbb{N}$ and input size $n$,
define the class $\TC^k$, a canonical model of parallel computation.
Circuit complexity plays a central role in analyzing the computational power of Transformer models: fixed-depth Transformers without CoT are known to be upper-bounded by $\TC^0$~\cite{merrill2023parallelism},
and subsequent studies analyze how the expressivity of CoT scales their computational power in terms of Boolean circuit complexity~\cite{li2024chain}.
We show that latent thought with $\log^k n$ iterations exactly captures the power of $\mathsf{TC}^k$ (Thm.~\ref{thm:log_loop}); in contrast, CoT with $\log^k n$ steps cannot realize the full power of $\mathsf{TC}^k$ (Thm.~\ref{lem:upper_cot}) due to its inherent sequentiality. This yields a strict separation in favor of latent thought in polylogarithmic regime (Thm.~\ref{thm:nonunisep1}), showing its efficiency in terms of the required number of iterations.

\paragraph{CoT enables approximate counting.} 
A counting problem is a fundamental task in mathematics and computer science that determines the number of solutions satisfying a given set of constraints, including satisfying assignments of Boolean formulas, graph colorings, and partition functions~\cite{arora2009computational}. While exact counting for the complexity class $\#\mathsf{P}$ is generally computationally intractable, approximation provides a feasible alternative. 
We show that CoT supports fully polynomial-time randomized approximation schemes ($\mathsf{FPRAS}$), yielding reliable estimates even in cases where exact counting via deterministic latent thought reasoning is intractable (\Cref{thm:fpras_sep}). 
Furthermore, leveraging classical results connecting approximate counting and sampling~\cite{jerrum1986random}, we extend this separation to distribution modeling: there exist target distributions that CoT can approximately represent and sample from, but that remain inaccessible to latent thought (\Cref{thm:fpaus_sep}). 
To the best of our knowledge, this constitutes the first formal separation in favor of CoT.

\section{Background}\label{sec:prel}
\subsection{Models of Computation}
We define a class of reasoning paradigms in which a Transformer block~\cite{vaswani2017attention} is applied iteratively. Informally, CoT generates intermediate reasoning steps explicitly as tokens in an autoregressive manner. Formal definitions and illustrations are given in~\Cref{app:tf}. 
\begin{definition}[CoT, following~\citet{merrill2024the}]
Let \( \gV \) be a vocabulary, and let \( \mathrm{TF}_{\mathrm{dec}}: \gV^* \to \gV \) denote an decoder-only Transformer. 
Given an input sequence \( x = (x_1, \dots, x_n) \in \gV^n \), the outputs of \emph{CoT} are defined by
\[
f_{\mathrm{cot}}^0(x) \coloneq x, \quad f_{\mathrm{cot}}^{k+1}(x) \coloneq f_{\mathrm{cot}}^k(x) \cdot \mathrm{TF}_{\mathrm{dec}}(f_{\mathrm{cot}}^k(x)),
\]
where \( \cdot \) denotes concatenation. We define the output to be the last tokens of \(f_{\mathrm{cot}}^{T(n)}(x) \in \gV^{\,n+T(n)}\).
\end{definition}

Coconut feeds the final hidden state as the embedding of the next token. Although the original Coconut model~\cite{hao2025training} can generate both language tokens and hidden states, we focus exclusively on hidden state reasoning steps, in order to compare its representational power with that of CoT. Here, $\mathbb{F}$ denotes the set of finite-precision floating-point numbers, and $d \in \mathbb{N}$ denotes the embedding dimension.
\begin{definition}[Coconut]
Let $\gV$ be a vocabulary and let 
\(
\mathrm{TF}^{\mathrm{Coconut}}_{\mathrm{dec}} : \gV^* \times (\mathbb{F}^d)^* \to \mathbb{F}^d
\)
be a decoder-only Transformer that maps a fixed token prefix together with a hidden state to the next hidden state.
Given an input sequence $x = (x_1, \dots, x_n) \in \gV^n$, we define the hidden states recursively by
\[
h^0 \coloneq \bigl(e(x_i)\bigr)_{i=1}^n, \quad
h^{k+1} \coloneq \mathrm{TF}^{\mathrm{Coconut}}_{\mathrm{dec}}(x, h^k),
\]
where \( e : \gV \to \mathbb{F}^d \) denotes an embedding.
The output after $T(n)$ steps is obtained by decoding a suffix of the hidden state sequence ending at $h^{T(n)}$.
\end{definition}

Looped TFs, by contrast, feed the entire model output back into the input without generating explicit tokens, recomputing all hidden states of the sequence at every iteration.
\begin{definition}[Looped TF]\label{def:looped-transformer}
Let \( \mathrm{TF}: \mathbb{F}^{d \times *} \to \mathbb{F}^{d \times *} \) denote a Transformer block. 
Given an input sequence \( x = (x_1, \dots, x_n) \in \gV^n \), the outputs are defined recursively by
\[
f_{\mathrm{loop}}^0(x) \coloneq \bigl(e(x_i)\bigr)_{i=1}^n
, \quad
f_{\mathrm{loop}}^{k+1}(x) \coloneq \mathrm{TF}(f_{\mathrm{loop}}^k(x)),
\]
where \( e : \gV \to \mathbb{F}^d \) denotes an embedding.
The output after \(T(n)\) loop iterations is the decoded last tokens of $f_{\mathrm{loop}}^{T(n)}(x)$.
\end{definition}
Here, we assume that the input for looped TF may include sufficient padding so that its length is always at least as large as the output length, as in \cite{merrill2025exact}.
The definitions of the models describe their core architectures; the specific details may vary depending on the tasks to which they are applied.

\begin{table*}[ht]
  \caption{Comparison between prior theoretical analyses and our work on the computational power of CoT, Coconut, and looped TF.}
  \label{table:related}
  \centering
    \begin{tabular}{l|c|cccl} 
    \toprule
    Paper & Model & Det & Pro & Class & Problem Setting \\ 
    \hhline{=|=|====}
    \citet{li2024chain} & CoT & \checkmark & - & \checkmark & Boolean circuit \\ 
    \citet{nowak2024} & CoT & - & \checkmark & \checkmark & Language modeling \\ 
    \citet{saunshi2025reasoning} & CoT / Loop & \checkmark & - & - & Group composition \\ 
    \citet{merrill2025exact} & CoT / Loop & \checkmark & - & \checkmark & Uniform $\mathsf{\TC^k}$ \\ 
    \citet{gozeten2025continuous,zhu2025reasoning} & CoT / Coconut & \checkmark & - & - & Graph exploration \\
    \citet{svete2025reasoning} & CoT / Loop / Masked &  \checkmark & - & \checkmark & Uniform $\mathsf{\TC^k}$ \\
    \midrule
    \textbf{Ours} & CoT / Coconut / Loop & \checkmark & \checkmark & \checkmark & $\mathsf{\TC^k}$ \& Approximate counting \\ 
    \bottomrule
  \end{tabular}
\end{table*}
\subsection{Related Work}

To understand the expressive power of Transformers, previous work studies which classes of problems can be solved and with what computational efficiency. These questions can be naturally analyzed within the framework
of computational complexity theory. Such studies on CoT and latent thought are summarized below and in \Cref{table:related}.

\paragraph{Computational power of chain of thought.}
The expressivity of Transformers is limited by bounded depth~\cite{merrill2023parallelism}, whereas CoT enhances their expressiveness by effectively increasing the number of sequential computational steps, enabling the solution of problems that would otherwise be intractable for fixed-depth architectures~\cite{feng2023towards}.
Recent work has investigated how the expressivity of CoT scales with the number of reasoning steps, formalizing CoT for decision problems and computational complexity classes~\cite{merrill2024the, li2024chain}.
Beyond decision problems, CoT has been further formalized in a probabilistic setting for representing probability distributions over strings~\citep{nowak2024}.

\paragraph{Computational power of latent thought.}
Latent thought is an alternative paradigm for increasing the number of computational steps without being constrained to the language space, with the potential to enhance model expressivity.
In particular, Coconut has been shown to enable the simultaneous exploration of multiple candidate reasoning traces~\cite{zhu2025reasoning,gozeten2025continuous}.
Looped TFs can simulate iterative algorithms~\citep{yang2024looped, luca2024simulation} and, more generally, realize polynomial-time computations~\citep{giannou2023looped}.
Recent results further demonstrate advantages over chain of thought reasoning: looped TFs can subsume the class of deterministic computations realizable by CoT using the same number of iterations~\cite{saunshi2025reasoning}, and exhibit a strict separation within the same logarithmic iterations~\cite{merrill2025a}.
Concurrent work~\citep{zhu2025reasoning, merrill2025exact} has also identified connections between parallel reasoning and Coconut and diffusion models.

\section{Latent Thought Enables Parallel Reasoning}\label{sec:det}
We formalize the reasoning problem as a graph evaluation problem.
\Cref{sec:d:low} illustrates how each model approaches the same problem differently, providing intuitive insight into their contrasting capabilities.
Building on these observations, \Cref{sec:d:sep} characterizes their expressive power and establishes a formal separation between them.

\subsection{Problem Setting}\label{sec:d:task}
\begin{figure*}[t] 
  \centering
  \includegraphics[width=\linewidth, trim=0pt 15pt 0pt 5pt, clip]{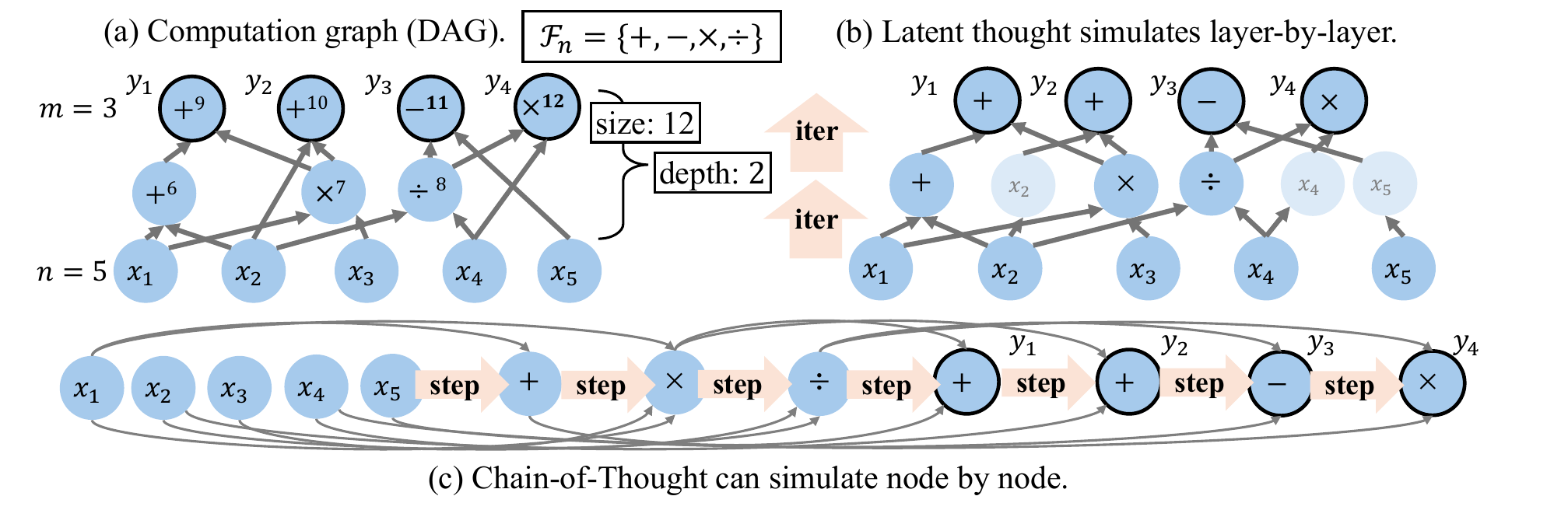}
    \caption{
    Comparison of reasoning paradigms for evaluating a DAG.
    \textbf{(a)} A computation graph $G_n$.
    \textbf{(b)} Latent thought can simulate the computation layer by layer in parallel, using a number of loops equal to the depth of the graph, $\mathrm{depth}(G_n)$.
    \textbf{(c)} CoT can sequentially simulate the computation node by node, using a number of steps proportional to the size of the graph, $O(\mathrm{size}(G_n))$.
    }
  \label{fig:graph}
\end{figure*}
Reasoning problems that can be solved by straight-line programs admit representations as directed acyclic graphs (DAGs)~\citep{aho1972optimization}, as illustrated in Fig.\,~\ref{fig:graph}(a).
\begin{definition}[Computation graph]
Let $\Sigma$ be a finite alphabet, and let $\mathcal{F}$ denote a finite set of functions 
$f : \Sigma^* \to \Sigma$.  A \emph{computation graph} is a directed acyclic graph \( G_n = (V_n, E_n) \) that defines a function \( F_{G_n}: \Sigma^n \to \Sigma^{m(n)} \), where \( m(n) \) denotes the output length.
Here \(V_n\) denotes the set of nodes, consisting of (i) $n$ input nodes with in-degree $0$,  (ii) function nodes labeled by \( f \in \mathcal{F} \), which take as arguments the predecessor nodes specified by their incoming edges in \(E_n\), and (iii) $ m(n) $ output nodes with out-degree $0$. 
The overall function is obtained by evaluating the graph in topological order.
The \emph{size} of the graph is \( |V_n| \), denoted by \(\mathrm{size}(G_n)\), and its \emph{depth} is the length of the longest path from an input to an output node, denoted by \(\mathrm{depth}(G_n)\).
\end{definition}

\paragraph{Assumptions on models.}
Our goal is to evaluate the computational efficiency of each model via an asymptotic analysis of how the required number of reasoning steps or loops scales with the input size~$n$.
Beyond \emph{time complexity}, we also allow the \emph{space complexity} of the model to scale with the input size~$n$. 
In particular, the embedding dimension in Transformer blocks can be viewed as analogous to the number of processors in classical parallel computation models.
Accordingly, we adopt a \emph{non-uniform} computational model, in which a different model is allowed for each input size.
This non-uniform setting is standard in the study of circuit complexity and parallel computation~\cite{cook1985taxonomy}, and is consistent with prior analyses of Transformers and CoT~\citep{sanford2024transformers,li2024chain}.

\paragraph{On the fairness of comparing steps and loops.} We analyze expressivity in terms of the number of reasoning steps. Although this may appear unfair in terms of raw computation,
it is justified when comparing latency. Specifically, CoT benefits from KV caching, which makes each step computationally inexpensive; however, accessing cached states is typically memory-bound, leaving compute resources underutilized. In contrast, looped TFs recompute over the full sequence at each iteration, incurring higher arithmetic cost but achieving higher arithmetic intensity and better utilization of modern parallel hardware. As a result, the latency of looped TFs is comparable to that of CoT.

\subsection{CoT Suffices with Size-scaled Steps and Latent Thought Suffices with Depth-scaled Iterations}\label{sec:d:low}
We show how each model can evaluate DAGs, which provides a lower bound on their expressivity and offers intuition for the distinctions between the models, in terms of sequentiality and parallelizability. Before presenting our main result, we first state the underlying assumptions.
\begin{definition}[\citealp{merrill2023parallelism}]
The model is \emph{log-precision}, where each scalar is stored with $O(\log n)$ bits and every arithmetic operation is rounded to that precision.
\end{definition}
\begin{assumption}[Poly-size graph]\label{ass:-1}
\(\mathrm{size}(G_n)\in \poly(n)\). 
\end{assumption}
\begin{assumption}[Poly-efficient approximation, cf.~\cite{feng2023towards}]\label{ass:0}
Each node function of \(G_n\) can be approximated by a log-precision feedforward network whose parameter size is polynomial in the input length and the inverse of the approximation error. 
We denote by $\mathrm{ff\_param}(G_n)$ an upper bound such that every $f\in\mathcal{F}$ admits such a network with at most $\mathrm{ff\_param}(G_n)$ parameters.
\end{assumption}
Under these assumptions, we show that CoT can simulate computation by sequentially decoding nodes, where intermediate tokens serve as a scratchpad allowing the evaluation of each node once all its predecessors have been computed.
\begin{theorem}[CoT for DAGs]\label{thm:cot_lower}
Let \( \{G_n\}_{n \in \mathbb{N}} \) be a family of computation graphs that satisfy Assumptions~\ref{ass:-1} and~\ref{ass:0}.  
Then, for each \( n \in \mathbb{N} \), there exists a log-precision CoT with parameter size bounded by \( O(\mathrm{ff\_param}(G_n)) \), such that for every input \( x \in \Sigma^n \), the model outputs \( F_{G_n}(x) \) with steps proportional to the ``size'' of the graph, i.e., \(O(\mathrm{size}(G_n))\).
\end{theorem}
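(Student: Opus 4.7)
The plan is to simulate $G_n$ node by node in topological order, using the CoT scratchpad as a persistent memory of already-evaluated nodes. Since we work in the non-uniform setting, I may fix an arbitrary topological ordering $v_1,\dots,v_{\mathrm{size}(G_n)}$ of $V_n$ and hard-code it, along with the incoming-edge structure, into the model's positional/token embeddings. After $t$ steps of generation, the token at position $n+t$ will represent the value of $v_t$; the CoT prefix therefore serves as a lookup table from node index to node value.

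The construction of a single transformer block is standard. First, a feedforward layer on each position computes (from the fixed embedding) an identifier of the node to be produced at the next step, namely $v_{t+1}$, together with pointers to the positions of its predecessors in the prefix. Next, an attention head (or a small constant number of heads, one per possible argument slot, using the maximum in-degree, which is $O(\mathrm{size}(G_n))$ in the worst case) uses these pointers as queries to retrieve the stored values of the predecessors; a hard-attention-style softmax at log precision suffices here, following the same argument used in \citet{feng2023towards}. Finally, a feedforward block applies the function $f_{v_{t+1}}$ to the gathered arguments. By Assumption~\ref{ass:0}, this function can be implemented by a log-precision feedforward network whose parameter size is bounded by $\mathrm{ff\_param}(G_n)$; since $\mathcal{F}$ is a finite set of node-label types, a single feedforward network of size $O(\mathrm{ff\_param}(G_n))$, together with the hard-coded label of $v_{t+1}$, can select and apply the correct function. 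The decoded token is then appended to the sequence and becomes available as scratchpad for future steps.

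Iterating this block $\mathrm{size}(G_n)$ times evaluates every node in topological order, so after $T(n)=\mathrm{size}(G_n)$ CoT steps the final $m(n)$ tokens equal $F_{G_n}(x)$, which matches the theorem's output convention. The parameter count is dominated by the single function-computation feedforward network of size $O(\mathrm{ff\_param}(G_n))$, since the graph-structural information lives in the embeddings and attention is a constant-head gather. Log-precision is preserved throughout: predecessor values are produced at log-precision by the inductive hypothesis, attention reads them exactly at the addressed positions, and Assumption~\ref{ass:0} guarantees that each $f$ is approximated within that precision budget.

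The main obstacle I anticipate is the attention-based predecessor gather when a node has large in-degree: naively one would need one head per argument slot, which inflates the parameter count. I would handle this by exploiting the non-uniform freedom to fix the arity pattern of each $v_t$ in the positional embedding and to encode the predecessor indices as a small bundle of pointers, so that a constant number of heads per possible arity class suffice; since each $f\in\mathcal F$ has fixed arity, the number of needed heads is absorbed into the $O(\mathrm{ff\_param}(G_n))$ bound. The rest is careful but routine bookkeeping of embedding widths and precision.
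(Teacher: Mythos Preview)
Your proposal is correct and follows essentially the same construction as the paper: fix a topological order, hard-code predecessor pointers in the positional embedding, use one attention head per argument slot to gather predecessor values from the scratchpad, and apply a feedforward block that computes all $f\in\mathcal F$ in parallel and selects the right one via the hard-coded function label. The paper's treatment of your ``main obstacle'' is exactly the absorption argument you sketch: the number of heads is the maximum in-degree $C_{\max}(n)$, and since any feedforward net approximating an arity-$C_{\max}(n)$ function already needs that many input parameters, this is dominated by $\mathrm{ff\_param}(G_n)$ (your phrasing ``each $f\in\mathcal F$ has fixed arity'' is slightly off, since the functions are typed $\Sigma^*\to\Sigma$, but the conclusion is the same).
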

\begin{proof}[Proof sketch] 
At each step, the attention layer retrieves the outputs of predecessor nodes from previously generated tokens, and a feed-forward layer then computes the node function, whose output is generated as the next token. 
\end{proof}

In contrast, latent thought can operate in parallel, layer by layer, where all nodes at the same depth are computed simultaneously, provided that the model has sufficient size.
%
\begin{theorem}[Latent thought for DAGs]\label{thm:loop_lower}
Let \( \{G_n\}_{n \in \mathbb{N}} \) be a family of computation graphs that satisfy Assumptions~\ref{ass:-1} and~\ref{ass:0}. 
Then, for each \( n \in \mathbb{N} \), there exists a log-precision Coconut and looped TF with parameter size \( O(\mathrm{ff\_param}(G_n) \cdot \mathrm{size}(G_n)) \),  such that for every input \( x \in \Sigma^n \), it computes \( F_{G_n}(x) \) with iterations proportional to the ``depth'' of the graph \( G_n \), i.e., \( O(\mathrm{depth}(G_n))\).
\end{theorem}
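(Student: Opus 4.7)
The plan is to construct a single Transformer block $f$ whose $k$th loop iteration finalizes the values of every graph node at depth exactly $k$, so that after $\mathrm{depth}(G_n)$ iterations all nodes---in particular, all output nodes---carry their correct values. Assumption~\ref{ass:4} then lets the at-most-$n$ output values be read from the last $L(n)\le n$ positions via the decoder. The conceptual point distinguishing this from the sequential CoT construction of \Cref{thm:cot_lower} is that all nodes sharing a common depth are evaluated \emph{in parallel} within one block application; what was $|V_n|$ sequential attention rounds becomes $\mathrm{depth}(G_n)$ rounds at the cost of widening the model.

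First I would fix the state layout. Since $\mathrm{size}(G_n)\in\poly(n)$, I allocate one dedicated \emph{slot} for each node $v\in V_n$, spread across the $n$ token positions and a sufficiently large embedding dimension; each slot holds either a value in $\Sigma$ or a distinguished ``pending'' marker, together with a small workspace for gathering predecessor values. Input nodes are initialized from the embeddings $e(x_i)$, while every remaining slot starts as pending. The graph metadata (each node's function label and the slot indices of its predecessors) is hardwired into the parameters; this is precisely where the non-uniform model license is used. Without loss of generality, one may assume each $f\in\mathcal{F}$ has constant arity, since higher-arity nodes can be replaced by constant-arity sub-DAGs of $O(\log n)$ depth, a blow-up absorbed into $\mathrm{depth}(G_n)$ up to a logarithmic factor.

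The block $f$ would execute three sub-routines in parallel over every non-input node. (i) Attention heads hardwired by the adjacency of $G_n$ copy the current contents of each predecessor slot into $v$'s workspace; because arity is bounded, a constant number of heads per node suffices. (ii) A block-diagonal feed-forward network, with one component of size $O(\mathrm{ff\_param}(G_n))$ per node, applies the log-precision approximator of $f_v$ guaranteed by Assumption~\ref{ass:0} to $v$'s workspace. (iii) A gated commit step writes the newly computed value into $v$'s slot precisely when that slot is still pending and every entry of $v$'s workspace is non-pending; otherwise the slot is preserved via the residual stream. The gate is realized by a standard log-precision ReLU construction using the pending marker as a guard. A straightforward induction on $k$ then establishes the invariant that after $k$ loop iterations every node of depth $\le k$ carries its correct value, which at $k=\mathrm{depth}(G_n)$ yields $F_{G_n}(x)$ on the output slots.

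The main obstacle I anticipate is keeping the parameter count within the claimed budget $O(\mathrm{ff\_param}(G_n)\cdot\mathrm{size}(G_n))$. The feed-forward side is tight: one subnetwork per node contributes exactly $\mathrm{ff\_param}(G_n)\cdot\mathrm{size}(G_n)$ parameters. The attention side is more delicate, since routing matrices written densely would scale as $\mathrm{size}(G_n)^2$; avoiding this blow-up requires making the query and key projections block-diagonal in the slot index, so that each node contributes only its $O(1)$ predecessor pointers and the total attention cost scales linearly with $\mathrm{size}(G_n)$. Verifying that this sparsified routing still implements the correct gather under log-precision softmax---essentially a hard-attention argument with bounded score gaps---is the step I expect to require the most care. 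Once the block is assembled, combining the layer-by-layer invariant with Assumption~\ref{ass:4} and the decoding convention of \Cref{def:looped-transformer} completes the proof.
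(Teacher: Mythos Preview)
Your high-level strategy---evaluate all nodes of a given depth in one block application, maintain a ``pending'' flag per node, and commit via a gate once all predecessors are ready---matches the paper's. The invariant and the block-diagonal feed-forward layer (one sub-network per node, contributing the $\mathrm{ff\_param}(G_n)\cdot\mathrm{size}(G_n)$ budget) are essentially what the paper does as well.

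The substantive divergence is in how predecessor values are gathered. You distribute the node slots across the $n$ positions and use attention heads, hard-wired by the graph's adjacency, to route each predecessor into the workspace. This is precisely what forces you to assume bounded arity and what creates the quadratic-parameter worry you flag. The paper sidesteps both issues entirely: it uses a \emph{single} uniform-attention head once, solely to copy all $n$ input tokens into every position's hidden state, and thereafter does \emph{all} routing inside the feed-forward layer. Because the full $|V_n|$-slot state lives at every position, the wiring ``slot of predecessor $\to$ input of $f_v$'' is just a fixed linear map baked into $\mW_1$ of the block-diagonal FFN, with cost linear in $\mathrm{size}(G_n)$ regardless of arity. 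Attention contributes $O(1)$ parameters, so the budget is met without any sparsification argument for softmax routing.

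Two consequences for your write-up. First, the arity reduction you propose (replacing high-arity nodes by $O(\log n)$-depth binary sub-DAGs) would turn the loop bound into $O(\mathrm{depth}(G_n)\cdot\log n)$ rather than $O(\mathrm{depth}(G_n))$; that is a genuine weakening of the stated theorem and would propagate to the exact $\TC^k$ characterization downstream. Second, the ``sparsified routing under log-precision softmax'' step you anticipate needing most care simply does not arise in the paper's construction. If you switch to the uniform-attention-then-FFN design, both the arity assumption and the parameter-count obstacle disappear, and the remaining work is exactly the gating/flag bookkeeping you already sketched.
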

\begin{proof}[Proof sketch]
The role assignment of each layer is based on \cite{li2024chain}, as shown in \Cref{fig:looped_tf}.
An attention layer aggregates its inputs into a single hidden state.
Unlike discrete tokens, continuous latent states allow the simultaneous encoding of the outputs of multiple nodes, enabling the feed-forward layer to compute node functions in parallel.
\end{proof}
\paragraph{Remark.} Illustrations are provided in Fig.~\ref{fig:graph}, with formal proofs deferred to \Cref{app:det}.
These results reveal distinct characteristics: CoT can utilize intermediate steps as scratchpad memory and perform computations sequentially, whereas latent thought can leverage structural parallelism to achieve greater efficiency with sufficient resources.

\subsection{Separation in Polylogarithmic Iterations}\label{sec:d:sep}
In this section, we shift to formal decision problems to precisely characterize the computational power of each reasoning paradigm, clarify what cannot be achieved, and use these limitations to derive rigorous separations. We begin by defining their complexity classes, as in~\cite{li2024chain}.
\begin{definition}[Complexity Classes \(\CoT\), \(\CT\) and \(\LOOP\)]
Let $\CoT[T(n), d(n), s(n)]$, $\CT[T(n), d(n), s(n)]$, and
$\LOOP[T(n), d(n), s(n)]$ denote the sets of languages
$\gL : \{0,1\}^* \to \{0,1\}$ for which there exists a \emph{deterministic}
CoT, Coconut, and looped TF, respectively, denoted by $M_n$
for each input size $n$, with embedding size $O(d(n))$ and
$O(s(n))$ bits of precision, such that for all $x \in \{0,1\}^n$,
the final output token after $O(T(n))$ iterations equals $\gL(x)$.
\end{definition}
Boolean circuits serve as a standard formal model of computation, where processes are defined by the evaluation of DAGs with well-established complexity measures.
\begin{definition}[Informal] 
A \emph{Boolean circuit} is a DAG over the alphabet \(\Sigma = \{0, 1\}\), where each internal node (gate) computes a Boolean function such as AND, OR, or NOT. $\SIZE[s(n)]$ and $\DEPTH[d(n)]$ denote the class of languages decidable by a non-uniform circuit family \(\{C_n\}\) with size $O(s(n))$ and depth $O(d(n))$, respectively.
\end{definition}
First, we formalize the results of the previous section to show that latent thought iterations can represent circuit depth, whereas CoT corresponds to circuit size.
\begin{theorem}[\citealp{li2024chain}]
\(\forall T(n) \in \mathrm{poly}(n),\)
\[
\SIZE[T(n)] \subseteq \CoT[T(n),\log{n},1].
\]
\end{theorem}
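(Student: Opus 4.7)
The plan is to specialize the construction behind Theorem~\ref{thm:cot_lower} to the Boolean case. A circuit family $\{C_n\}$ of size $O(T(n))$ is a family of computation graphs over the alphabet $\{0,1\}$ whose internal nodes are labeled by $\AND$, $\OR$, or $\NOT$. Each of these is a constant-fan-in Boolean function that can be computed exactly by a constant-size feedforward block, so Assumption~\ref{ass:0} applies with $\mathrm{ff\_param}(C_n) = O(1)$, and Assumption~\ref{ass:-1} holds since $T(n)\in\poly(n)$. Theorem~\ref{thm:cot_lower} then gives a CoT model running in $O(\mathrm{size}(C_n)) = O(T(n))$ decoding steps; the work that remains is to verify that the construction fits into embedding dimension $O(\log n)$ with $O(1)$-bit precision per coordinate.

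Concretely, I would fix a topological ordering $g_1,\ldots,g_{T(n)}$ of the gates of $C_n$ and design the CoT so that at decode step $t$ the model emits the bit $g_t(x)$ as the next token. Each token embedding carries three fields: the bit stored there, the absolute position written in binary (using $\lceil \log(n+T(n))\rceil = O(\log n)$ bits), and, for generated positions, a small tag encoding the gate type. The wiring function $t \mapsto (p_1(t),\, p_2(t),\, \mathrm{type}(g_t))$ is a finite table, and since both $\SIZE[T(n)]$ and $\CoT[\cdot]$ are non-uniform, this table can be hard-wired into the attention and feedforward weights indexed by the current position $n+t$. At step $t$, two attention heads perform hard-max lookups to fetch the bits at positions $p_1(t)$ and $p_2(t)$, after which a constant-size feedforward block applies the Boolean function indicated by $\mathrm{type}(g_t)$ and writes out the resulting bit.

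The main obstacle is realizing the hard-selection attention inside the tight precision budget. Exact equality between two $O(\log n)$-bit positional codes has to be tested by a sum of per-bit XNOR indicators, each computable from $O(1)$-bit arithmetic on a single coordinate; with a suitably scaled score (or a ReLU-based arg-max gadget as in \citet{li2024chain}), this produces attention weights that concentrate on the unique matching key so that the predecessor bit is retrieved exactly. Once these two bits are in hand, the subsequent Boolean evaluation is a finite lookup implementable by a constant-size MLP. Iterating for $t=1,\ldots,T(n)$ reproduces $C_n(x)$ in $O(T(n))$ decoding steps, with embedding dimension $O(\log n)$ and precision $O(1)$ bits per coordinate, giving $\SIZE[T(n)] \subseteq \CoT[T(n),\log n,1]$.
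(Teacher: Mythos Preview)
Your proposal is correct and follows the same approach implicit in the paper. This theorem is cited from \citet{li2024chain} without a separate proof; the paper's own Theorem~\ref{thm:cot_lower} generalizes exactly this construction to arbitrary DAGs, and you have correctly recovered the Boolean-circuit case by specializing it with $|\Sigma|=2$, $|\mathcal{F}|=O(1)$, and bounded fan-in, which collapses the embedding dimension to $O(\log n)$.

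One remark on the precision point you flag as the ``main obstacle'': you do not need a separate XNOR gadget. The interleaved $\sbin$ construction of Lemma~\ref{lem:attention_rounding} already works with the precision parameter $s$ decoupled from the position-encoding length $\ell$. With constant $s$ and $\ell=O(\log n)$, the iterative-rounding semantics make the partial sum saturate at $-B_s$ as soon as any bit mismatches and stay there, so the final score is $0$ iff $i=j$ and $-B_s$ otherwise; the paper's proof of Theorem~\ref{thm:cot_lower} simply happens to set $\ell=s(n)$, which is unnecessary here. With that observation, your specialization goes through directly without further machinery.
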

\begin{theorem}\label{thm:loop_lower_circuit}
For any function \(T(n) \in \mathrm{poly}(n)\) and any non-uniform circuit family \(\{C_n\}\), it holds that
\[
\begin{aligned}
\mathsf{DEPTH}[T(n)] &\subseteq \LOOP[T(n),\mathrm{size}(C_n),1], \\
\mathsf{DEPTH}[T(n)] &\subseteq \CT[T(n),\mathrm{size}(C_n),1].
\end{aligned}
\]
\end{theorem}
%
Boolean circuits serve as a formal model of parallel computations that run in polylogarithmic time using a polynomial number of processors \citep{stockmeyer1984simulation}.
\begin{definition}
For each \(k \in \mathbb{N}\), the classes \(\NC^k\), \(\AC^k\), and \(\TC^k\) consist of languages decidable by non-uniform circuit families of size \(\poly(n)\) and depth \(O(\log^k n)\), using bounded-fanin Boolean gates, unbounded-fanin $\mathsf{AND}$/$\mathsf{OR}$ gates, and threshold gates, respectively. 
\end{definition}
We then characterize the exact computational power of latent thought in the parallel computation regime.
\begin{theorem}\label{thm:log_loop}
For each \( k \in \mathbb{N},\) it holds that 
\[
\begin{aligned}
& \LOOP[\log^k n,\, \poly(n),\, 1\ \textup{(resp.\ } \log n)] \\
 &= \CT[\log^k n,\, \poly(n),\, 1\ \textup{(resp.\ } \log n)] \\
&= \AC^k\ (\textup{resp.\ } \TC^k).
\end{aligned}
\]
\end{theorem}
\begin{proof}[Proof sketch]
The inclusion from circuits to latent thought follows from \Cref{thm:loop_lower_circuit}.
For the converse inclusion, we build on the arguments of prior work~\cite{merrill2023parallelism,li2024chain},
which show that a fixed-depth Transformer block under finite precision
is contained in $\AC^0$ (or $\TC^0$ under logarithmic precision). We extend their analysis to the looped setting, which can be unrolled into a $\TC^k$ circuit
by composing a $\TC^0$ block for $\log^k n$ iterations.
\end{proof}
Moreover, we establish an upper bound on the power of CoT in the parallel computation regime. This limitation arises from the inherently sequential nature of CoT. 
\begin{lemma}\label{lem:upper_cot}
For each \( k \in \mathbb{N},\) it holds that 
\[
\CoT[\log^k{n},\poly(n),\log{n}] \subseteq \TC^{k-1}.
\]
\end{lemma}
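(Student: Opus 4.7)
The plan is to simulate the CoT computation by a non-uniform $\TC$ circuit that collapses every block of $\log n$ consecutive decode steps into a single constant-depth gadget, rather than naively chaining $\log^k n$ per-step gadgets. The starting ingredient is the standard simulation of log-precision Transformers (as used in the upper-bound direction of \citealp{li2024chain}, going back to Merrill--Sabharwal-style arguments): a single forward pass of a polynomial-size, log-precision Transformer on a sequence of length $\poly(n)$ is realized by a polynomial-size $\TC^0$ circuit; equivalently, for any fixed prefix tape $y$, the deterministic next-token map $y \mapsto f_\mathrm{dec}(y)$ is computable in $\TC^0$.

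The speed-up over naive chaining comes from a parallel guess-and-verify per block. First, I would reduce to a binary (constant-size) CoT vocabulary, consistent with the log-precision, polynomial-size regime, and then partition the $T(n) = \log^k n$ CoT steps into $N := \log^{k-1} n$ consecutive chunks of size $\log n$. Let $p_j$ denote the tape at the start of chunk $j$ and $c_j \in \{0,1\}^{\log n}$ the honest chunk output. For each candidate $\hat{c} \in \{0,1\}^{\log n}$, of which there are only $2^{\log n} = n$, I would verify in parallel across $i \in [\log n]$ the local consistency constraint $f_\mathrm{dec}(p_j, \hat{c}_1, \ldots, \hat{c}_{i-1}) = \hat{c}_i$; each such check is $\TC^0$ by the single-pass simulation, the conjunction over $i$ remains $\TC^0$, and an unbounded-fan-in OR across the $n$ candidates isolates the unique self-consistent $\hat{c}$, which must equal $c_j$. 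Each chunk subcircuit therefore has constant depth and polynomial size.

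Chaining the $N = \log^{k-1} n$ chunk subcircuits in series, with $p_{j+1} := p_j \cdot c_j$, produces a circuit of depth $O(\log^{k-1} n)$ and total size $\poly(n)$ whose final bit is $\gL(x)$; this is a $\TC^{k-1}$ circuit, giving the stated inclusion. The main obstacle is the single-pass simulation: carefully verifying that at polynomial embedding dimension and log precision, and over a tape of length $n + O(\log^k n)$, each Transformer component (attention scores, softmax, layer normalization, and feed-forward) is simulable by a constant-depth, polynomial-size threshold circuit. A secondary subtlety is ensuring that the reduction to a binary CoT vocabulary does not inflate the per-step gadget beyond $\TC^0$; this is handled by a token-by-token encoding that preserves both precision and size, so that the overall bookkeeping does not disturb the crucial $\log n$-token-per-chunk accounting that drives the depth saving.
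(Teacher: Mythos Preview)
Your proposal is correct and follows essentially the same approach as the paper: partition the $\log^k n$ steps into $\log^{k-1} n$ blocks of $\log n$ steps each, argue each block is computable in $\TC^0$, and compose serially. The only difference is that the paper invokes $\CoT[\log n,\poly(n),\log n]\subseteq\TC^0$ as a black-box citation to \citet{li2024chain}, whereas you unpack that inclusion via an explicit guess-and-verify argument (and correctly flag the vocabulary reduction as the main technicality it entails).
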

\begin{proof}
The total \(\log^k n\) steps can be divided into \( \log^{k-1} n \) blocks, each consisting of \(\log n\) steps.  
Since \(\CoT[\log n,\poly(n),\log n] \subseteq \TC^0\)~\citep{li2024chain}, each block with the previous block’s outputs fed as inputs to the next block can be simulated in \(\TC^0\); iterating this over $\log^{k-1} n$ layers yields a circuit in $\TC^{k-1}$.
\end{proof}
\begin{figure}[t]
  \centering
    \includegraphics[width=0.75\linewidth]{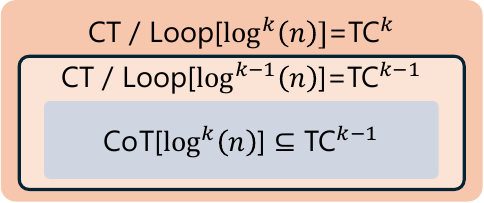}
    \caption{The separation between latent thought and CoT for decision problems, under polylogarithmic iterations.}
    \label{fig:parallel}
\end{figure}
These results lead to a separation in expressive power under standard complexity assumptions, as illustrated in \Cref{fig:parallel}.
\begin{theorem}\label{thm:nonunisep}
For each $k \in \mathbb{N}$, if $\TC^{k-1} \subsetneq \NC^k$, then
\[
\begin{aligned}
\CoT[\log^{k} n, \poly(n), \log{n}]
&\subsetneq
\LOOP[\log^{k} n, \poly(n), 1], \\
\CoT[\log^{k} n, \poly(n), \log{n}]
&\subsetneq
\CT[\log^{k} n, \poly(n), 1].
\end{aligned}
\]
\end{theorem}
\begin{theorem}\label{thm:nonunisep1}
For each $k \in \mathbb{N}$, if $\TC^{k-1} \subsetneq \TC^{k}$, then
\[
\begin{aligned}
\CoT[\log^{k} n, \poly(n), \log n]
&\subsetneq
\LOOP[\log^{k} n, \poly(n), \log n], \\
\CoT[\log^{k} n, \poly(n), \log n]
&\subsetneq
\CT[\log^{k} n, \poly(n), \log n].
\end{aligned}
\]
\end{theorem}
\paragraph{Remark.} 
The claims follow directly from \Cref{thm:log_loop} and \Cref{lem:upper_cot}. 
The established separations of the complexity classes, as summarized in Fig.\,\ref{fig:parallel}, show that latent thought reasoning enables efficient parallel solutions more effectively than CoT, which is inherently sequential.

\section{CoT Enables Approximate Counting}\label{sec:approx}
In the previous section, we showed that for decision problems, latent thought can yield more efficient solutions than CoT. This naturally raises the question of whether latent thought is universally more powerful than CoT.
While prior work has shown that CoT can be simulated by looped Transformer models for deterministic decision problems under deterministic decoding~\cite{saunshi2025reasoning}, we found that this result does not directly extend to probabilistic settings with stochastic decoding. Accordingly, we shift our focus from efficiency in terms of the number of reasoning steps to expressive capability under polynomially many iterations. 

\subsection{Preliminaries}

\paragraph{Approximate counting.} 
Formally, let $\Sigma$ be a finite alphabet and let $R \subseteq \Sigma^* \times \Sigma^*$ be a relation.
For an input $x \in \Sigma^*$, define
\(
R(x) := \{\, y \in \Sigma^* \mid (x, y) \in R \,\},
\)
and the counting problem is to determine $|R(x)|$.
A wide class of natural relations admits a recursive structure, which allows solutions to be constructed from smaller subproblems.
\begin{definition}[Informal: Self-reducibility~\citep{Schnorr1976}]
A relation \( R \) is \emph{self-reducible} if there exists a polynomial-time procedure that, given any input \(x\) and prefix \(y_{1:k}\) (with respect to a fixed output order), produces a sub-instance \( \psi(x, y_{1:k}) \) such that every solution \(z\) of \( \psi(x, y_{1:k}) \) extends \(y_{1:k}\) to a solution of \(R(x)\) (and conversely), i.e.,
\(
R\bigl(\psi(x,y_{1:k})\bigr)=\{z\mid\ \mathrm{concat}(y_{1:k}, z)\in R(x)\,\}.
\)
\end{definition}
While exact counting is intractable, there exist efficient randomized approximation algorithms~\cite{karp1983monte}.
\begin{definition}[FPRAS]
An algorithm is called a \emph{fully polynomial-time randomized approximation scheme} (FPRAS)
for a function $f$ if, for any $\varepsilon > 0$ and $\delta > 0$, it outputs an estimate
$\hat{f}(x)$ such that
\[
\Pr\!\left[(1-\varepsilon)f(x) \le \hat{f}(x) \le (1+\varepsilon)f(x)\right] \ge 1-\delta,
\]
and runs in time polynomial in $|x|$, $1/\varepsilon$, and $\log(1/\delta)$.
\end{definition}
The class of counting problems that admit an FPRAS is denoted by $\mathsf{FPRAS}$. Although randomized algorithms provide only probabilistic guarantees, they are often both more efficient and simpler than their deterministic counterparts, denoted by $\mathsf{FPTAS}$ (\Cref{def:fptas}). For example, counting the number of satisfying assignments of a DNF formula admits an FPRAS based on Monte Carlo methods~\citep{karp1989monte}, whereas no FPTAS is known for this problem.
Moreover, probabilistic analysis enables us to capture algorithmic behavior on typical instances arising in real-world applications~\cite{mitzenmacher2017probability}.


\paragraph{Probabilistic
models of computation.} In contrast to the deterministic models considered in the previous section, we now study probabilistic models that define a conditional distribution over output strings $y = (y_1, \ldots, y_m) \in \Sigma^*$.
We consider autoregressive next-token prediction of the form
\[
    p(y \mid x) = \prod_{i=1}^m p(y_i \mid x, y_{<i}),
\]
where the model is allowed to perform additional reasoning steps
before producing each output token $y_i$.
This formulation was first used to formalize CoT for language modeling by~\citet{nowak2024}. We also allow stochastic decoding for latent reasoning at the token level: reasoning iterations are performed entirely in hidden space and no linguistic tokens are sampled except for the output token $y_i$, as illustrated in Fig.\,~\ref{fig:st_models}. This definition is consistent with practical implementations~\cite{csordas2024moeut, bae2025mixture}. Within this framework, we define complexity classes of probabilistic models, denoted by $\pCoT$, $\pCT$, and $\pLOOP$, respectively.
Formal definitions are in~\Cref{app:p_models}.


\begin{figure}[t]
  \centering
    \includegraphics[width=\linewidth]{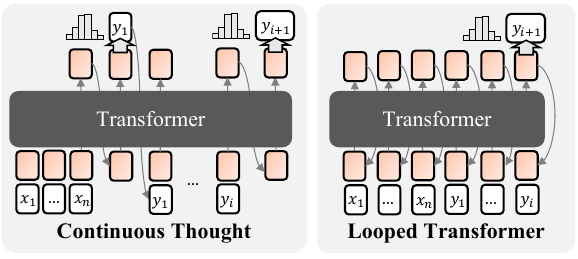}
    \caption{Probabilistic models of computation with latent thought. Each output token $y_i$ is stochastically generated.} 
    \label{fig:st_models}
\end{figure}

\subsection{Separation in Approximate Counting}
We first analyze the expressivity of the token-level conditional prediction at each step, $p(y_i \mid x, y_{<i})$, and show that CoT is strictly more expressive than latent thought in this setting.
The key distinction is whether intermediate computation permits sampling.
CoT explicitly samples intermediate reasoning tokens, inducing stochastic computation and enabling the emulation of randomized algorithms.
In contrast, latent thought performs only deterministic transformations in latent space, resulting in deterministic computation.
%
%
\begin{lemma}[Informal]\label{thm:fpras_sep}
Assume that $\mathsf{FPTAS} \subsetneq \mathsf{FPRAS}$ for self-reducible relations. There exists a self-reducible relation $R$ and an associated function $f: \Sigma^* \times \Sigma^* \to \mathbb{N}$ defined by \(f(x, y_{<i}) \coloneqq |\{\, z \in \Sigma^* : (x, y_{<i}z) \in R \,\}| \)
such that CoT with polynomially many steps admits an FPRAS for $f$. Whereas, no latent thought with polynomially many iterations admits the same approximation guarantee. 
\end{lemma}
\begin{proof}[Proof sketch] 
For self-reducible relations, approximating the counting function $f$ on subproblems is polynomial-time inter-reducible with approximating $|R(x)|$~\cite{jerrum1986random}. If latent thought with polynomially many iterations admitted an FPTAS for $f$,
then it would induce a deterministic FPTAS for $|R(x)|$, contradicting the assumption.
\end{proof}
%

\subsection{Separation in Approximate Sampling}
We then move from token-level conditional distributions $p(y_i \mid x, y_{<i})$
to the full sequence-level distribution $p(y \mid x)$. Beyond approximate counting, we establish a separation for approximate sampling problems. Specifically, we construct target distributions for which the complexity of each conditional can be reduced to approximate counting. 
\begin{theorem}\label{thm:fpaus_sep}
Assume that $\mathsf{FPTAS} \subsetneq \mathsf{FPRAS}$ for self-reducible relations.
There exists a distribution $p(y \mid x)$ over $y \in \Sigma^*$ and $x \in \Sigma^n$
such that a CoT with a polynomial number of steps, whose induced output conditionals are denoted by $q(y_i \mid x, y_{<i})$, admits an FPRAS for approximating the conditional probabilities
$p(y_i \mid x, y_{<i})$
for all $x \in \Sigma^n$, indices $i \ge 1$, and prefixes
$y_{<i} \coloneq (y_1,\ldots,y_{i-1})$.
In contrast, no latent thought with polynomially many iterations
admits the same approximation guarantee.
\end{theorem}
\begin{proof}[Proof sketch]
Define the target distribution $p$ to be the uniform distribution supported on the solution set $R(x)$. We rely on the classical result that approximate sampling from the uniform distribution over solutions, captured by the class $\mathsf{FPAUS}$, is polynomial-time inter-reducible with approximate counting
for self-reducible relations~\cite{jerrum1986random}.
Let $U(\cdot \mid x)$ denote the uniform distribution over solutions of a self-reducible relation $R(x)$. This distribution admits an autoregressive factorization
\(
  U(y \mid x) \;=\; \prod_{i=1}^{m} p(y_i \mid x, y_{<i}),
\)
where each conditional probability is given by
\(
  p(y_i \mid x, y_{<i}) =
  \frac{\bigl|\{\, z \in \Sigma^* : (x, y_{1:i+1}z) \in R \,\}\bigr|}
       {\bigl|\{\, z \in \Sigma^* : (x, y_{1:i}z) \in R \,\}\bigr|}.
\)
We show that each conditional probability, expressed as a ratio of subproblem counts,
reduces to approximate counting.
Then, applying \Cref{thm:fpras_sep} to these conditionals yields the desired separation for approximate sampling.
\end{proof}
\begin{figure}[t]
  \centering
  \includegraphics[width=0.75\linewidth]{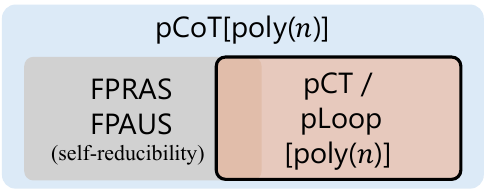}
    \caption{The separation for approximate counting (sampling).}
    \label{fig:approximate}
\end{figure}
Consequently, we obtain the following separations in favor of CoT, as also shown in Fig.~\ref{fig:approximate}.
\begin{theorem}
Assuming $\mathsf{FPTAS} \subsetneq \mathsf{FPRAS}$ for self-reducible relations, it holds that
\[
\forall\, \mathcal{M} \in \{\pCT, \pLOOP\}, \quad
\mathcal{M}[\poly(n)] \subsetneq \pCoT[\poly(n)].
\]
\end{theorem}

\section{Experiments}
In this section, we provide empirical validation of our theoretical results on
tasks with well-characterized complexity. Specifically, we study parallelizable tasks to empirically validate the efficiency of latent thought predicted in \Cref{sec:det}, and approximate counting and sampling tasks to demonstrate the effectiveness of CoT as shown in \Cref{sec:approx}. 

\subsection{Experimental Setting}

\paragraph{Fundamental algorithmic reasoning tasks.}
We use four problems. (1) Word problems for finite non-solvable groups: given a sequence of generators, the task is to evaluate their composition, which is $\mathsf{NC}^1$-complete~\cite{barrington1986bounded}, also studied for Looped TF~\cite{merrill2025a}. 
(2) $s$--$t$ connectivity (STCON): given a directed graph $G=(V,E)$ and two vertices $s,t \in V$, the task is to decide whether $t$ is reachable from $s$, which belongs to $\mathsf{TC}^1$~\cite{gibbons1989efficient}. 
(3) Arithmetic expression evaluation: given a formula consisting of $+,\times,-,/$ operations on integers, the task is to evaluate it. This problem is $\mathsf{TC}^0$-reducible to Boolean formula evaluation~\citep{feng2023towards}, which is $\mathsf{NC}^1$-complete~\citep{buss1987boolean}. 
(4) Edit distance: given two strings $x$ and $y$, the task is to compute the minimum cost to transform $x$ into $y$. By reducing the dynamic programming formulation to shortest paths, this problem is in $\mathsf{TC}^1$~\citep{apostolico1990efficient}.


\paragraph{Approximate counting tasks.}
We consider DNF counting and uniform sampling of graph colorings, both of which admit fully polynomial randomized approximation schemes for counting and sampling (FPRAS and FPAUS).
Specifically, DNF counting admits an FPRAS via Monte Carlo sampling~\citep{karp1989monte}, while approximate counting and sampling of graph colorings admit an FPAUS based on rapidly mixing Markov chain Monte Carlo under suitable degree and color constraints~\citep{jerrum1995very}.

\paragraph{Training strategy.}
Since our primary objective is to study expressive power, we allow flexibility in optimization and training strategies. For CoT models, training is performed with supervision from explicit sequential algorithms. 
For fewer CoT steps, we compare two strategies: uniformly selecting steps from the indices of the complete trajectory~\cite{bavandpour2025lower}, and stepwise internalization (distillation) methods~\cite{deng2024explicit}. 
For latent thought, we observe that looped TF is easier to train than Coconut, and therefore adopt looped TF as our instantiation of latent thought, with curriculum learning applied to certain tasks. 

\begin{table*}[t]
  \caption{Accuracy (\%) of CoT and looped TF on parallelizable tasks across different numbers of iterations. Here, $n$ denotes the problem size. For CoT, we report the best accuracy achieved across the two training strategies.}
  \label{tab:cotvsloop}
  \centering
  \begin{tabular}{lc|*{4}{c}*{4}{c}}
    \toprule
    \multirow{2}{*}{\textbf{Task}} & \multirow{2}{*}{\(\boldsymbol{n}\)} &
    \multicolumn{4}{c}{\textbf{Looped Transformer}} &
    \multicolumn{4}{c}{\textbf{Chain of Thought}} \\
    \cmidrule(lr){3-6}
    \cmidrule(lr){7-10}
     & & {2} & {4} & {6} & {8}
       & {8} & {16} & {32} & {64} \\ 
    \midrule
    Word Problem  & 64
      & 0.8 & 0.8 & \textbf{100.0} & \textbf{100.0}
      & 0.8 & 0.8 & \textbf{100.0} & \textbf{100.0} \\
    Graph Connectivity & 32
      & 80.8 & 95.8 & \textbf{99.0} & \textbf{99.0}
      & 81.0 & 81.4 & 88.2 & \textbf{100.0} \\
    Arithmetic Evaluation & 32/16
      & 43.7 & 99.4 & 99.5 & \textbf{99.7} 
      & 47.3 & 47.6 & 48.2 & \textbf{82.5} \\
    Edit Distance & 32/16
      & 57.3 & 72.9 & 86.2 & \textbf{90.7}
      & 76.5 & 80.9 & 87.5 & \textbf{94.8} \\
    \bottomrule
  \end{tabular}
\end{table*}
\if0
\begin{table*}[t]
  \caption{Results for CoT and latent thought (looped TF and continuous thought) on parallelizable tasks as a function of the number of iterations.  Latent thought solves these tasks with fewer iterations, whereas CoT requires substantially more steps.}
  \label{tab:cotvsloop}
  \centering
  \begin{tabular}{lc|*{4}{c}*{4}{c}*{4}{c}}
    \toprule
    \multirow{2}{*}{\textbf{Task}} & \multirow{2}{*}{\(\boldsymbol{n}\)} &
    \multicolumn{3}{c}{\textbf{Looped Transformer}} &
    \multicolumn{3}{c}{\textbf{Continuous Thought}} &
    \multicolumn{3}{c}{\textbf{Chain of Thought}} \\
    \cmidrule(lr){3-5}
    \cmidrule(lr){6-8}
    \cmidrule(lr){9-11}
     & & {4 (iters)} & {6} & {8}
     & {4} & {6} & {8}
       & {8} & {16} & {32} \\ 
    \midrule
    Word Problem  & 64
      & 0.8 & \textbf{100.0} & \textbf{100.0}
      & \textcolor{red}{00.0} & \textcolor{red}{00.0} & \textcolor{red}{00.0}
      & 0.8 & 0.8 & \textbf{100.0} \\
    Graph Connectivity & 32
      & 95.8 & \textbf{99.0} & \textbf{99.0}
      & \textcolor{red}{00.0} & \textcolor{red}{00.0} & \textcolor{red}{00.0}
      & 81.0 & 81.4 & \textbf{83.6} \\
    Arithmetic Evaluation & 32/16
      & 99.4 & 99.5 & \textbf{99.7} 
      & \textcolor{red}{00.0} & \textcolor{red}{00.0} & \textcolor{red}{00.0}
      & 41.0 & 41.5 & 41.2 \\
    Edit Distance & 32/16
      & 72.9 & 86.2 & \textbf{90.7}
      & \textcolor{red}{00.0} & \textcolor{red}{00.0} & \textcolor{red}{00.0}
      & 69.2 & 70.3 & \textbf{82.6}  \\
    \midrule
  \end{tabular}
\end{table*}
\fi
\subsection{Results}
\Cref{tab:cotvsloop} reports results on parallelizable tasks, comparing latent thought and CoT under varying numbers of iterations.
Latent thought solves the problems with fewer iterations than CoT requires to reach comparable performance.
These empirical results are consistent with our theoretical analysis: latent thought supports efficient parallel reasoning, in contrast to the inherently sequential nature of CoT.
\begin{figure}[t]
  \centering
  \includegraphics[width=0.9\linewidth]{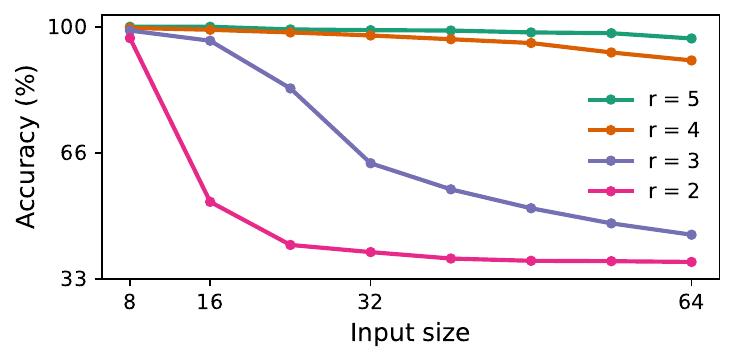}
  \includegraphics[width=0.9\linewidth]{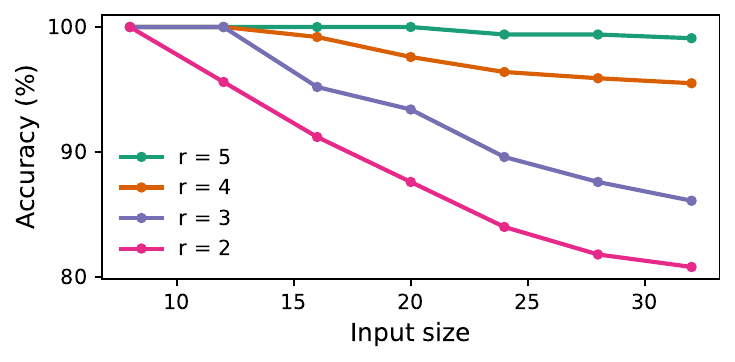}
  \caption{Accuracy of looped TFs on the arithmetic evaluation (top) and the connectivity (bottom). Each curve shows the performance for a fixed loop count $r$ as the input size $n$ increases.}
  \label{fig:polylog}
\end{figure}
We also evaluate the relationship between performance, input size, and the number of iterations, as in prior studies~\cite{sanford2024transformers, merrill2025a}. Figure~\ref{fig:polylog} presents our results for looped TFs, illustrating that as the input size $n$ increases, the number of loops required to maintain high accuracy grows only logarithmically, supporting our theoretical claim in the (poly-)logarithmic regime. 
%

\begin{figure}[t]
  \centering
  \includegraphics[width=0.49\linewidth]{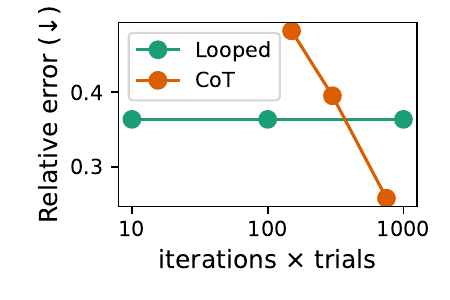}
  \includegraphics[width=0.49\linewidth]{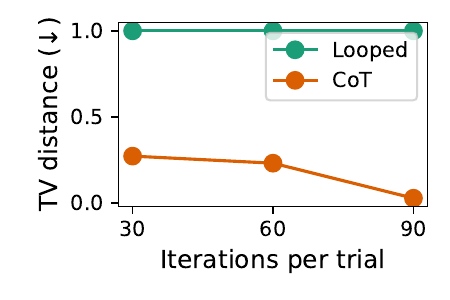} 
  \includegraphics[width=\linewidth]{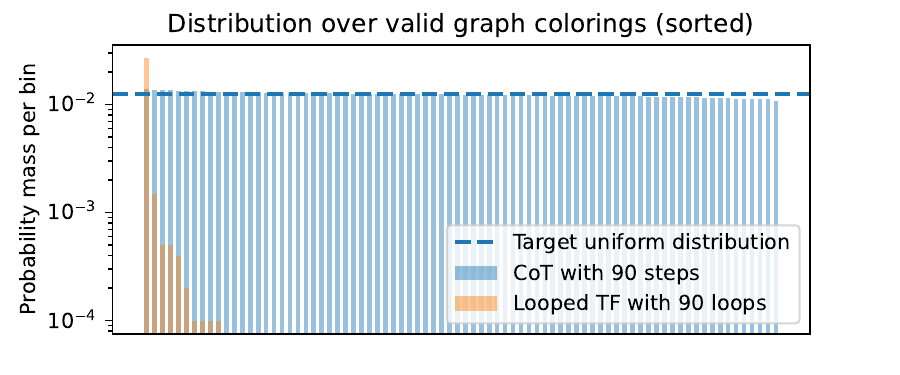}
    \caption{Top: Relative error for counting (left) and TV distance to uniform over valid colorings for sampling (right). Bottom: Empirical distributions for approximate sampling of graph colorings.}
    \label{fig:dnf}
\end{figure}
Figure~\ref{fig:dnf} shows the results on the approximate counting or sampling tasks.
For approximate counting, CoT performs Monte Carlo estimation: the effective number of samples is given by the product of the number of reasoning steps per trial and the number of independent trials. 
The probability mass plot illustrates how the empirical distribution over valid colorings compares to the target uniform distribution. We observe that CoT produces a distribution that is closer to uniform, whereas the looped model concentrates probability mass on a smaller subset of solutions. This indicates that CoT achieves more uniform coverage of the valid colorings.
%

\section{Conclusion}\label{sec:con}
We formally analyze the computational capabilities of chain-of-thought and latent thought reasoning,
providing a rigorous comparison that reveals their respective strengths and limitations. Specifically, we show that latent thought enables efficient parallel computation, whereas CoT enables randomized approximate counting.
Our results provide practical guidance for selecting between reasoning paradigms: latent reasoning is more suitable for problems that can be solved efficiently, whereas CoT is more effective for more complex problems.
For future work, an important direction is to investigate whether techniques such as distillation can reduce the number of iterations without compromising computational power. Another promising avenue is to extend our analysis to diffusion language models, which possess both parallelizability and stochasticity. Moreover, extending the analysis to realistic downstream tasks remains an important direction.

\section*{Impact Statement}
This paper presents work whose goal is to advance the field of Machine Learning. There are many potential societal consequences of our work, none which we feel must be specifically highlighted here.

\bibliography{icml2026}
\bibliographystyle{icml2026}

\newpage
\appendix
\onecolumn

\section{Formal Definitions}\label{app:tf}

\subsection{Notation}
Vectors are written in lowercase bold letters (e.g., $\vx$) and matrices in uppercase bold letters (e.g., $\mW$). 
The $i$-th entry of a vector $\vx$ is $\vx_i$, the vector from the $i$-th to the $j$-th entry is denoted by $\vx_{i:j}$, and the $(i,j)$-th entry of a matrix $\mW$ is $\mW_{i,j}$. 
We use the symbol $*$ to denote a ``don't care'' value (or block of values).
For $n \in \mathbb{N}^+$, let $[n] \coloneq \{1,2,\ldots,n\}$. 
We sometimes write column vectors horizontally, 
e.g., $\vx = (x_1, \ldots, x_n)$, for brevity.
The Hadamard (element-wise) product is $\odot$.  
$\ve_i \in \{0,1\}^d$ is the $i$-th standard basis vector, 
$\vone_d \in \mathbb{R}^d$ (or $1_d$) the all-ones vector, 
and $\vzero_d \in \mathbb{R}^d$ the zero vector. 
$\mI_d \in \mathbb{R}^{d \times d}$ denotes the $d \times d$ identity matrix, 
and $\mathbf{0}_{m \times n} \in \mathbb{R}^{m \times n}$ the $m \times n$ zero matrix. 
The indicator function is $\vone[\cdot]$, and $\bigoplus$ denotes block-diagonal concatenation.
Functions on scalars or vectors are written in upright letters (e.g., $\mathrm{FFN}$), 
while functions on matrices are boldface (e.g., $\mathbf{ATTN}$). 
Boldface is also used when scalar- or vector-level functions are extended to sequence level and applied independently to each token (e.g., $\mathbf{FFN}$).  
Finally, $\poly(n)$ denotes the set of functions growing at most polynomially:
\(
\poly(n) \coloneq \left\{ f : \mathbb{N} \to \mathbb{N} \;\middle|\; \exists k \in \mathbb{N},\; \exists c > 0,\; \forall n \in \mathbb{N},\; f(n) \le c \cdot n^k \right\}.
\)

\subsection{Transformer Block}
We define the computational components of a Transformer block using the notation of \cite{merrill2025a}. Let \( \mathbb{F}_{s} \) denote the set of \( s \)-bit floating-point numbers with truncated arithmetic (\Cref{def:floating-point}).
\begin{definition}[Transformer]\label{def:tf}
A Transformer consists of the following components:
\begin{enumerate}[leftmargin=*, nosep]
    \item A word embedding function $\mathrm{WE}: \gV \to \mathbb{F}_{s}^m$, where \(\gV\) denotes the vocabulary set.
    \item A positional embedding function $\mathrm{PE} : \mathbb{N} \to \mathbb{F}_{s}^m$.
    \item A multi-head self-attention layer $\mathbf{SA}: \mathbb{F}_{s}^{m \times N} \to \mathbb{F}_{s}^{m \times N}$ for arbitrary sequence length \( N \), parameterized by a matrix $\mO: \mathbb{F}_{s}^{s\times H} \to \mathbb{F}_{s}^m$ and, for each head $h \in [H]$ with head size $s$, matrices $\Query_h, \Key_h, \Val_h: \mathbb{F}_{s}^m \to \mathbb{F}_{s}^{s}$. Given an input $\vx_i \in \mathbb{F}_{s}^m$ for each position $i \in [N]$, it computes the query $\mathbf{q}_{i,h} = \Query_h \vx_i$, key $\key_{i,h} = \Key_h \vx_i$, and value $\mathbf{v}_{i,h} = \Val_h \vx_i$, and outputs \(\mO \cdot (\va_{i,1}, \ldots, \va_{i,H}),\)
    where each attention output $\va_{i,h}$ is defined, for softmax function, as:
    \begin{equation}
        \va_{i,h} = 
        \sum_{j=1}^{c(i)} \frac{\exp(\mathbf{q}_{i,h}^\top \key_{j,h})}{Z_{i,h}} \cdot \mathbf{v}_{j,h}, \quad
        Z_{i,h} = \sum_{j=1}^{c(i)} \exp(\mathbf{q}_{i,h}^\top \key_{j,h}),
    \end{equation}
    with $c(i) = i$ for causal attention and $c(i) = N$ for full attention. For the saturated hardmax attention~\cite{merrill2022saturated}, each attention output $\va_{i,h}$ is defined as:
    \begin{equation}
        \va_{i,h} =
        \sum_{j \in M_{i,h}}
        \frac{1}{|M_{i,h}|}
        \,\mathbf{v}_{j,h},
        \quad
        M_{i,h}
        =
        \left\{
        j \in [c(i)]
        \;\middle|\;
        \mathbf{q}_{i,h}^\top \key_{j,h}
        =
        \max_{j'} \mathbf{q}_{i,h}^\top \key_{j',h}
        \right\}.
    \end{equation}
    \item A feedforward layer $\FF : \mathbb{F}_{s}^m \to \mathbb{F}_{s}^m$ with parameter $\mW_1: \mathbb{F}_{s}^m \to \mathbb{F}_{s}^w$, $\mW_2: \mathbb{F}_{s}^w \to \mathbb{F}_{s}^m$, and $\vb\in\mathbb{F}_{s}^m$, where $w$ is the hidden dimension. Given an input $\vx_i \in \mathbb{F}_{s}^m$, it outputs $\mW_2 \ReLU(\mW_1 \vx_i + \vb)$, where $\ReLU(\vx) = (\max\{0, \vx_1\}, \ldots, \max\{0, \vx_m\})^\top$.
    \item An output function $\mathbf{OUT} : \mathbb{F}_{s}^m \to \mathbb{F}_{s}^{|\gV|}$, parameterized as a linear transformation.
\end{enumerate}
\end{definition}

\subsection{Chain of Thought}
\begin{definition}[CoT]
Let the Transformer be defined as the composition:
\begin{equation}
\mathrm{TF}_{\mathrm{dec}} \coloneq \mathbf{OUT}\circ(\id + \mathbf{FF}_L)\circ(\id + \mathbf{SA}_L)\circ\cdots\circ(\id + \mathbf{FF}_1)\circ(\id + \mathbf{SA}_1)\circ(\mathbf{WE} + \mathbf{PE}),
\end{equation}
where \(\mathbf{SA}_\ell\) and \(\mathbf{FF}_\ell\) denote the causal attention and the feedforward layers at depth \(\ell \in [L]\), respectively, and \(\id\) denotes the identity function. The input tokens are first embedded via the word embedding function \(\mathbf{WE}\) and the positional encoding \(\mathbf{PE}\), and the final output is produced by a linear projection \(\mathbf{OUT}\).
Given an input sequence \( x = (x_1, \dots, x_n) \in \gV^n \), we define the initial sequence as: \(f_{\mathrm{cot}}^0(x) \coloneq x.\)
Then, the \emph{CoT} computes recursively as:
\begin{equation}
f_{\mathrm{cot}}^{k+1}(x) \coloneq f_{\mathrm{cot}}^k(x) \cdot \mathrm{Dec} \,(\mathrm{TF}_{\mathrm{dec}}(f_{\mathrm{cot}}^k(x))),
\end{equation}
where $\cdot$ denotes concatenation, and $\mathrm{Dec}(\cdot)$ is a decoding function that maps the output logits to a token in $\gV$: in the \emph{deterministic} model, \(\mathrm{Dec}(z) \coloneq \arg\max_{i \in [|\gV|]} z_i\); in the \emph{stochastic} model, \(\mathrm{Dec}(z) \sim \mathrm{Multinomial}( {z_i}/{\sum_j z_j} )\), assuming \( z_i > 0 \) for all \( i \).
The final output of the CoT model after \( T(n) \) steps is defined as the last output length \( m \) tokens of \( f_{\mathrm{cot}}^{T(n)}(x) \).
\end{definition}

\subsection{Continuous Thought}
\begin{definition}[Coconut]
Let the Transformer block be defined as the composition:
\begin{equation}
\mathrm{TF}_{\mathrm{dec}} \coloneq (\id + \mathbf{FF}_L)\circ(\id + \mathbf{SA}_L)\circ\cdots\circ(\id + \mathbf{FF}_1)\circ(\id + \mathbf{SA}_1),
\end{equation}
where \(\mathbf{SA}_\ell\) and \(\mathbf{FF}_\ell\) denote the causal attention and the feedforward layers at depth \(\ell \in [L]\), respectively, and \(\id\) denotes the identity function.
Given an input sequence \( x = (x_1, \dots, x_n) \in \gV^n \), we define the initial sequence as: \(f_{\mathrm{cot}}^0(x) \coloneq \mathbf{WE}(x) + \mathbf{PE}([n]),\) where the input tokens are first embedded via the word embedding function \(\mathbf{WE}\) and the positional encoding \(\mathbf{PE}\)
Then, the \emph{continuous thought (Coconut)} computes recursively as:
\begin{equation}
f_{\mathrm{ct}}^{k+1}(x) \coloneq f_{\mathrm{ct}}^k(x) \cdot (\mathrm{TF}_{\mathrm{dec}}(f_{\mathrm{cot}}^k(x) + \mathbf{PE}(k))),
\end{equation}
where $\cdot$ denotes concatenation.
The final output of the model after \( T(n) \) steps is defined as the last output length \( m \) tokens of 
\(
\mathrm{Dec}\,\left(\mathbf{OUT}\,\left(f_{\mathrm{ct}}^{T(n)}(x)) \right)\right),
\)
where the final output is produced by a linear projection \(\mathbf{OUT}\) and $\mathrm{Dec}(\cdot)$ is a decoding function that maps the output logits to a token in $\gV$: in the \emph{deterministic} model, \(\mathrm{Dec}(z) \coloneq \arg\max_{i \in [|\gV|]} z_i\); in the \emph{stochastic} model, \(\mathrm{Dec}(z) \sim \mathrm{Multinomial}( {z_i}/{\sum_j z_j} )\), assuming \( z_i > 0 \) for all \( i \).
\end{definition}

\subsection{Looped Transformer}
\begin{definition}[Looped TF]
Let the Transformer block be defined as the composition:
\begin{equation}
\mathrm{TF} \coloneq (\id + \mathbf{FF}_L) \circ (\id + \mathbf{SA}_L) \circ \cdots \circ (\id + \mathbf{FF}_1) \circ (\id + \mathbf{SA}_1),
\end{equation}
where $\mathbf{SA}_\ell$ and $\mathbf{FF}_\ell$ denote the (non-causal) self-attention and feedforward layers at depth $\ell \in [L]$.

Given an input token sequence \( x = (x_1, \dots, x_n) \in \gV^n \), the initial hidden state is:
\(
f_{\mathrm{loop}}^0(x) \coloneq \mathbf{WE}(x).
\)
At each loop iteration \( k \), the hidden state is updated by:
\begin{equation}
f_{\mathrm{loop}}^{k+1}(x) \coloneq \mathrm{TF}\left(f_{\mathrm{loop}}^k(x)\right). 
\end{equation}
%
The final outputs after \(T(n)\) loop iterations are decoded as \(\mathrm{Dec} \circ \mathbf{OUT} \circ f_{\mathrm{loop}}^{T(n)}(x),\) and the model's prediction is defined as the last output length \(m \le n\) tokens of this projected sequence.
\end{definition}

\begin{figure*}[t]
  \centering
  \includegraphics[width=\linewidth]{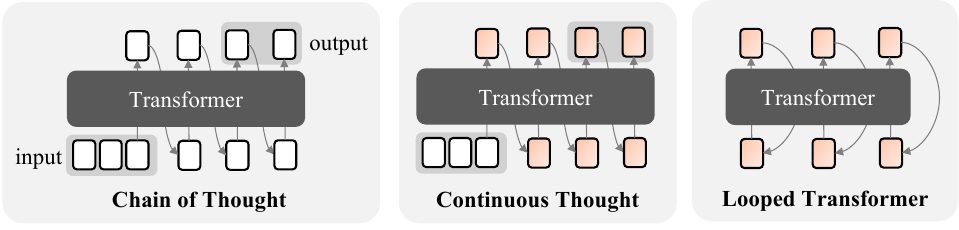}
    \caption{The models of reasoning paradigms based on iterative use of Transformer models.}
  \label{fig:models}
\end{figure*}




\section{Deferred Proofs for Section~\ref{sec:det}}\label{app:det}

\subsection{Precision Modeling}
We focus on signed floating-point numbers, following \cite{li2024chain}, but omit exponents for simplicity.
\begin{definition}[Floating-point Representation, cf.~\citep{li2024chain}]\label{def:floating-point}
Consider floating-point numbers with a mantissa part of $s$ bits and a sign bit of 1, 
totaling $(s+1)$ bits.  
We denote the set of such floating-point numbers by $\mathbb{F}_{s}$, and define
\(
B_{s} \triangleq \max \mathbb{F}_{s}.
\)
\end{definition}
\begin{definition}[Correct Rounding, cf.~\citep{li2024chain}]\label{def:correct-rounding}
For any $x \in \mathbb{R}$ and any closed subset $\mathbb{F} \subset \mathbb{R}$ containing $0$, 
we define the \emph{correct rounding} $\operatorname{round}(x, \mathbb{F})$ as the number in $\mathbb{F}$ closest to $x$.  
In particular, rounding to a floating-point number with mantissa part $s$ bits is denoted by $[\cdot]_{s}$.
Rounding applied to vectors is to be operated coordinate-wise.
\end{definition}

They also define primitive arithmetic under finite precision 
by applying rounding after each basic operation. 
In particular, for multi-operand operations, rounding is applied after each binary operation.
Finite-precision summation over more than two numbers is thus defined as follows.
\begin{definition}[Summation with Iterative Rounding~\citep{li2024chain}]\label{def:iter-round-sum}
For any \(s,n\in\mathbb{N}^+\) and vector \(\vx\in\mathbb{R}^n\), 
define the \emph{summation with iterative rounding to \(s\)-bit precision}
\begin{equation}
\mathrm{sum}_{s}:\ \bigcup_{n\in\mathbb{N}^+}(\mathbb{F}_{s})^n \to \mathbb{F}_{s},
\end{equation}
where, for any \(n\in\mathbb{N}^+\) and \(\vx=(x_1,\dots,x_n)\in\mathbb{R}^n\),
\begin{equation}
\mathrm{sum}_{s}(x)
\coloneqq
\Bigl[\;\Bigl[\;\cdots\Bigl[\,[x_1+x_2]_{s}+x_3\Bigr]_{s}
+\cdots+x_{n-1}\Bigr]_{s}+x_n\;\Bigr]_{s}.
\end{equation}
\end{definition}
Based on this definition, all computations in the Transformer block of~\Cref{def:tf} 
can be represented in finite precision. The inner product and the matrix product are defined as
\begin{equation}
    \vx^\top \vy \coloneqq \operatorname{sum}_{s}(\vx \odot \vy), 
    \qquad 
    (\mA \mB)_{i,j} \coloneqq \mA_{i,:}^\top \mB_{:,j}.
\end{equation}
Throughout this section, we interpret all operations as finite-precision computations as defined above.

\subsection{Definition of Assumption~\ref{ass:0}}
Our definition of polynomially efficient approximation follows that of~\citet{feng2023towards}, but differs in scope: while their framework targets real-valued functions, ours applies to symbolic functions.
\begin{definition}[Polynomially-efficient approximation]\label{def:eff}
We say that a function $f_n : \Sigma^{\ell(n)} \to \Sigma$ admits a
\emph{polynomially efficient approximation} if, for a sufficiently small error tolerance
$0 < \delta \le \tfrac{1}{3}$, there exists a feedforward network
\(
\FF : \mathbb{F}_{s(n)}^{\ell(n)\cdot |\Sigma|} \to \mathbb{F}_{s(n)}^{|\Sigma|},\ s(n) = O(\log n),
\)
such that the following holds: for every input \( \vx=(x_1,\ldots,x_{\ell(n)}) \in \Sigma^{\ell(n)} \),
\begin{equation}
\FF(\boldsymbol{e}(x_1),\ldots,\boldsymbol{e}(x_{\ell(n)}))_{i} \;=\;
\begin{cases}
\;\ge 1 - \delta & \text{if} \quad \boldsymbol{e}(f_n(\vx))=\ve_i, \\
\;\le \delta     & \text{else },
\end{cases}
\end{equation}
where \(\boldsymbol{e}: \Sigma \to \{0,1\}^{|\Sigma|}\) denote the one-hot encoding.
Moreover, the number of parameters of the feedforward network is bounded by a polynomial in \(\ell(n)\) and \(1/\delta\).
\end{definition}

\subsection{Technical lemmas}
In this section, we provide the key components for our constructive proofs.

\subsubsection{Orthogonal Vectors}
We follow the notation of \cite{li2024chain}.
For any positive integer \( s \in \mathbb{N}^+ \) and \( x \in \{0, 1, \dots, 2^s - 1\} \), we denote by \( \bin_s(x) \in \{0,1\}^s \) the standard binary representation of \( x \) using \( s \) bits, defined such that \(x = \sum_{i=1}^{s} 2^i \cdot (\bin_s(x))_i.\)
We further define the signed binary encoding of \( x \), denoted by \( \sbin_s(x) \in \{-1,1\}^s \), as
\(\sbin_s(x) = 2 \cdot \bin_s(x) - (1,\ldots,1).\)
Let \( \vx, \vy \in \mathbb{R}^s \) be two vectors of the same length. We define their interleaving, denoted by \( \interleave{\vx}{\vy} \in \mathbb{R}^{2s} \), as follows: $(\interleave{\vx}{\vy})_{2i-1} = x_i, (\interleave{\vx}{\vy})_{2i} = \vy_i$ for all $i \in [s].$
The orthogonal vectors under finite-precision arithmetic can be:
\begin{lemma}
[\citealp{li2024chain}]\label{lem:attention_rounding}
For any $s\in\mathbb{N}^+$, let $\query_i = \interleave{\sbin_s(i)}{1_s}$ and $\key_i = 2^{s+1}\cdot (\interleave{\sbin_s(i)}{(-1_s)})$ for all $i\in [2^s-1]$, it holds that $\inner{\query_i}{\key_j}_s=-B_s$ if $i \neq j$ and $\inner{\query_i}{\key_j}_s=0$ if $i = j$.
Since $\bigl[\exp(-B_s)\bigr]_s \le \bigl[2^{-s-1}]_s = 0$, it follows that
$\rds{\exp(\inner{\query_i}{\key_j}_s)}=\mathbf{1}[i=j]$ for all $i,j\in [2^s-1]$.
\end{lemma}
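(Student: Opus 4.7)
The plan is to unfold the inner product using the interleaved structure, then analyze the iterative rounding case by case. At odd coordinate $2k-1$, the pointwise product $(\query_i \odot \key_j)_{2k-1} = 2^{s+1}\,\sbin_s(i)_k\,\sbin_s(j)_k$ equals $+2^{s+1}$ when the signed bits agree and $-2^{s+1}$ when they disagree. At even coordinate $2k$, the product is always $-2^{s+1}$. Consequently, the exact (infinite-precision) sum equals $2^{s+1}\bigl(\inner{\sbin_s(i)}{\sbin_s(j)} - s\bigr)$, which is $0$ when $i = j$ and $-2^{s+2}\,h(i,j)$ when $i \neq j$, where $h(i,j) \ge 1$ denotes the Hamming distance between the signed binary representations.

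For the case $i = j$, I would process the $2s$ terms in adjacent pairs $(2k-1, 2k)$. Starting from a zero partial sum, each pair contributes $+2^{s+1}$ followed by $-2^{s+1}$; the two-step update takes $0$ back to $0$ because the intermediate value $2^{s+1}$ is representable under the paper's precision model, so the subsequent addition exactly cancels it. By induction over $k = 1, \dots, s$, every partial sum is $0$ at the boundary of a pair, and the final rounded inner product is $0$.

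For the case $i \neq j$, let $k^\star$ be the first index of disagreement. By the preceding argument, the partial sum entering position $2k^\star - 1$ is $0$. The two successive additions of $-2^{s+1}$ then push the running sum to $-2^{s+2}$ in exact arithmetic, which exceeds the representable floor and rounds to $-B_s$. From here, an invariant argument shows every later pair preserves the saturation at $-B_s$: an agreeing pair advances $-B_s \to \rds{-B_s + 2^{s+1}} \to \rds{-B_s} = -B_s$ by monotonicity and saturation of correct rounding, while a disagreeing pair keeps the value at $-B_s$ by further saturation. Hence the rounded inner product equals $-B_s$.

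The main obstacle is managing the interaction between iterative rounding and the magnitude of $2^{s+1}$ relative to $B_s$; the argument relies crucially on the pair-wise interleaved layout of $\query_i$ and $\key_j$, which guarantees that each two-step update at the exact-arithmetic level is either $0$ or $-2^{s+2}$, so that correct rounding (Definition~\ref{def:correct-rounding}) commutes cleanly with the pairing rather than drifting through intermediate representable values. The concluding consequence $\rds{\exp(\inner{\query_i}{\key_j}_s)} = \mathbf{1}[i=j]$ then follows immediately from $\exp(0) = 1$ together with the stated bound $\rds{\exp(-B_s)} \le \rds{2^{-s-1}} = 0$.
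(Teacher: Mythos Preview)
Your pairing strategy is the right skeleton, but the $i=j$ case rests on a false claim: you assert that ``the intermediate value $2^{s+1}$ is representable under the paper's precision model.'' It is not. With an $s$-bit mantissa and a single sign bit, $B_s = 2^s - 1 < 2^{s+1}$. If the summands really were $\pm 2^{s+1}$, the first pair would compute $\rds{0+2^{s+1}} = B_s$ followed by $\rds{B_s - 2^{s+1}} = \rds{-(2^s+1)} = -B_s$, not $0$, and your induction would immediately break.

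The piece you are missing is that the entries of $\key_j$ (hence of $\query_i\odot\key_j$) are themselves clipped to $\pm B_s$ before the iterated sum begins: $\mathrm{sum}_s$ in Definition~\ref{def:iter-round-sum} takes arguments in $(\mathbb{F}_s)^n$, so each coordinate of the Hadamard product is already rounded, and $\rds{\pm 2^{s+1}} = \pm B_s$. With summands $\pm B_s$, the $i=j$ pair genuinely gives $0\to B_s\to 0$ and your induction goes through. For $i\neq j$ the first disagreeing pair yields $0\to -B_s\to \rds{-2B_s}=-B_s$, and thereafter an agreeing pair gives $-B_s\to 0\to -B_s$ while a disagreeing pair stays at $-B_s$; in either case the running value at each pair boundary is $-B_s$. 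Your written trace ``$-B_s \to \rds{-B_s+2^{s+1}} \to \rds{-B_s}$'' is not what the arithmetic actually does (the intermediate is $0$, not something that rounds to $-B_s$), so the appeal to ``monotonicity and saturation'' is misplaced even though the conclusion is correct. Replacing $2^{s+1}$ by $B_s$ throughout fixes both issues.
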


\subsubsection{Position Selector}
\begin{lemma}\label{lemma:pos}
For any $m \in \mathbb{N}^+$ and $\vx \in \mathbb{F}_{s}^m$ with $x_i > 0$ for all $i \in [m]$, there exists a feedforward layer \(\FF: \mathbb{F}_{s}^{2m} \to \mathbb{F}_{s}^{2m}\), for any $i \in [m]$, such that
\begin{equation}
(\id + \FF)((\vx, \ve_i)) = (\vx\odot\ve_i, \ve_i)\,.
\end{equation}
\end{lemma}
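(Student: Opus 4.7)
The plan is to construct $\FF$ explicitly as a two-layer ReLU network whose width is linear in $s$. Unpacking the residual equation, one needs $\FF((\vx,\ve_i))=(\vx\odot\ve_i-\vx,\,\vzero_s)$; the $j$-th coordinate of the first block equals $x_j((\ve_i)_j-1)$, which is $0$ when $j=i$ and $-x_j$ otherwise, so the entire task reduces to a coordinate-wise gated subtraction controlled by the indicator $(\ve_i)_j\in\{0,1\}$. The second block of $\FF$ will be set identically to zero so that the residual passes $\ve_i$ through unchanged.

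The first step is the standard ReLU gating identity: for $b\in\{0,1\}$ and $|x|\le M$,
\[
xb \;=\; \relu(x+(b-1)M) - \relu(-x+(b-1)M).
\]
Case $b=1$ gives $\relu(x)-\relu(-x)=x$, while case $b=0$ gives $\relu(x-M)-\relu(-x-M)=0$ since both arguments are nonpositive under $|x|\le M$. Combining with $-x=-\relu(x)+\relu(-x)$ yields
\[
x_j((\ve_i)_j-1) = \relu(x_j+((\ve_i)_j-1)M) - \relu(-x_j+((\ve_i)_j-1)M) - \relu(x_j) + \relu(-x_j),
\]
expressing the target value at coordinate $j$ as a fixed $\pm 1$ combination of four ReLU units whose pre-activations are linear in $(\vx,\ve_i)$.

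The second step is to bundle these gadgets into one feedforward layer. Choose hidden width $w=4s$, and specify $\mW_1,\vb$ so that the four units above are computed in parallel for every $j\in[s]$; then specify $\mW_2$ so that output row $j$ implements the $\pm 1$ combination, while rows $s+1,\dots,2s$ of $\mW_2$ are identically zero. Adding the residual yields $\vx\odot\ve_i$ in the first block and $\ve_i$ in the second, matching the target for every $i\in[m]$ (so long as $\ve_i$ is viewed as the $i$-th standard basis vector in $\mathbb{F}^s$, i.e.\ $m\le s$).

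The main obstacle is verifying correctness under finite precision. Taking $M=B_s$ is necessary to cover $|x_j|\le B_s$, but then a pre-activation such as $x_j-B_s$ may lie outside $\mathbb{F}_s$ and be rounded. The observation that saves the construction is that correct rounding pushes any out-of-range negative value toward $-B_s$, which remains nonpositive, so the subsequent $\relu$ still outputs $0$; meanwhile the pre-activations of the ungated units $\relu(\pm x_j)$ already lie in $\mathbb{F}_s$ and are preserved exactly. A short case check in the two regimes $(\ve_i)_j=1$ and $(\ve_i)_j=0$ then confirms the claimed identity coordinate by coordinate, completing the proof.
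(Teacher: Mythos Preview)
Your construction works and takes a genuinely different route from the paper's. The paper uses a leaner single-unit-per-coordinate trick: with $\mW_1=[\,\mI_s\ \ {-B_s}\mI_s\,]$ and $\vb=\vzero$, the $j$-th pre-activation is $x_j-B_s(\ve_i)_j$, whose $\ReLU$ equals $0$ at $j=i$ and $x_j$ elsewhere, so $\vh=\ReLU(\mW_1\vz)=\vx\odot(\vone-\ve_i)$; then $\mW_2$ simply places $-\vh$ into the first block and zeros the second, and the residual yields $(\vx\odot\ve_i,\ve_i)$. This needs hidden width only $s$ rather than $4s$, and no rounding analysis is required because every pre-activation already lies in $[-B_s,B_s]$. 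The trade-off is that the step $\ReLU(x_j)=x_j$ silently assumes $x_j\ge 0$; this holds in every invocation of the lemma in the paper (the first block is always a concatenation of one-hot vectors), but it is not part of the stated hypothesis. Your four-unit gate handles arbitrary sign and therefore proves the lemma as literally written, at the cost of width. One small point to tighten in your precision argument: under the paper's iterative-rounding inner product, realizing $x_j+((\ve_i)_j-1)M$ via the natural weights and bias computes the partial sum $[x_j+M(\ve_i)_j]_s$ before adding $-M$, which saturates to $B_s$ when $(\ve_i)_j=1$ and $x_j>0$ and then yields $0$ instead of $x_j$; you can avoid this by absorbing the constant $-M$ into the one-hot (weight $-M$ on positions $s{+}k$ for $k\neq j$ and weight $0$ on $s{+}j$, bias $0$), which gives the same linear form $x_j-M\sum_{k\neq j}(\ve_i)_k$ with no intermediate value leaving $[-B_s,B_s]$.
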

\begin{proof}
Let the input be $\vz = (\vx, \ve_i)\in \mathbb{F}_{s}^{2m}$.  
Set the weight $\mW_1 \in \mathbb{F}_{s}^{m \times 2m}$ and bias $\vb \in \\mathbb{F}_{s}^m$ by
\begin{equation}
\mW_1 = \bigl[\, \mI_m \;\;\; -B_s \mI_m \,\bigr], 
\qquad \vb = \vzero,
\end{equation}
to have \(\mW_1 \vz + \vb = \vx - B_s \ve_i.\)
Applying the ReLU activation coordinate-wise gives
\begin{equation}
\ReLU(\vx - B_s \ve_i)_j = 
\begin{cases}
0, & j=i, \\
x_j, & j \neq i.
\end{cases}
\end{equation}
Hence,
\begin{equation}
\vh \coloneqq \ReLU(\mW_1 \vz + \vb) = \vx \odot (\vone - \ve_i).
\end{equation}

Next, set the second linear layer $\mW_2 \in \mathbb{F}_s^{2m \times m}$ by
\begin{equation}
\mW_2 = 
\begin{bmatrix}
-\mI_m \\
\mathbf{0}_{m \times m}
\end{bmatrix}.
\end{equation}
Thus we have
\begin{equation}
\vz + \mW_2 \vh = 
\begin{bmatrix} \vx \\ \ve_i \end{bmatrix} 
+
\begin{bmatrix}
-\vh \\ \vzero
\end{bmatrix}
=
\begin{bmatrix} \vx \\ \ve_i \end{bmatrix} 
+
\begin{bmatrix}
-\vx \odot (\vone - \ve_i) \\ \vzero
\end{bmatrix}
= \begin{bmatrix} \vx \odot \ve_i \\ \ve_i \end{bmatrix}.
\end{equation}
\end{proof}

\subsubsection{Feedforward layers}


\begin{lemma}\label{lemma:ff_conc}
Let $N\in\mathbb{N}^+$.  
For each $i\in[N]$, let $f_i:\Sigma^{\ell_i}\to\Sigma$ admit polynomially-efficient approximation by \(\FF_i\).  
Then there exists a feedforward layer
\(
\FF:\ 
\mathbb{F}_s^{\sum_{i=1}^{N} \ell_i |\Sigma|} \to
\mathbb{F}_s^{N|\Sigma|}
\)
such that for every input tuple
$\vx_i=(x^{(i)}_1,\ldots,x^{(i)}_{\ell_i}) \in \Sigma^{\ell_i}$,
\begin{equation}
\FF\!\left(
\big\|_{i=1}^{N}\,(\boldsymbol{e}(x^{(i)}_1),\ldots,\boldsymbol{e}(x^{(i)}_{\ell_i}))
\right)
\;=\;
\big\|_{i=1}^{N}\,\FF_i(\vx_i),
\end{equation}
where \(\|\) denotes concatenation.
\end{lemma}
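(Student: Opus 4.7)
The plan is to realize $\FF$ as a two-layer ReLU network obtained by block-diagonal assembly of the individual approximators $\FF_i$. Writing each approximator in the standard form
\begin{equation}
\FF_i(\vz_i)=\mW_2^{(i)}\ReLU\!\bigl(\mW_1^{(i)}\vz_i+\vb^{(i)}\bigr),\qquad \vz_i\in\mathbb{F}_s^{\ell_i|\Sigma|},
\end{equation}
with hidden width $w_i$ polynomial in $\ell_i$ and $1/\delta$ (as guaranteed by Definition~\ref{def:eff}), I would set
\begin{equation}
\mW_1=\bigoplus_{i=1}^{N}\mW_1^{(i)},\qquad \mW_2=\bigoplus_{i=1}^{N}\mW_2^{(i)},\qquad \vb=\big\|_{i=1}^{N}\vb^{(i)},
\end{equation}
and define $\FF(\vx)\coloneq \mW_2\,\ReLU(\mW_1\vx+\vb)$. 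This has input width $\sum_i \ell_i|\Sigma|$, hidden width $\sum_i w_i$, and output width $N|\Sigma|$, matching the statement; if the enclosing Transformer's feedforward template requires equal input and output widths, the construction is padded with zero rows and columns without affecting correctness.

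Next I would verify the claim by a block-wise computation. For the input $\vx=\big\|_{i=1}^{N}\vz_i$ with $\vz_i=(\boldsymbol{e}(x^{(i)}_1),\ldots,\boldsymbol{e}(x^{(i)}_{\ell_i}))$, block-diagonality gives $\mW_1\vx=\big\|_i\mW_1^{(i)}\vz_i$. Adding $\vb$ and applying ReLU coordinate-wise preserve the block partition, so the hidden activation is $\vh=\big\|_i\ReLU(\mW_1^{(i)}\vz_i+\vb^{(i)})$, and applying the second block-diagonal matrix yields $\mW_2\vh=\big\|_i\mW_2^{(i)}\ReLU(\mW_1^{(i)}\vz_i+\vb^{(i)})=\big\|_i\FF_i(\vz_i)$, as required. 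The total parameter count is $\sum_i O(\mathrm{size}(\FF_i))$, which remains polynomial in the relevant parameters.

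The only point that deserves care is the interaction with the finite-precision iterative rounding of Definition~\ref{def:iter-round-sum}. Each output coordinate of $\mW_1\vx$ sitting in the $i$-th hidden block has nonzero contributions only from the $i$-th slice of $\vx$, since off-diagonal entries of $\mW_1$ are exactly $0$ and contribute $0$ at every step of $\mathrm{sum}_s$ without accumulating any rounding error; the block-local dot product therefore coincides bit-for-bit with the one computed by the isolated $\FF_i$. The identical argument applies to $\mW_2\vh$. Consequently, each block reproduces $\FF_i$ exactly and inherits its polynomially-efficient approximation guarantee verbatim. I do not anticipate a substantive obstacle: the argument is essentially block-diagonal bookkeeping, with this small finite-precision observation being the only nontrivial check.
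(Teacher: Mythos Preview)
Your proposal is correct and is essentially identical to the paper's proof: both write each $\FF_i$ in the standard two-layer form and assemble $\FF$ via the block-diagonal matrices $\mW_1=\bigoplus_i\mW_1^{(i)}$, $\mW_2=\bigoplus_i\mW_2^{(i)}$ with concatenated bias, then observe that the computation factors block-wise. Your explicit check that the zero off-diagonal entries do not perturb the iterative-rounding sums is a worthwhile addition that the paper leaves implicit, but the underlying construction is the same.
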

\begin{proof}
For each $i\in[N]$, by \Cref{def:eff}, there exist width $w_i\in\mathbb{N}$ and parameters
\begin{equation}
\mW^{(i)}_1 \in \mathbb{F}_s^{w_i \times \ell_i|\Sigma|},\quad
\mW^{(i)}_2 \in \mathbb{F}_s^{|\Sigma| \times w_i},\quad
\vb^{(i)} \in \mathbb{F}_s^{w_i}
\end{equation}
such that
\(
\FF_i(\vx_i) = \mW^{(i)}_2 \ReLU(\mW^{(i)}_1 \, \boldsymbol{e}(\vx_i) + \vb^{(i)}).
\)

Now, define block-diagonal matrices
\begin{equation}
\mW_1 \coloneqq \bigoplus_{i=1}^N \mW^{(i)}_1,
\qquad
\mW_2 \coloneqq \bigoplus_{i=1}^N \mW^{(i)}_2,
\qquad
\vb \coloneqq \bigoplus_{i=1}^N \vb^{(i)}.
\end{equation}
Then the single feedforward layer
\begin{equation}
\FF(\vx) \coloneqq \mW_2 \ReLU(\mW_1 \vx + \vb)
\end{equation}
applies each block independently to its corresponding input,
yielding exactly
\(
\FF(\vx) = \big\|_{i=1}^N \FF_i(\vx_i).
\)
\end{proof}

\begin{lemma}\label{lemma:ff}
Let \( f : \Sigma^{\ell} \to \Sigma \) be a function that admits polynomially-efficient approximation (\Cref{def:eff}). Then, there exist two feedforward layers
\begin{equation}
\FF_1 : \mathbb{F}_{s}^{(1+\ell)|\Sigma|+1} \to \mathbb{F}_{s}^{(1+\ell)|\Sigma|+1},\quad \FF_2 : \mathbb{F}_{s}^{(1+\ell)|\Sigma|+1} \to \mathbb{F}_{s}^{(1+\ell)|\Sigma|+1},
\end{equation}
such that, for every \(\vx=(x_1,\ldots,x_\ell) \in \Sigma^{\ell}\) and \(t \in \{0,1\}\),
\begin{equation}
(\id+\FF_2)\circ(\id+\FF_1)\big(\vzero,\
\boldsymbol{e}(x_1),\ldots,\boldsymbol{e}(x_\ell),\ t\big)
\;=\; (t\cdot \boldsymbol{e}(f(\vx)),\ \boldsymbol{e}(x_1),\ldots,\boldsymbol{e}(x_\ell),\ t).
\end{equation}
\end{lemma}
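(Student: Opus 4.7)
The plan is to split the work across the two layers: $\FF_1$ uses the polynomially-efficient approximation assumption to write $\boldsymbol{e}(f(\vx))$ into the (initially zero) output block, and $\FF_2$ then multiplies that block by $t$ via a ReLU-based gate. In both cases the feedforward layers are designed so that the non-output coordinates stay untouched, which is easy to arrange because of the residual structure $\id + \FF$: one simply chooses the output matrix $\mW_2$ to be nonzero only in the rows corresponding to the first block.

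For $\FF_1$, I would invoke Definition~\ref{def:eff} to obtain a polynomial-size network $\widetilde\FF(\cdot) = \mW_2 \ReLU(\mW_1 \cdot + \vb)$ whose output on $(\boldsymbol{e}(x_1), \ldots, \boldsymbol{e}(x_\ell))$ is within $\delta$ of $\boldsymbol{e}(f(\vx))$. Padding $\mW_1$ with zero columns so that it reads only from the input block, and padding $\mW_2$ with zero rows so that it writes only into the first block, embeds $\widetilde\FF$ into an $\FF_1$ that preserves the $\boldsymbol{e}(x_i)$ and $t$ coordinates and overwrites the initial zero block. For $\FF_2$, I would exploit the identity $-\ReLU(y_i - t) = (t-1)y_i$, valid whenever $y_i, t \in \{0,1\}$. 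Designing the first weight matrix of $\FF_2$ to compute, for each $i \in [|\Sigma|]$, the scalar $y_i - t$ (reading $y_i$ from the first block and $t$ from the last coordinate), applying ReLU, and choosing the second weight matrix to multiply by $-1$ and write back into the first block then yields, via the residual, $y_i + (t-1)y_i = t\cdot y_i$, which is exactly the required output.

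The main obstacle is the mismatch between the \emph{approximate} one-hot produced by Definition~\ref{def:eff} and the \emph{exact} Boolean identity required for the gate in $\FF_2$. To bridge this, I would pick $\delta < 2^{-(s+1)}$, so that the finite-precision rounding of $\mathbb{F}_s$ (the same mechanism used in Lemma~\ref{lem:attention_rounding} to force $[\exp(-B_s)]_s = 0$) collapses every entry of the $\widetilde\FF$ output to an exact $0$ or $1$ before $\FF_2$ is applied. Because $s = O(\log n)$, this choice satisfies $1/\delta = O(2^s) \in \poly(n)$, so $\FF_1$ retains polynomial size and the gate identity in the second step applies exactly at the input of $\FF_2$. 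With this precision argument in place, composing the two residual layers deterministically produces $(t \cdot \boldsymbol{e}(f(\vx)), \boldsymbol{e}(x_1), \ldots, \boldsymbol{e}(x_\ell), t)$ as claimed.
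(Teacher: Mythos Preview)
Your overall plan---$\FF_1$ writes the approximate one-hot from Definition~\ref{def:eff} into the first block and $\FF_2$ gates it by $t$---is exactly the paper's decomposition. The gap is in the bridge you propose between the approximate output of $\FF_1$ and the exact-Boolean gate in $\FF_2$.

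The identity $-\ReLU(y_i-t)=(t-1)y_i$ requires $y_i\in\{0,1\}$, and your rounding argument does not deliver this. Two issues. First, Definition~\ref{def:eff} gives only \emph{one-sided} bounds: on the correct coordinate the output is $\ge 1-\delta$ (nothing forbids $2,3,\ldots$), and on the others it is $\le\delta$ (nothing forbids $-1,-2,\ldots$). Second, there is no separate rounding step to exploit: the output of the log-precision network already lies in $\mathbb{F}_s$, produced by the iterative-rounding arithmetic of Definition~\ref{def:iter-round-sum}. Your analogy with Lemma~\ref{lem:attention_rounding} breaks because there the rounding is applied to $\exp(\cdot)$, a real number computed outside $\mathbb{F}_s$, whereas here every intermediate value is already an element of $\mathbb{F}_s$; choosing $\delta<2^{-(s+1)}$ changes nothing about what $\widetilde\FF$ actually outputs. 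Concretely, if some off-coordinate of $\vz$ equals $-1\in\mathbb{F}_s$ (consistent with $-1\le\delta$), then with $t=1$ your residual gate returns $-1-\ReLU(-2)=-1\neq 0$, and the conclusion fails.

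The paper sidesteps this by making $\FF_2$ perform the thresholding explicitly: it uses a four-branch ReLU construction whose active part is a saturating difference $\ReLU(\cdot)-\ReLU(\cdot-1)$ shifted by a large $M(t-1)$ term, which in one shot clamps the approximate $\vz$ to exact $\{0,1\}$, implements the $t$-gate (the shift kills everything when $t=0$), and subtracts $\vz$ to cancel the residual. If you want to keep your simpler gate, you would need to insert an explicit clamp of this kind before it---which is essentially the paper's mechanism moved one step earlier.
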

\begin{proof}
Let \(n \coloneqq |\Sigma|\) and \(L \coloneqq \ell n\).
Write \(\vu = (\boldsymbol{e}(x_1), \ldots, \boldsymbol{e}(x_\ell)) \in \{0,1\}^L\).
By \Cref{def:eff}, there exist \(w_f \in \mathbb{N}\), matrices \(\mW^{(f)}_1 \in \mathbb{F}_s^{w_f \times L}\), \(\mW^{(f)}_2 \in \mathbb{F}_s^{n \times w_f}\), and bias \(\vb^{(f)} \in \mathbb{F}_s^{w_f}\) such that
\begin{equation}
\FF_f(\vu) \coloneqq \mW^{(f)}_2 \, \ReLU(\mW^{(f)}_1 \vu + \vb^{(f)}),
\quad
\FF_f(\vu)_i =
\begin{cases}
\ge 1-\delta & \text{if } \ \boldsymbol{e}(f(\vx))=\ve_i,\\
\le \delta   & \text{otherwise}.
\end{cases}
\end{equation}

Set the first layer as 
\begin{equation}
\mW^{(1)}_1 =
\begin{bmatrix}
\vzero & \mW^{(f)}_1 & \mathbf{0}_{\,w_f\times 1}\\
\vzero & \mI_{L}     & \mathbf{0}_{\,L\times 1}
\end{bmatrix},\quad
\vb^{(1)} =
\begin{bmatrix}
\vb^{(f)}\\
\vzero_{L}
\end{bmatrix},\quad
\mW^{(1)}_2 =
\begin{bmatrix}
\mW^{(f)}_2 & \,\vzero \\ 
\mathbf{0}_{\,L\times w_f} & \, \vzero \\ 
\mathbf{0}_{\,1\times w_f} & \vzero
\end{bmatrix},
\end{equation}
and define \(\FF_1(\vzero, \vu,t)\coloneqq\mW^{(1)}_2\,\ReLU\big(\mW^{(1)}_1(\vzero, \vu,t)+\vb^{(1)}\big)\). Then, it holds that
\begin{align}
\FF_1(\vzero, \vu,t)
&=
\mW^{(1)}_2\,\ReLU\left(
\begin{bmatrix}
\mW^{(f)}_1 \vu + \vb^{(f)}\\
\vu
\end{bmatrix}
\right) \\
&=
\begin{bmatrix}
\mW^{(f)}_2 \ReLU(\mW^{(f)}_1 \vu + \vb^{(f)}) \\
\vzero
\end{bmatrix}\\
&=
\begin{bmatrix}
\FF_f(\vu)\\
\vzero
\end{bmatrix}.
\end{align}
Thus we have
\(
(\id+\FF_1)(\vzero, \vu,t) =
\bigl(\FF_f(\vu),\,\vu,\,t\bigr).
\)

For the second layer, choose \(\delta \le \tfrac{1}{3}\) and \(M \ge 1 \) and set
\begin{equation}
\mW^{(2)}_1 =
\begin{bmatrix}
2 \mI_n & \mathbf{0}_{n\times L} & M \vone_{n}\\
2 \mI_n & \mathbf{0}_{n\times L} & M \vone_{n}\\
\mI_n   & \mathbf{0}_{n\times L} & \mathbf{0}_{n\times 1} \\
-\mI_n  & \mathbf{0}_{n\times L} & \mathbf{0}_{n\times 1}
\end{bmatrix},\
\vb^{(2)} =
\begin{bmatrix}
-M\vone_n\\
(1-M)\vone_n\\
\vzero_n\\
\vzero_n
\end{bmatrix},\
\mW^{(2)}_2 =
\begin{bmatrix}
\ \mI_n & -\mI_n & -\mI_n & \ \mI_n\ \\
\ \mathbf{0}_{\,L\times n} & \mathbf{0}_{\,L\times n} & \mathbf{0}_{\,L\times n} & \mathbf{0}_{\,L\times n}\\
\ \mathbf{0}_{\,1\times n} & \mathbf{0}_{\,1\times n} & \mathbf{0}_{\,1\times n} & \mathbf{0}_{\,1\times n}
\end{bmatrix}.
\end{equation}
and define \(\FF_2(\vy)\coloneqq\mW^{(2)}_2\,\ReLU\big(\mW^{(2)}_1\vy+\vb^{(2)}\big)\). Then it holds that, for \(\vz \coloneqq \FF_f(\vu)\),
\begin{align}
\FF_2(\vz,\vu,t)
&= \mW^{(2)}_2 \,
\begin{bmatrix}
\ReLU\bigl(2\vz + M t \vone_n -M\vone_n\bigr)\\
\ReLU\bigl(2\vz + M t \vone_n + (1-M)\vone_n\bigr)\\
\ReLU(\vz)\\
\ReLU(-\vz)
\end{bmatrix} \\
&= \mW^{(2)}_2 \,
\begin{bmatrix}
\ReLU\bigl(2\vz + M(t-1)\vone_n\bigr)\\
\ReLU\bigl(2\vz + 1 + M(t-1)\vone_n\bigr)\\
\vz\\
\vzero
\end{bmatrix} \\
&=
\begin{bmatrix}
\ReLU\bigl(\vz - \delta + M(t-1)\bigr)
- \ReLU\bigl(\vz - 1 + M(t-1)\bigr)
- \vz \\
\mathbf{0}\\
0
\end{bmatrix},
\end{align}
where it satisfies that
\begin{equation}
\ReLU\bigl(\vz - \delta + M(t-1)\bigr)
- \ReLU\bigl(\vz -\delta - 1 + M(t-1)\bigr) \;=\;
\begin{cases}
\;\boldsymbol{e}(f(\vx)) & \text{if} \quad t=1, \\
\;\vzero & \text{if} \quad t=0.
\end{cases}
\end{equation}

Therefore, the composition satisfies
\(
(\id+\FF_2)\circ(\id+\FF_1)\bigl(\vzero,\vu,t\bigr) = (t\cdot \boldsymbol{e}(f(\vx)), \ \vu, \ t).
\)
\end{proof}

\subsection{Proof for Theorem~\ref{thm:cot_lower}}
\begin{proof}
Let \( G_n = (V_n, E_n) \) be a computation graph, where \(\mathcal{F} = \{ f_1, f_2, \dots, f_{|\mathcal{F}|} \}\). Each node \( v \in V_n \) is labeled by a one-hot vector \( \boldsymbol{e}(v) \in \{0,1\}^{|\mathcal{F}|} \) indicating the function assigned to \( v \) from the finite set \( \mathcal{F} \). Let \( v_1, v_2, \dots, v_{|V_n|} \) denote a fixed topological ordering of \( V_n \), with inputs appearing first and outputs last. 
For each function \( f_i \in \mathcal{F} \), let
\(
C_{f_i}(n) \coloneqq \max \{\, |\mathrm{pred}(v)| : v \in V_n,\; \boldsymbol{e}(v) = \ve_i \,\}.
\)
and define
\(
C_{\mathrm{sum}}(n) \coloneqq \sum_{f \in \mathcal{F}} C_f(n),\ C_{\mathrm{max}}(n) \coloneqq \max_{f \in \mathcal{F}} C_f(n).
\)

Let the precision be $s(n) = C \cdot \lceil \log_2 n \rceil$ where \(C \in \mathbb{N}\) is a sufficiently large integer such that
\(
2^{s(n)} \;\ge\; n^C
\)
exceeds the maximum polynomial step bound under consideration. 
We denote by \(\mathrm{pred}(v_i) \in \mathbb{F}_{s(n)}^{C_{\max}(n)}\) the vector of predecessor indices of node \(v_i\); that is, if \(v_i\) has \(d \le C_{\max}(n)\) incoming edges from nodes \(v_{j_1}, \dots, v_{j_d}\), then \(\mathrm{pred}(v_i) = (j_1, \dots, j_d, \vzero)\), where zeros are used for padding so that the length is exactly \(C_{\max}(n)\).

Let the vocabulary be $\gV = \Sigma$.  
At decoding step $k$, the model has access to the concatenated sequence
\begin{equation}
(x_1, x_2, \ldots, x_n,\, y_1, y_2, \ldots, y_k) \in \Sigma^{n+k},
\end{equation}
where $x = (x_1, \ldots, x_n) \in \Sigma^n$ denotes the input, and $y_i$ is the token generated at the $i$-th CoT step.  
For each node $v_j$, let $v_j(x)$ denote its value on input $x$.  
We assume that every intermediate output satisfies
\(
y_i = v_{n+i}(x).
\)
Under this assumption, we prove by induction that the model generates the next token correctly, i.e.,
\(
y_{k+1} = v_{n+k+1}(x).
\)

\paragraph{Embedding}
The embedding at position \( i \in [n + k] \), denoted by \( \vh^{(0)}_i \in  \mathbb{F}_{s(n)}^m \), where \(m \coloneq |\Sigma| + |\mathcal{F}| + (1 + C_{\mathrm{max}}(n))s(n) + |\Sigma|C_{\mathrm{sum}}(n)\) is defined as
\begin{equation}
\vh^{(0)}_i = \left( \boldsymbol{e}(v_i(x)),\, \boldsymbol{e}(v_{i+1}),\, \sbin_{s(n)}(i),\, \boldsymbol{\mathrm{sbinpred}}_{s(n)}(v_{i+1}),\, \vzero_{|\Sigma|C_{\mathrm{sum}}(n)} \right),
\end{equation}
where \( \boldsymbol{e} \colon \Sigma \to \{0,1\}^{|\Sigma|} \) denote the one-hot encoding of the symbol and, \( \boldsymbol{\mathrm{sbinpred}}_{s(n)}(v) \in \mathbb{F}^{C_{\mathrm{max}}(n) \cdot s(n)}_{s(n)} \) encodes the binary representations of the predecessor indices:
\begin{equation}
\boldsymbol{\mathrm{sbinpred}}_{s(n)}(v_i) \coloneq \left( \sbin_{s(n)}(\mathrm{pred}(v_i)_0),\, \ldots,\, \sbin_{s(n)}(\mathrm{pred}(v_i)_{C_{\mathrm{max}}(n)}) \right).
\end{equation}

This embedding is constructed, for \(z\in\Sigma\), as
\begin{equation}
\mathbf{WE}(z) = \left( \boldsymbol{e}(z),\, \vzero \right), \quad
\mathbf{PE}(i) = \left( \vzero,\, \boldsymbol{e}(v_{i+1}),\, \sbin_{s(n)}(i),\, \boldsymbol{\mathrm{sbinpred}}_{s(n)}(v_{i+1}),\, \mathbf{0} \right).
\end{equation}

\paragraph{Attention layer}
The first attention layer consists of \( C_{\mathrm{max}(n)} \) heads. The \( h \)-th head is configured to attend to the position corresponding to the \( h \)-th predecessor.  
Specifically, for each position \( i \) and head \( h \in [C_{\mathrm{max}(n)}] \), the attention vectors are defined as:
\begin{align}
    \query_{i,h} &= \interleave{\sbin_{s(n)}(\mathrm{pred}(v_{i+1})_h)}{1_{s(n)}}, \\ 
    \key_{i,h}   &= 2^{s(n)+1} \cdot \interleave{\sbin_{s(n)}(i)}{(-1_{s(n)})}, \\ 
    \mathbf{v}_{i,h} &= \boldsymbol{e}(v_i(x)),
\end{align}
where vectors of different lengths are zero-padded to match the dimension.
By \Cref{lem:attention_rounding}, each attention head of the last position \(i=n+k\) retrieves the predecessor's value of \(v_{n+k}\)
\begin{equation}
    \va_{n+k,h} =
        \boldsymbol{e}\bigl(v_{\mathrm{pred}(v_{n+k+1})_h}(x)\bigr)
\end{equation}
With an appropriate output projection $\mO$ such that
\begin{equation}
   \mO (\va_{i,1}, \ldots, \va_{i,H}) = (\vzero ,\, \boldsymbol{e}(v_{\mathrm{pred}(v_{n+k+1})_0}(x)),\, \ldots,\,\boldsymbol{e}(v_{\mathrm{pred}(v_{n+k+1})_{C_{\mathrm{max}(n)}}}(x)),\,\vzero),
\end{equation}
the hidden state at position $n+k$ after the attention layer is given by
\begin{align}\label{eq:cot_hidden}
\vh^{(0.5)}_{n+k} = 
\Big(\boldsymbol{e}(v_{n+k}(x)),\, \boldsymbol{e}(v_{n+k+1}),\, \sbin_{s(n)}(n+k),\, \boldsymbol{\mathrm{sbinpred}}_{s(n)}(v_{n+k+1}),\, \\
\underbrace{\boldsymbol{e}(v_{\mathrm{pred}(v_{n+k+1})_0}(x)),\, \ldots,\,\boldsymbol{e}(v_{\mathrm{pred}(v_{n+k+1})_{C_{\mathrm{max}(n)}}}(x))}_{\text{updated}}\, ,\vzero_{C_{\mathrm{sum}}(n)|\Sigma|} \Big).
\end{align}
The second and third attention layers are disabled 
(i.e., all attention weights are set to zero). 

\paragraph{Feed-forward layer}
By \Cref{lemma:ff_conc}, a single feed-forward layer can approximate multiple functions by partitioning the input into blocks. The first feed-forward layer then places the arguments, gathered by attention, into the correct positions. By \Cref{lemma:pos}, where the vector $\ve_i$ therein corresponds to $\vone_{|\Sigma|}$ here,
the hidden state at the last position, denoted by \(\vh^{(1)}_{n+k}\), becomes
\begin{align}
\Big((\vh^{(0.5)}_{n+k})_{1:r}
,\big\|_{j=1}^{|\mathcal{F}|}\,\big(
\boldsymbol{e}(v_{\mathrm{pred}(v_{n+k+1})_1}(x)) \cdot 1
,\ldots,
\boldsymbol{e}(v_{\mathrm{pred}(v_{n+k+1})_{C_j(n)}}(x)) \cdot 1
\Big),
\end{align}
where \(r = |\Sigma| + |\mathcal{F}| + (1+C_{\mathrm{max}})(n) s(n)\).

By Assumption~\ref{ass:0} and \Cref{lemma:ff,lemma:ff_conc}, there exist feed-forward layers 
\(\FF_2, \FF_3 : \mathbb{F}_{s(n)}^{m} \to \mathbb{F}_{s(n)}^{m}\) 
such that, for every input tuple 
\(\vx_j \coloneq (x^{(j)}_1, \ldots, x^{(j)}_{C_j(n)}) \in \Sigma^{C_j(n)}\) 
for \(j \in [|\mathcal{F}|]\) and every \(\vt \in \{0,1\}^{|\mathcal{F}|}\), 
the composition 
\(\mathcal{FF} \coloneq (\id + \FF_3) \circ (\id + \FF_2)\) satisfies
\begin{equation}\label{eq:ff_0}
\mathcal{FF} \left(*,\, \vt,\, *,\,
\big\|_{j=1}^{|\mathcal{F}|}\,(\boldsymbol{e}(x^{(j)}_1),\ldots,\boldsymbol{e}(x^{(j)}_{C_j(n)}))
\right)
\;=\;
\left(\sum^{|\mathcal{F}|}_{j} \vt_j \cdot \boldsymbol{e}(f_j(\vx_j)), *\right),
\end{equation}
zwhere $*$ denotes an unspecified vector.
Since the second and third attention layers are disabled, after the third layer, applying the second and third feed-forward 
layers \(\FF_2, \FF_3\), the hidden state becomes
\begin{align}
    \vh^{(3)}_{n+k}
    &= \mathcal{FF}\left(\vh^{(1)}_{n+k}\right) \\
    &= \mathcal{FF}\left(*,\, \boldsymbol{e}(v_{n+k+1}),\, * ,\, 
        \bigg\|_{j=1}^{|\mathcal{F}|}\,
        \big(\boldsymbol{e}(v_{\mathrm{pred}(v_{n+k+1})_1}(x)), \ldots,
        \boldsymbol{e}(v_{\mathrm{pred}(v_{n+k+1})_{C_j(n)}}(x))\big)\right) \\
    &= \Biggl(
        \sum_{j=1}^{|\mathcal{F}|} 
        \boldsymbol{e}(v_{n+k+1})_j \cdot 
        \boldsymbol{e}\Big(
            f_j\big(\boldsymbol{e}(v_{\mathrm{pred}(v_{n+k+1})_1}(x)), \ldots,
            \boldsymbol{e}(v_{\mathrm{pred}(v_{n+k+1})_{C_j(n)}}(x))\big)
        \Big),\; * \Biggr) \\
    &= \Biggl(
        \boldsymbol{e}\Big(
            f_l\big(\boldsymbol{e}(v_{\mathrm{pred}(v_{n+k+1})_1}(x)), \ldots,
            \boldsymbol{e}(v_{\mathrm{pred}(v_{n+k+1})_{C_l(n)}}(x))\big)
        \Big),\; * \Biggr),
        \, \text{where} \quad \boldsymbol{e}(v_{n+k+1}) = \ve_l \\
    &= \Big( \boldsymbol{e}\big(v_{n+k+1}(x)\big),\; *\Big).
\end{align}

\paragraph{Output layer}
The final output is given by
\begin{equation}
\vh_{n+k}
= \mathbf{OUT}(\vh^{(3)}_{n+k}) 
= 
\begin{bmatrix}
\mI_{|\Sigma|} & \mathbf{0} 
\end{bmatrix}
\vh^{(3)}_{n+k}
= \boldsymbol{e}\!\left(v_{n+k+1}(x)\right) .
\end{equation}
The decoding function then outputs the symbol corresponding to the maximum score,  
\begin{equation}
y_{n+k+1} = \mathrm{Dec}(\vh_{n+k}) = \arg\max_{j \in [|\Sigma|]} \boldsymbol{e}(v_{n+k+1}(x))  = v_{n+k+1}(x).
\end{equation}
By induction on \( k \), the model computes the values at all nodes in topological order. The parameter size of the model is determined by the requirements of the feedforward layers, \(
O(\mathrm{ff\_param}(G_n)) \,.
\)
While the dimensions and heads of the attention layers depend on $C_{\max}(n)$, 
which is precisely what is already required for the feedforward layers to approximate the target functions. 
\end{proof}

\subsection{Proof of Theorem~\ref{thm:loop_lower} for Looped Transformer}
\begin{figure}[ht]
  \centering
  \includegraphics[width=\linewidth]{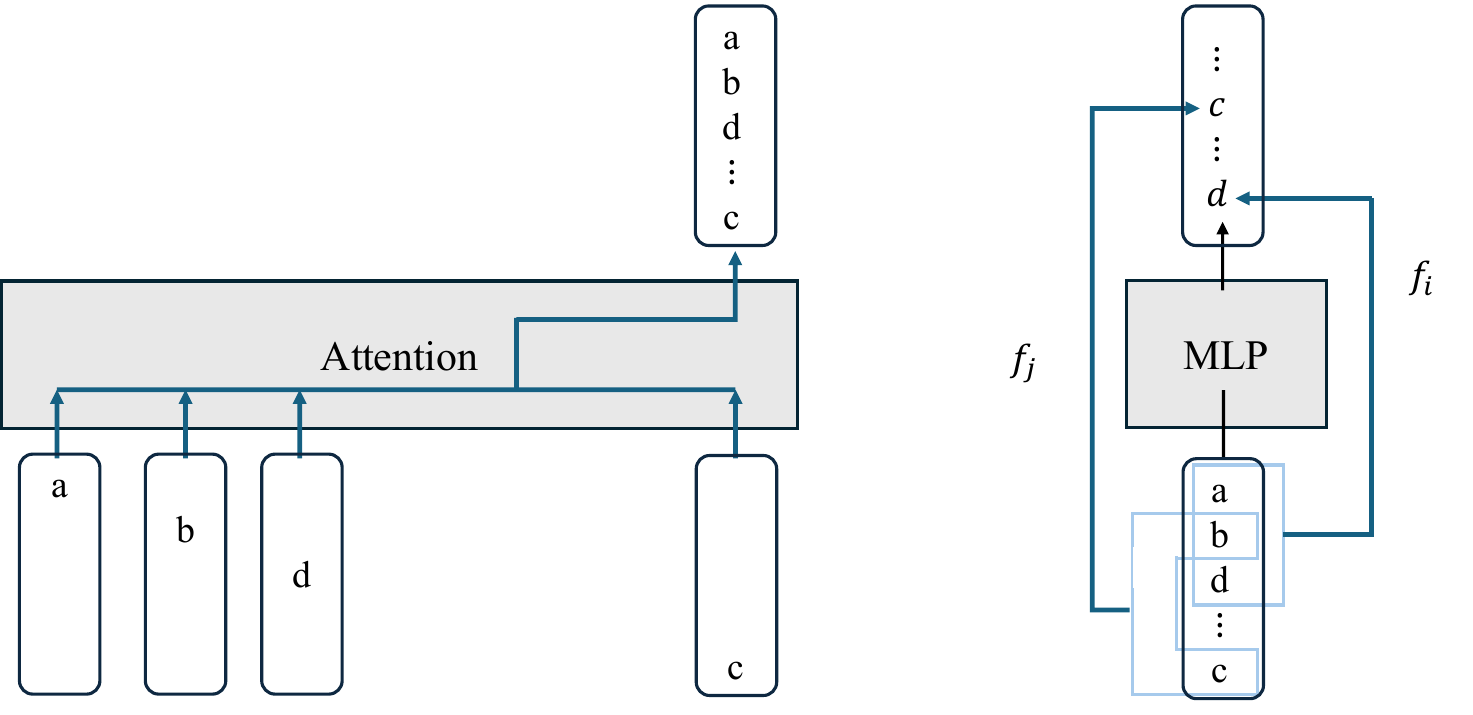}
\caption{Illustration of the role of the attention and feedforward layers in looped TFs for evaluating DAGs: the attention layer uniformly attends to and aggregates all inputs at each position, while the feedforward layer simultaneously simulates the functions of the nodes.}
\label{fig:looped_tf}
\end{figure}

\begin{proof}
In the proof, we assume that the computation graph contains at most $n$ output nodes. This assumption is without loss of generality: if the number of output nodes exceeds the number of input nodes, we can simply pad the input with dummy nodes (e.g., fixed zeros), thereby reducing the setting to the same case.

We construct a model in which (1) the attention layer aggregates the inputs, and (2) the looped feed-forward layer performs the computation of all nodes in parallel, as illustrated in~\Cref{fig:looped_tf}.
We first show that a feedforward layer followed by an attention layer can copy all input tokens to each position.
Assume, given an input sequence 
\(x = (x_1, x_2, \ldots, x_n) \in \Sigma^n\) and one-hot encoding \(\boldsymbol{e}: \Sigma\to \{0,1\}^{|\Sigma|}\).
Assume each input at position \(i \in [n]\) is embedded as
\begin{equation}
  \vh_i 
  = \bigl(\,\vzero,\; \boldsymbol{e}(x_i),\; \ve_i \,\bigr) \in \{0,1\}^{n|\Sigma|+|\Sigma|+n},
\end{equation}
where \(\ve_i\in\{0,1\}^n\).
By~\Cref{lemma:pos}, 
when substituting $\vx = (\boldsymbol{e}(x_1), \ldots, \boldsymbol{e}(x_n))$ into the lemma, there exists a feed-forward layer $\FF_1$ such that
\begin{equation}
  (\id + \FF_1)(\vh_i) 
  = \bigl(\,(\ve_i)_1\cdot\boldsymbol{e}(x_i),\; (\ve_i)_2 \cdot\boldsymbol{e}(x_i),\;\ldots,\; (\ve_i)_n \cdot\boldsymbol{e}(x_i),\;\boldsymbol{e}(x_i),\; \ve_i \,\bigr).
\end{equation}
To aggregate all positions via uniform attention, we use a single-head attention layer with:
\begin{equation}
\mathbf{q}_i = \key_i = \mathbf{1}_{n|\Sigma|}, \quad \mathbf{v}_i = n \big((\ve_i)_1\cdot\boldsymbol{e}(x_i),\; (\ve_i)_2 \cdot\boldsymbol{e}(x_i),\;\ldots,\; (\ve_i)_n \cdot\boldsymbol{e}(x_i)\big) \quad \text{for all } i \in [n],
\end{equation}
with an appropriate output projection,  
the output of the attention layer, at position $i$, becomes
\begin{align}
    \frac{1}{n} \sum^{n}_{j=1} 1 \cdot n \vh_j 
    &\;=\; 
    \Bigl(
        \sum_{j=1}^n (\ve_j)_1 \,\boldsymbol{e}(x_j),\;\;
        \sum_{j=1}^n (\ve_j)_2 \,\boldsymbol{e}(x_j),\;\;
        \ldots,\;\;
        \sum_{j=1}^n (\ve_j)_n \,\boldsymbol{e}(x_j)
    \Bigr) \\
    &\;=\;
    \bigl(\,\boldsymbol{e}(x_1),\; \boldsymbol{e}(x_2),\;\ldots,\;\boldsymbol{e}(x_n)\,\bigr).
\end{align}

Then, we show that the feed-forward layer can encode the entire computation graph into its weights and simulate all nodes simultaneously.
Let the flag vector for each node be 
\((t_1, \ldots, t_{N}) \in \{0,1\}^{N}\), 
where \(N \coloneqq |V_n| = \mathrm{size}(G_n)\).  
By~\Cref{lemma:ff_conc,lemma:ff}, there exist feed-forward layers
\(
\FF_2, \FF_3 : \mathbb{F}_{s}^{N(|\Sigma|+1)} \to \mathbb{F}_{s}^{N(|\Sigma|+1)}
\)
such that, for the input vector 
\(
(z_1, \ldots, z_{N}) \in \Sigma^{N}.
\),
\begin{align}
\mathcal{FF}
\Bigl(\,\boldsymbol{e}(z_1), t_1, \ldots, \boldsymbol{e}(z_N(x)), t_{N}\Bigr)
&=
\big\|_{i=1}^{N}\, 
\Bigl(t_i \cdot \boldsymbol{e}\bigl(f_{v_i}(\vz^{(i)})\bigr),\;
\frac{1}{m_i}\sum_{j=1}^{m_i} t_{p_{i,j}}\Bigr),
\end{align}
where \(\mathcal{FF} \coloneq (\id+\FF_3)\circ(\id+\FF_2)\). Here, 
\(f_{v_i}\) denotes the function associated with node \(v_i\), and 
\(p_{i,1}, \ldots, p_{i,m_i}\) denote the indices of the predecessor nodes of \(v_i\), and  
\(\vz^{(i)} = (z_{p_{i,1}}, \ldots, z_{p_{i,m_i}})\) denotes their values.
The last term \(\frac{1}{m_i}\sum_{j=1}^{m_i} t_{p_{i,j}}\) can be obtained using a linear layer.

For the $k$-th loop, assume by induction that the hidden state is
\begin{equation}
  \vh(k) \coloneq \bigl(  t_{1,k-1} \cdot \boldsymbol{e}(v_1(x)),\, t_{1,k},\, t_{2,k-1} \cdot \boldsymbol{e}(v_2(x)),\, t_{2,k},\, \ldots,\, t_{N,k-1} \cdot \boldsymbol{e}(v_N(x)) ,\, t_{N,k} \bigr),
\end{equation}
where $v_i(x) \in \Sigma$ denotes the value computed by node $v_i$ given the input $x$, and 
$t_{i,k} \in \{0,1\}$ indicates whether node $v_i$ lies within depth at most $k$.
Under this assumption, it holds that
\begin{align}\label{eq:loop_ffn}
  \mathcal{FF}(\vh(k)) 
  &\;=\;  
\big\|_{i=1}^{N}\, 
\Bigl(t_{i,k} \cdot \boldsymbol{e}\bigl(f_{v_i}(v_{p_{i,1}}(x), v_{p_{i,2}}(x), \ldots, v_{p_{i,m_i}}(x))\bigr),\;
\frac{1}{m_i}\sum_{j=1}^{m_i} t_{(p_{i,j}, k)}\Bigr),\\
  &\;=\;  
\big\|_{i=1}^{N}\, 
\Bigl(t_{i,k} \cdot \boldsymbol{e}\bigl(v_i(x))\bigr),\;
t_{i, k+1}\Bigr) \;=\;  \vh(k+1).
\end{align}

To extract the output node corresponding to each position denoted by $o_i$, in the final loop iteration,
by \Cref{lemma:pos}, there exists a feedforward layer $\FF_4$ such that
\begin{equation}
(\id + \FF_4)(\vh(k), \ve_{o_i})
= \bigl(t_{o_i,k} \cdot \boldsymbol{e}(v_{o_i}(x)),\; *\bigr)\,.
\end{equation}

\paragraph{Summary}  
We construct the looped model as follows.  
Each input token \(x_i \in \Sigma\) at position \(i \in [n]\) is embedded as 
\begin{equation}
  \vh^{(0)}_i 
  = \bigl(\,\ve_i,\; \ve_{o_i},\;\boldsymbol{e}(x_i), \;0,\; \boldsymbol{e}(x_i), \;0,\;\ldots,\;\boldsymbol{e}(x_i), \;0,\; \vzero_{(1+|\Sigma|)(N-n) + |\Sigma|}
  \bigr) \in \{0,1\}^{2n + (1+|\Sigma|)N + |\Sigma|} .
\end{equation}

The first attention layer is an identity map, while the first feedforward layers compute
\begin{equation}
\vh^{(1)}_i \;=\; \bigl(\,\ve_i,\; \ve_{o_i},\;(\ve_i)_1\cdot\boldsymbol{e}(x_i), \;0,\; (\ve_i)_2 \cdot\boldsymbol{e}(x_i), \;0,\;\ldots,\; (\ve_i)_n \cdot\boldsymbol{e}(x_i), \;0,\; \vzero_{(1+|\Sigma|)(N-n) + |\Sigma|} \,\bigr).
\end{equation}

The second attention layer uniformly gathers all positions and appends a constant \(1\) to each block:
\begin{align}
    \vh^{(1.5)}_i 
    &\;=\;
    \bigl(\,\ve_i,\; \ve_{o_i},\; \boldsymbol{e}(x_1), \;1,\; \boldsymbol{e}(x_2), \;1,\;\ldots,\;\boldsymbol{e}(x_n), \;1,\;
        \vzero_{(1+|\Sigma|)(N-n)+|\Sigma|}\,\bigr) \\
    &\;=\;
    \bigl(\,\ve_i,\; \ve_{o_i},\; \vh(0),\; \vzero_{|\Sigma|}\,\bigr).
\end{align}

We now proceed by induction.  
Assume that at the \(k\)-th iteration, the output after the second attention layer at position \(i\) is
\begin{equation}
    \vh^{(1.5)}_{k,i} 
    \;=\;
    \bigl(\,\ve_i,\; \ve_{o_i},\; \vh(k),\;
    t_{o_i,k-1} \cdot \boldsymbol{e}(v_{o_i}(x)) \bigr),
\end{equation}
where 
\(t_{o_i,k-1} \coloneq \bigl(\vh^{(1.5)}_{k-1,i}\bigr)_{\,2n + (1+|\Sigma|)(o_i-1)}\) 
denotes the indicator flag showing whether the \(o_i\)-th node has been reached,  
i.e., whether it lies at depth \(k-1\).

After passing through the second feedforward layer, the third attention layer with all weights set to zero, and the third feedforward layer, the hidden state updates to
\begin{equation}
    \vh^{(3)}_{k,i} 
    \;=\;
    \bigl(\,\ve_i,\; \ve_{o_i},\; \vh(k+1),\;
    t_{o_i,k-1} \cdot \boldsymbol{e}(v_{o_i}(x)) \,\bigr).
\end{equation}

After the fourth feedforward layer, the hidden state becomes
\begin{equation}
    \vh^{(4)}_{k,i} 
    \;=\;
    \bigl(\,\ve_i,\; \ve_{o_i},\; \vh(k+1),\;
    t_{o_i,k} \cdot \boldsymbol{e}(v_{o_i}(x)) \bigr).
\end{equation}

By induction, after the final iteration of depth \(\mathrm{depth}(G_n)\) of the computation graph \(G_n\), we obtain
\begin{equation}
    \vh^{(4)}_{\mathrm{depth}(G_n),i} 
    \;=\;
    \bigl(\, *,\;
    t_{o_i,\mathrm{depth}(G_n)} \cdot \boldsymbol{e}(v_{o_i}(x)) \bigr)
    \;=\;
    \bigl(\, *,\; \boldsymbol{e}(v_{o_i}(x)) \bigr).
\end{equation}

The final output is given by
\begin{equation}
z_i
= \mathbf{OUT}(\vh^{(4)}_{\mathrm{depth}(G_n),i}) 
= 
\begin{bmatrix}
\vzero & \mI_{|\Sigma|}
\end{bmatrix}
(\vh^{(4)}_{\mathrm{depth}(G_n),i})
= \boldsymbol{e}(v_{o_i}(x)).
\end{equation}

The decoding function then selects the symbol corresponding to the maximum score:
\begin{equation}
y_i \;=\; \mathrm{Dec}(z_i) 
= \arg\max_{j \in [|\Sigma|]}\, \bigl(\boldsymbol{e}(v_{o_i}(x))\bigr)_j 
= v_{o_i}(x).
\end{equation}

The parameter size grows proportionally with the input dimension $\mathrm{size}(G_n)$, 
since the function must be approximated along each dimension. 
Therefore, it can be bounded by 
\(
O\!\left(\mathrm{ff\_param}(G_n) \cdot \mathrm{size}(G_n)\right).
\)
\end{proof}

\textbf{Discussion:} 
Our proof compresses the entire input into a single position before applying an arbitrary feed-forward computation, which may appear to deviate from the standard Transformer architecture. An alternative approach is to distribute computation across multiple positions, as shown by \cite{sanford2024transformers}, which can reduce the required embedding dimension.
We nevertheless adopt the single-position construction to isolate the core characteristics of looped models: attention is used purely as an information aggregation mechanism, a single Transformer layer suffices, and all computation is carried out by the feed-forward network in latent space. This latent-space computation is strictly more expressive than computation in the language space. This is supported by recent work for looped ReLUs~\citep{liang2025looped}.

\subsection{Proof of Theorem~\ref{thm:loop_lower} for Continuous Thought}

\begin{proof}
The proofs are based on the construction for CoT and looped TF. Let the vocabulary be $\gV = \Sigma$ and each node \( v \in V_n \) is labeled by a one-hot vector \( \boldsymbol{e}(v) \in \{0,1\}^{|\mathcal{F}|} \).
At decoding step $k$, the model has access to the concatenated sequence
\begin{equation}
(x_1, x_2, \ldots, x_n,\, h_1, h_2, \ldots, h_k)
\end{equation}
where $x = (x_1, \ldots, x_n) \in \Sigma^n$ denotes the input, and $h_i \in \mathbb{F}_{s}^{m}$ is the hidden state generated at the $i$-th Coconut step. For each node $v_j$, let $v_j(x)$ denote its value on input $x$.  
We assume that
\[
  h_k \coloneq \bigl(  t_{1,k-1} \cdot \boldsymbol{e}(v_1(x)),\, t_{1,k},\, t_{2,k-1} \cdot \boldsymbol{e}(v_2(x)),\, t_{2,k},\, \ldots,\, t_{N,k-1} \cdot \boldsymbol{e}(v_N(x)) ,\, t_{N,k} ,\, \ve'_k ,\, \vzero_{|\Sigma|} \bigr),
\]
where \(N \coloneqq \mathrm{size}(G_n)\), $t_{i,k} \in \{0,1\}$ indicates whether node $v_i$ lies within depth at most $k$, and \(\boldsymbol{e}: \Sigma\to \{0,1\}^{|\Sigma|}\) denotes one-hot encoding, and the vector $\ve'_k \in \{0,1\}^N$ is defined by
\begin{equation}
\ve'_k =
\begin{cases}
\mathbf{0} & k < \mathrm{depth}(G_n), \\[4pt]
\boldsymbol{e}(v_{o_i}) & k = \mathrm{depth}(G_n) + i,
\end{cases}
\end{equation}
where $v_{o_i}$ denotes the $i$-th output node, and $\ve'_k$ can be encoded by positional embeddings.
Under this assumption, we prove by induction that the model generates \(h_{k+1}(x)\).

We first consider the base case $k=1$, in which the model receives only \(\vx\). We focus on the final token.
Using the same construction as in the looped TF, we can show that a single feed-forward layer followed by an attention layer suffices to copy all input tokens to every position. Specifically, the hidden representation at the last position becomes
\begin{equation}
  h_1 = \bigl(  \boldsymbol{e}(v_1(x)),\, 1,\,\ldots,\,  \boldsymbol{e}(v_n(x)),\, 1,\, \vzero,\ 1,\,\vzero,\, \ve'_0 ,\, \vzero \bigl),
\end{equation}
where \(v_i(x) = x_i\) for all \(i \in [n]\), corresponding to the input nodes.
In what follows, we consider the case \(k > 1\). By the same argument as for Looped TF, based on \Cref{eq:loop_ffn}, we show that the feed-forward layers can encode the entire computation graph into their weights and simulate all nodes simultaneously. That is, there exist two feed-forward layers whose composition, denoted by \(\mathcal{FF}\), satisfies
\(
  \mathcal{FF}(h_k) = h_{k+1}.
\)

Consider the last feed-forward layer applied after the zero-weight attention layer.
By substituting \(\ve_i = \ve_k'\) and \(\vx = {(h_k)}_{1:m-|\Sigma|}\) into \Cref{lemma:pos}, there exists a feed-forward network $\FF$ such that
\begin{equation}
(\id + \FF)(h_k)
=
\begin{cases}
h_{k+1},
& \text{if } k < \mathrm{depth}(G_n), \\[6pt]
\Bigl({(h_{k+1})}_{1:m-|\Sigma|},\, \boldsymbol{e}(v_{o_i}(x))\Bigr),
& \text{if } k = \mathrm{depth}(G_n) + i .
\end{cases}
\end{equation}
In particular, for all $k < \mathrm{depth}(G_n)$, the output of the last token, for the input
\(
(x_1, x_2, \ldots, x_n,\; h_1, h_2, \ldots, h_k)
\)
is \(h_{k+1}\).
For the final positions corresponding to output nodes,
namely $k = \mathrm{depth}(G_n) + i$, the hidden state $\vh_k$
contains the embedding of the output symbol $v_{o_i}(x)$.
Applying the output projection yields
\begin{equation}
\mathbf{OUT}(\vh_k)
=
\begin{bmatrix}
\mathbf{0} & \mI_{|\Sigma|}
\end{bmatrix}
\vh_k
=
\boldsymbol{e}\!\left(v_{o_i}(x)\right).
\end{equation}

Finally, the decoding function outputs the symbol in $\Sigma$
corresponding to the maximum coordinate of
$\mathbf{OUT}(\vh_k)$.
\end{proof}

\subsection{Proof for Theorem~\ref{thm:log_loop}}\label{app:log_loop}

For the upper bound 
\(
\LOOP[\log^k n,\, \poly(n),\, 1\ \textup{(resp.\ } \log n)] \subseteq \AC^k\ \textup{(resp.\ } \TC^k),
\)
we follow the argument of \cite{li2024chain}. Their key observation is that a restricted form of automaton can model iterative computation under constant precision: the rounding operation preserves monotonicity, and constant precision yields counter-free, restricted state spaces, which are therefore computable by $\AC^0$ circuits. In the case of polynomial precision, prefix summation can be simulated by $\TC^0$, which also allows the detection and correction of rounding in floating-point arithmetic.
\begin{theorem}[\citealp{li2024chain}]\label{thm:ac_tc}
For $s(n)\in \poly(n)$, 
\(
\mathrm{sum}_{s(n)} : (\mathbb{F}_{s(n)})^n \to \mathbb{F}_{s(n)}
\)
is computable by $\TC^0$ circuits, and by $\AC^0$ circuits when $s(n)$ is constant.
\end{theorem}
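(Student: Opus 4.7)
The plan is to handle the two precision regimes separately, using different characterizations of the target complexity classes.

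For the $\AC^0$ case (constant $s$), the set $\mathbb{F}_s$ has constant size, so $\mathrm{sum}_s$ is realized by a fixed deterministic finite-state transducer: starting from $r_0=0$, the update $r_i \coloneq \mathrm{round}(r_{i-1}+x_i,\mathbb{F}_s)$ is a constant-size lookup over the constant alphabet $\mathbb{F}_s$. I would show that the transition monoid $M$ of this transducer is \emph{aperiodic} (group-free) via monotonicity: for each fixed $x$, the map $r \mapsto \mathrm{round}(r+x,\mathbb{F}_s)$ is order-preserving on the finite chain $\mathbb{F}_s$, compositions of order-preserving maps are order-preserving, and every monotone self-map of a finite totally ordered set has only fixed points as its periodic points, so no element of $M$ has nontrivial period. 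Aperiodicity of $M$, combined with the classical algebraic characterization that $\mathrm{REG}\cap\AC^0$ coincides with the star-free (equivalently, aperiodic-recognizable) languages, places each preimage $\{\vx : \mathrm{sum}_s(\vx)=y\}$ in $\AC^0$; since $|\mathbb{F}_s|$ is constant, taking the union of the $y$-preimages yields the $\AC^0$ circuit computing $\mathrm{sum}_s$.

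For the $\TC^0$ case ($s(n)\in\poly(n)$), the strategy is to reduce to iterated integer addition. By the classical Hesse--Allender--Barrington theorem, iterated addition of $n$ integers each of $\poly(n)$ bits is in $\TC^0$, so I can compute in parallel all the \emph{unrounded} prefix sums $S_i = x_1+\cdots+x_i$, each of bit-length $O(s(n)+\log n)$. The iteratively rounded value $r_n$ then differs from $S_n$ only by a correction tracking the cumulative effect of saturation events, which is exactly the two-sided Skorokhod reflection of the walk $(S_i)$ at the barriers $\pm B_{s(n)}$. I would express this correction as a constant-depth arithmetic combination of prefix maxima and minima of suitably shifted $S_i$'s, each of which is threshold-computable on $\poly(n)$-bit integers and hence in $\TC^0$. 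The final output is then $r_n = \mathrm{round}(S_n - \text{correction},\,\mathbb{F}_{s(n)})$, a single constant-depth step.

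The main obstacle is encoding the rounding correction as a \emph{constant-depth} threshold formula over the $S_i$, since a naive recursive unrolling of two-sided Skorokhod reflection alternates upper and lower corrections and appears to need depth proportional to the number of alternations. I plan to address this by exploiting the special structure of iterative saturation: once the prefix sums are known, every candidate saturation event is detectable by a $\TC^0$ predicate (a boolean combination of thresholds of the form ``$S_i - S_j \gtrless \pm B_{s(n)}$''), so the difficulty is not in detecting individual events but in summing their cumulative effect. For this I would argue that the total correction admits a closed form in terms of a bounded number of prefix max/min operations on the $S_i$ (a ``one-pass forward'' plus ``one-pass backward'' reflection suffices, because any further alternation is redundant once the running amplitude is bounded by the barrier gap); verifying this closed form is the subtlest technical step, but once established, assembling it into a constant-depth threshold circuit is mechanical. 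The $\AC^0$ half is clean by comparison modulo the monotonicity-aperiodicity argument sketched above.
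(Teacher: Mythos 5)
This theorem is not proved in the paper at all; it is imported from \citet{li2024chain}, and the paper only records the key intuition (monotonicity of rounding gives a counter-free automaton for constant precision; prefix summation plus detection/correction of rounding gives $\TC^0$ for polynomial precision). Your proposal follows essentially that same outline but fills it with different machinery, and it is viable. For the constant-precision case, your argument (each map $r\mapsto\mathrm{round}(r+x,\mathbb{F}_s)$ is order-preserving on a finite chain, monotone self-maps have only fixed points as periodic points, hence the transition monoid is aperiodic, hence each preimage language is star-free and so in $\AC^0$) is correct and is precisely the ``counter-free'' observation in algebraic clothing. One inaccuracy: $\mathrm{REG}\cap\AC^0$ does \emph{not} coincide with the star-free languages (non-uniform $\AC^0$ contains, e.g., length-modularity languages; the correct characterization involves quasi-aperiodicity). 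You only need the true direction aperiodic $\Rightarrow$ star-free $\Rightarrow$ $\mathrm{FO}[<]\subseteq\AC^0$, so the proof survives, but the cited ``characterization'' should be weakened to that inclusion.

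For the $\TC^0$ case, note first that your reduction to a saturated (clamped) prefix walk is legitimate only because this paper's $\mathbb{F}_s$ is an exponent-free, uniformly spaced format: sums of grid points are exact until overflow, so iterative rounding degenerates to clamping at $\pm B_{s(n)}$. (Under a genuine floating-point format with exponents, intermediate roundings are not mere saturation and this reduction would fail; the original proof in \citet{li2024chain} has to deal with the rounding correction more directly.) Granting that, the one step you defer --- a constant-depth closed form for the doubly clamped walk --- is the real crux, and your justification (``one forward pass plus one backward pass suffices'') is asserted rather than proved. The claim is nonetheless true: the two-sided reflection/saturation map admits an explicit formula of the shape $r_n = S_n - \max_{0\le j\le n}\,\min_{j\le k\le n}\,\varphi(S_j,S_k)$ for suitable affine $\varphi$ (a single max-of-min nesting, as in explicit solutions of the two-sided Skorokhod problem), and once all prefix sums $S_i$ are available via iterated addition in $\TC^0$, evaluating polynomially many pairwise comparisons and one level of max/min nesting is constant depth. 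So your plan is completable, but as written the proposal leaves its central $\TC^0$ lemma unverified; to make it a proof you must state and check that identity (including the jump case where a single increment overshoots the opposite barrier, where the clamping convention must be shown to agree with the formula).
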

It has also been shown that gates can be efficiently simulated by feedforward layers.
\begin{lemma}[\citealp{li2024chain}]\label{lemma:ff_gate}
Unbounded-fanin $\AND,\OR$ (resp. $\MAJORITY) :\{0,1\}^n\to \{0,1\}$ can be simulated by a two-layer feedforward ReLU network with constant (resp. $\log{n}$) bits of precision constant hidden dimension and additional $n$ constant inputs of value $1$.
\end{lemma}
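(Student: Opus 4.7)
The plan is to give explicit two-layer $\ReLU$ constructions for each gate class with constant hidden dimension, and then verify that the iterative summation of \Cref{def:iter-round-sum} preserves the desired Boolean output under the stated precision. The common template is: use the first layer to form a carefully ordered finite-precision sum that encodes the relevant threshold, and use the second linear layer to extract a $\{0,1\}$-valued indicator via a difference of two $\ReLU$ outputs.

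For $\OR$ I would use two hidden units. With input vector $(x_1,\ldots,x_n,c_1,\ldots,c_n)$ where each $c_j=1$, I set both rows of $\mW_1$ to weight $1$ on each $x_j$ and $0$ on each $c_j$, with biases $0$ and $-1$ respectively, so the pre-activations are $s\coloneq \mathrm{sum}_{s}(x_1,\ldots,x_n)$ and $s-1$. The output layer takes the difference, giving $\ReLU(s)-\ReLU(s-1)$. Because each $x_i\in\{0,1\}$, the partial sums form a monotone non-decreasing sequence of nonnegative integers, so $\mathrm{sum}_{s}$ either returns the true sum or saturates to $B_{s}$; in either case the distinction $s=0$ versus $s\ge 1$ is preserved, and the difference evaluates to $\mathbf{1}[s\ge 1]=\OR(\vx)$.

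For $\AND$ I exploit the $n$ constant-$1$ inputs to replace an otherwise unrepresentable bias of $-n$. I order the inputs as the interleaved sequence $(x_1,c_1,x_2,c_2,\ldots,x_n,c_n)$ with alternating weights $(1,-1,1,-1,\ldots)$ and bias $+1$, all of which are constants and hence representable. With this ordering the partial sums of $\mathrm{sum}_{s}$ track $\sum_{i\le k}(x_i-1)$, which is monotone non-increasing in $[-n,0]$ with at most a one-step excursion of $+1$ at odd indices. Hence saturation can only pin the final value at $-B_{s}$ and never produces a positive result when some $x_i=0$. The final pre-activation equals $1$ when every $x_i=1$ and is at most $0$ otherwise, so a single hidden unit followed by $\ReLU$ outputs $\AND(\vx)$, with one additional hidden unit used to pass the result through the output linear layer. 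For $\MAJORITY$ the $\log n$-bit precision regime represents every integer in $[0,n]$ exactly, so $\mathrm{sum}_{s}(x_1,\ldots,x_n)$ incurs no rounding; two hidden units compute $\ReLU(s-\lceil n/2\rceil+1)$ and $\ReLU(s-\lceil n/2\rceil)$, and the output linear layer takes their difference to realize $\mathbf{1}[s\ge \lceil n/2\rceil]$, with bias magnitudes bounded by $n$ and hence fitting in $O(\log n)$ bits.

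The main obstacle is the rigorous treatment of iterative rounding under constant precision for $\AND$ and $\OR$: one must confirm that along the chosen summation order the partial sums lie in a half-line ($[0,B_{s}]$ for $\OR$; $[-B_{s},0]$ at even steps for $\AND$), so that the correct rounding in $\mathrm{sum}_{s}$ is monotone in the accumulation direction and cannot flip the zero-versus-positive status of the final pre-activation. This reduces to an elementary induction on the step index using $x_i\in\{0,1\}$ together with the fact that once a partial sum saturates, the only further contributions are $0$ or $\pm 1$. For $\MAJORITY$ no such care is needed because the summation is exact under $\log n$-bit precision.
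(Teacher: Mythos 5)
Your proof is correct, and it is essentially the construction behind the cited result: the paper itself gives no proof of this lemma (it is imported from \citealp{li2024chain}), and your argument reproduces the standard one — an ordering-aware iterated-rounding sum that stays in a saturation-tolerant half-line (interleaving the $n$ constant-$1$ inputs with the $x_i$ so the $\AND$ accumulator only pins at $-B_{s}$ and the zero-vs-nonzero status survives), a difference of two $\ReLU$ units to cap the indicator for $\OR$/thresholding, and exact $O(\log n)$-bit summation with biases of magnitude at most $n$ for $\MAJORITY$. The only point worth making explicit is the even-step invariant $\hat p_{2k}=\max\bigl(\sum_{i\le k}(x_i-1),\,-B_{s}\bigr)$, which your induction sketch already supplies, so no gap remains.
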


\begin{proof}[Proof for \Cref{thm:log_loop}] 
$\LOOP[\log^k n,\, \poly(n),\, 1\ \textup{(resp.\ } \log n)] \subseteq \AC^k\ \textup{(resp.\ } \TC^k):$ 
In Transformers, constant-depth computation is defined by Summation with Iterative Rounding (see~\Cref{def:iter-round-sum}), which by~\Cref{thm:ac_tc} can be simulated in $\AC^0$ (resp.\ $\TC^0$). A looped TF simply stacks these computations vertically through iteration, and thus the result follows.

$ \AC^k\ \textup{(resp.\ } \TC^k) \subseteq \LOOP[\log^k n,\, \poly(n),\, 1\ \textup{(resp.\ } \log n)]:$
Since Boolean circuits are DAGs, the claim follows directly from \Cref{thm:loop_lower_circuit} together with \Cref{lemma:ff_gate}.
\end{proof}

\section{Deferred Proofs for Section~\ref{sec:approx}}\label{app:prob}

\subsection{Definition for Models of Computation}\label{app:p_models}

We model a language model as a probabilistic process that, given an input and a generated prefix, produces either an internal reasoning state or an output token. This process induces a probability distribution over final outputs. We first formalize CoT under this setting. We focus on the saturated hardmax attention as in~\cite{merrill2022saturated, nowak2024}.
\begin{definition}[Language model with CoT]
Let \( \gV \) be a vocabulary.
Given an input \( x \in \gV^* \), a language model with CoT
stochastically generates a sequence of output blocks of the form
\begin{equation}
\Big( r_1 ,\, e ,\, y_1 ,\, e' , \;
 r_2 ,\, e ,\, y_2 ,\, e', \;
\cdots \; r_m ,\, e ,\, y_m ,\, e' \Big),
\end{equation}
where each \( r_i \in \gV^* \) represents an explicit reasoning trace,
\( y_i \in \gV \) is an output token, and \( e, e' \in \gV \) are special
delimiter tokens. The final output is the string \( y_1 \cdots y_m \).
Generation proceeds autoregressively: at iteration \( i \), the model
generates a reasoning segment \( r_i \) followed by an output token \( y_i \),
conditioned on the input \( x \), the previously generated outputs
\( y_{<i} \), and prior reasoning segments \( r_{<i} \). We denote by
\( p(y \mid x) \) the resulting distribution over final outputs.
\end{definition}

Then we define the language models with latent thought reasoning: Coconut and looped TF.
\begin{definition}[Language model with Coconut]
Let \( \gV \) be a vocabulary. Given an input \( x \in \gV^* \), a language model with Coconut generates a sequence of blocks
\begin{equation}
\Big( r_1,\, y_1,\, r_2,\, y_2,\, \ldots,\, r_m,\, y_m \Big),
\end{equation}
where each \( r_i \in {\mathbb{F}^{d}}^* \) represents an internal
continuous reasoning state, and each \( y_i \in \gV \) is an output token.

Generation proceeds autoregressively. At each iteration, the internal
continuous reasoning state \( r_i \) is generated deterministically as a function of the input \( x \), the previously generated outputs
\( y_{<i} \), and the prior internal states \( r_{<i} \), while the output token \( y_i \) is generated stochastically. The decision of when to emit an output token is determined internally by the model.
\end{definition}

For looped TF models, the computation proceeds through internally repeated iterations before producing any output tokens. The number of iterations is determined by the model as a function of the input.
\begin{definition}[Language model with looped TF]
Given an input \( x \in \gV^* \), a looped TF produces
an output sequence autoregressively. At each iteration \( i \in [m] \), the
model internally performs a number of repeated transformation steps
before emitting an output token \( y_i \), conditioned on the input \( x \)
and the previously generated outputs \( y_{<i} \).
The number of internal loop iterations required to generate each output
token is determined internally by the model as a function of the input
\( x \) and the generated prefix \( y_{<i} \).
\end{definition}

We define complexity classes corresponding to language models under different reasoning paradigms. In contrast to the non-uniform models typically used in parallel computation analysis, we adopt a \textbf{uniform setting} analogous to Turing machines: a single model with a fixed set of parameters is applied to all input lengths, while the number of reasoning steps is allowed to grow as a function of the input size $n$. 
Following the convention in \cite{merrill2024the}, we allow $O(\log n)$ bits of numerical precision to represent positional embeddings and internal activations. Furthermore, we extend the output space beyond simple binary decisions. Specifically, the model's output is not restricted to $\Sigma^k$, but can be an arbitrary finite binary string in $\Sigma^*$. This extension enables the model to represent functions of the form $f : \Sigma^* \to \mathbb{R}$, thereby capturing counting, probabilistic modeling, and approximation tasks.
\begin{definition}[Complexity Classes $\mathsf{pCoT}$, $\mathsf{pCT}$, and $\mathsf{pLOOP}$]
Let $\mathsf{pCoT}[T(n)]$, $\mathsf{pCT}[T(n)]$, and $\mathsf{pLOOP}[T(n)]$ denote the classes of probabilistic computations that define an output distribution $p: \Sigma^* \times \Sigma^* \to [0, 1]$. A function $f$ belongs to such a class if there exists a language model $\mathcal{M}$, employing CoT, Coconut, or looped TF, respectively, such that for any input $x \in \Sigma^n$, the model induces the distribution $p(x, \cdot)$ within $O(T(n))$ steps using $O(\log n)$ bits of numerical precision.
\end{definition}

\subsection{Lemma: Universality of Chain of Thought}
\begin{definition} A probabilistic Turing machine \(M = (Q, \Sigma, \Gamma, q_0, q_1, \delta_1, \delta_2)\) is defined as: 
\begin{itemize}[leftmargin=*]
    \item \( Q \) is a finite set of states,
    \item \( \Sigma \) is the input/output alphabet, 
    \item \( \Gamma \) is the tape alphabet, with \( \Sigma \subseteq \Gamma \) and a distinguished blank symbol \(\sqcup \in \Gamma\),
    \item \( q_0 \in Q \) denotes the initial state, and \( q_1 \in Q \) denotes the final state,
    \item \( \delta_1, \delta_2 : Q \times \Gamma \to Q \times \Gamma \times (\Sigma \cup \{\varepsilon\}) \times \{L, S, R\} \) are two transition functions.
\end{itemize}
At each step, \(M\) chooses uniformly at random between \(\delta_1\) and \(\delta_2\).  
Each transition \(\delta_i(q, a) = (q', b, \sigma, D)\) is interpreted as: writing \(b \in \Gamma\) on the work tape, writing \(\sigma \in \Sigma\) on the output tape, where $\varepsilon$ means that nothing is written, and moving the tape head in direction \(D \in \{L, S, R\}\), where \(L\) = left, \(R\) = right, and \(S\) = stay. The output of \(M\) is defined to be the string remaining on the output tape when the machine halts.
\end{definition}
Prior work has shown that probabilistic CoT can simulate any such PTM, thereby demonstrating its universality.
\begin{lemma}[\citealp{nowak2024}]\label{lem:cot_ptm}
Let \(M\) be a PTM with input and output alphabet \(\Sigma\), and running time bounded by \(T(n)\) on inputs of length \(n\).
Then there exists a log-precision CoT model with stochastic decoding, denoted \(\mathsf{CoT}_M\), such that the induced output distribution of \(\mathsf{CoT}_M\) coincides exactly with that of \(M\).
Formally, for every input string \(x \in \Sigma^*\) and output string \(y \in \Sigma^*\),
\begin{equation}
    \Pr\bigl[\mathsf{CoT}_M(x) = y \bigr]
    \;=\;
    \Pr\bigl[ M(x) = y \bigr].
\end{equation}
Moreover, the number of reasoning steps of \(\mathsf{CoT}_M\) is bounded by \(\poly(|x|)\).
\end{lemma}
\if0
\begin{proof}
We argue by induction on the number of machine steps and construct a CoT whose token sequence mirrors the PTM's execution.
Let \(Q\) be the state set of \(M\), \(\Gamma\) the work-tape alphabet, and \(\Sigma\) the input/output alphabet.
At each step \(i\ge 0\), the machine has a configuration:
\begin{equation}
\vr_i \;\coloneq\; \bigl(q^{(i)},\, v^{(i)},\, w^{(i)},\, m^{(i)}\bigr)
\;\in\;
Q \times \Gamma \times \bigl(\Sigma\cup\{\varepsilon\}\bigr) \times \{-1,0,1\},
\end{equation}
where \(q^{(i)}\) is the current state, \(v^{(i)}\) is the symbol written on the work tape (at the head cell) during step \(i\), \(w^{(i)}\) is the symbol written to the output tape (or \(\varepsilon\) if no output), and \(m^{(i)}\in\{-1,0,1\}\) is the head move.
Let the output string be \(y=(y_1,\ldots,y_m)\in\Sigma^*\), and let $r : [m] \to \mathbb{N}$ denote the function that assigns to each output index $i$
the computation step $r(i)$ at which the machine $M$ writes $y_i$ to the output tape.

The precision of CoT is
\(
s(n) \;=\; C \cdot \bigl\lceil \log_2 n \bigr\rceil,
\)
where \(C \in \mathbb{N}\) is chosen large so that \(n^C\) exceeds the maximum polynomial step bound under consideration.  
For input \(x=(x_1,\ldots,x_n)\in\Sigma^n\), we construct the model to produce the sequence
\begin{equation}\label{eq:sequence-shape}
\Bigl(
    x,\;
    \vr_{1:r(1)},\; e,\; y_1,\; e',\; \vr_{r(1)}',\;
    \vr_{r(1)+1:r(2)},\; e,\; y_2,\; e',\; \vr_{r(2)}',\;
    \ldots,\;
    \vr_{r(m-1)+1:r(m)},\; e,\; y_m,\; e'
\Bigr),
\end{equation}
where blocks of configuration tokens \(\vr_i\) simulate the PTM step-by-step and the triple \((e, y_i, e')\) deterministically realizes the \(i\)-th output when it occurs, and \(\vr'_i\) denotes the retrieved configuration tokens corresponding to \(\vr_i\), differing only in the flag bit.
The vocabulary of the model is 
\begin{equation} 
\gV \;\coloneq\; Q \times \Gamma \times \bigl(\Sigma \cup \{\varepsilon\}\bigr) \times \{-1,0,1\} \times \{r,r'\} \;\cup\; \bigl(\Sigma \cup \{\varepsilon\}\bigr) \;\cup\; \{e, e'\}. 
\end{equation}
Here, $\{r,r'\}$ is a binary flag used to distinguish whether a configuration token is first generated or retrieved.

\paragraph{Embeddings} 
Let \(\boldsymbol{e}_Q: Q \to \{0,1\}^{|Q|}\), \(\boldsymbol{e}_\Gamma: \Gamma \to \{0,1\}^{|\Gamma|}\), and \(\boldsymbol{e}_\Sigma: \Sigma \to \{0,1\}^{|\Sigma|}\) denote one-hot encodings, and let \(\vp(\cdot) = \sbin_{s(n)}(\cdot)\) be the positional code. 
The embedding of an input token \(x_i\) is
\begin{equation} 
\bigl( \boldsymbol{e}_Q(q^{(0)}),\; \boldsymbol{e}_\Gamma(x_i),\; \vzero_{|\Sigma|},\; 0,\; \vp(i),\; \vzero \bigr).
\end{equation} 
For a configuration token \(\vr_i = (q^{(i)}, v^{(i)}, w^{(i)}, m^{(i)})\) at position \(n+j\), the embedding is
\begin{equation} 
\bigl( \boldsymbol{e}_Q(q^{(i)}),\; \boldsymbol{e}_\Gamma(v^{(i)}),\; \boldsymbol{e}_\Sigma(w^{(i)}),\; m^{(i)},\; \vp(j),\; \vzero \bigr).
\end{equation} 
For delimiters \(d \in \{e,e'\}\) at position \(n+j\), the embedding is
\begin{equation} 
\bigl( \vzero_{|Q|},\; \vzero_{|\Gamma|},\; \vzero_{|\Sigma|},\; 0,\; \vp(j),\; \vp(j-1),\; \vp(j-3)\; \vzero,\; \vone[d=e],\; 0,\; \vone[d=e'],\; \vzero_{|\gV|} \bigr).
\end{equation} 
Finally, for an output token \(y_i \in \Sigma\) at position \(n+j\), the embedding is
\begin{equation} 
\bigl( \vzero_{|Q|},\; \vzero_{|\Gamma|},\; \boldsymbol{e}_\Sigma(y_i),\; 0,\; \vp(j),\; \vzero,\; 0,\; 1,\; 0,\; \vzero_{|\gV|},\;\bigr).
\end{equation}
For positional embedding, we use the scalar value \(T(n+j)\), which is chosen to be sufficiently large so that it can later be used for attention. For simplicity, we omit this term in all components other than the attention mechanism.

\paragraph{First layer} 
For each input position \(i\), the model computes the head position of the Turing machine both at and immediately after the \(i\)-th step. This construction follows the approach of \citet{perez2021attention,merrill2024the,nowak2024}.  
The next head position is obtained by summing the head movement directions \(m^{(j)} \in \{-1,0,1\}\) up to step \(i\):  
\begin{equation}
c^{(i+1)} \;=\; m^{(0)} + m^{(1)} + \cdots + m^{(i)} .
\end{equation}
To realize this with single-head attention, we define the query, key, and value vectors at position \(j\) as
\begin{equation}
\mathbf{q}_{j} \;=\; (1), 
\quad 
\mathbf{k}_{j} \;=\; (1), 
\quad 
\mathbf{v}_{j} \;=\; (m^{(j)}).
\end{equation}
Since the attention scores are uniform across all positions, the attention layer computes the cumulative sum of head movements.  
A subsequent feed-forward layer then yields
\(
c^{(i)} \;=\; c^{(i+1)} - m^{(i)} .
\)
Thus, at token \(\vr_k\) at position \(n+i\), the representation is
\begin{equation}
\Bigl(\,\boldsymbol{e}(q^{(k)}),\; \boldsymbol{e}(v^{(k)}),\; \boldsymbol{e}(w^{(k)}),\; m^{(k)},\; \vp(i),\; c^{(k)}\; \vzero \Bigr).
\end{equation}

\paragraph{Second layer}
For configuration $\vr_k$, the model retrieves the symbol located under the head, denoted by $s^{(k)}$.
To accomplish this, we define an attention mechanism that filters positions $j \le k$ such that $c^{(j)} = c^{(k)}$, and selects the largest such index $j$.
Formally,
\(
s^{(k)} \;=\; v^{(j^\star)}, 
\quad\text{where}\quad 
j^\star = \max \{\, j \le k : c^{(j)} = c^{(k)} \,\}.
\)
To implement this retrieval, we set the attention vectors at position \( j \) as
\begin{equation}
\mathbf{q}_j = T(n)\cdot\interleave{\sbin_{s(n)}(j)}{1_{s(n)}}, \quad \mathbf{k}_j = 2^{s(n)+1}\cdot 
  \interleave{\sbin_{s(n)}(j)}{-1_{s(n)}},
\quad
\mathbf{v}_j = \Bigl(\boldsymbol{e}_{\Gamma}(v^{(j)}),\; \vzero \Bigr).
\end{equation}
By \Cref{lem:attention_rounding}, the softmax operation can be made arbitrarily close to a hardmax by choosing a sufficiently large temperature \(T(n)\), the attention mechanism can retrieve a value that approximates \(\boldsymbol{e}_{\Gamma}(s^{(k)})\) within bounded error so that the subsequent feed-forward layer can compute the two transition functions \( \delta_1 \) and \( \delta_2 \) in parallel:
\begin{equation}
z^{(k+1)}_j
\;=\;
\boldsymbol{e} \big(\delta_j\bigl(q^{(k)},\, s^{(k)}\bigr)\big)
\;\coloneq\;
\bigl(\boldsymbol{e}_{Q}(q^{(k+1)}_j),\, \boldsymbol{e}_{\Gamma}(v^{(k+1)}_j),\, \boldsymbol{e}_{\Sigma}(w^{(k+1)}_j),\, m^{(k+1)}_j\bigr),
\quad j\in\{1,2\}.
\end{equation}

Finally, the overall output at token \(\vr_k\) at position \(n+i\) is
\begin{equation}
\Bigl(\,\boldsymbol{e}(q^{(k)}),\; \boldsymbol{e}(v^{(k)}),\; \boldsymbol{e}(w^{(k)}),\; m^{(k)},\; \vp(i),\, c^{(k)},\, s^{(k)},\, z^{(k+1)}_1,\, z^{(k+1)}_2,\, \vzero \Bigr),
\end{equation}
where \( z^{(k+1)}_1 \) and \( z^{(k+1)}_2 \) represent the encodings of the two candidate transitions.

\paragraph{Third layer}
This layer handles the delimiters.  
When the last input token is \(e\), the model copies the most recently written output symbol \(y_i\) from the immediately preceding position.  
When the last input token is \(e'\), the model copies the most recent configuration \(\rv_i\) from three positions earlier.  
We implement this mechanism using two attention heads:
\begin{align}
\mathbf{q}_{i,1} & \;=\; \interleave{\sbin_{s(n)}(i-1)}{1_{s(n)}},\\
\mathbf{q}_{i,2} &\;=\; \interleave{\sbin_{s(n)}(i-3)}{1_{s(n)}},\\
\mathbf{k}_{i,1} &\;=\; \mathbf{k}_{i,2} \;=\; 2^{s(n)+1}\cdot \interleave{\sbin_{s(n)}(i)}{(-1_{s(n)})},\\
\mathbf{v}_{i,1} & \;=\; \mathbf{v}_{i,2} \;=\;
\bigl(\boldsymbol{e}(q^{(i)}),\, \boldsymbol{e}(v^{(i)}),\, \boldsymbol{e}(w^{(i)}),\, m^{(i)},\, \vzero\bigr).
\end{align}
The first head is configured to retrieve the symbol from one position earlier, while the second head retrieves the configuration from three positions earlier.
The following feed-forward layer is defined to output the \(\vz \in \{0,1\}^{|\gV|}\) such that

\begin{equation}
\vz \;=\;
\begin{cases}
\tfrac{1}{2}\Bigl(\boldsymbol{e}\bigl(z^{(k+1)}_1\bigr) + \boldsymbol{e}\bigl(z^{(k+1)}_2\bigr),\; \vzero_{|\Sigma|+4}\Bigr),
& \text{if the input is }\vr_i\text{ and } i \neq r(t)\ \forall t,\\
\Bigl(\vzero_{|Q|\times|\Gamma|\times(|\Sigma|+1)\times 6},\;\vzero_{|\Sigma|+1},\; (1,0,0)\Bigr),
& \text{if the input is }\vr_i\text{ and } i = r(t)\ \text{for some }t,\\
\Bigl(\vzero_{|Q|\times|\Gamma|\times(|\Sigma|+1)\times 6},\; \boldsymbol{e}_\Sigma(y_i),\; \vzero_{3}\Bigr),
& \text{if the input is } e,\\
\Bigl(\vzero_{|Q|\times|\Gamma|\times(|\Sigma|+1)\times 6},\; \vzero_{|\Sigma|+1},\;(0,0,1)\Bigr),
& \text{if the input is } y_i,\\
\Bigl(\boldsymbol{e}(\vr_{r(i)}'),\; \vzero_{|\Sigma|+4}\Bigr),
& \text{if the input is } e'.
\end{cases}
\end{equation}
Here \(\boldsymbol{e}(\cdot)\) denotes one-hot encodings aligned with the corresponding slots of \(\gV\).

\paragraph{Output layer}
The final output layer is
\(
\mathbf{OUT} 
= 
\begin{bmatrix}
\vzero & 
\mI_{|\gV|}
\end{bmatrix}.
\)
The decoding function generates tokens in two modes:  
when the current token is \(\vr_i\), it stochastically samples from the two transitions \( z^{(k+1)}_1 \) and \( z^{(k+1)}_2 \);  
otherwise, it deterministically outputs \(y_i\) for \(e\), \(e'\) for \(y_i\), and \(\vr'_{r(i)}\) for \(e'\).
\end{proof}
\fi

\subsection{Self-Reducibility and Complexity of Approximate Counting}
Here, we provide the definitions of relation and associated counting problems.
\begin{definition}[Relation]
A \emph{relation} over an alphabet $\Sigma$ is a subset
\(R \subseteq \Sigma^* \times \Sigma^* .\) For an input $x \in \Sigma^*$, we denote
\[
R(x) := \{\, y \in \Sigma^* : (x,y) \in R \,\}.
\]
\end{definition}
\begin{definition}[Counting]
Given a relation $R$, the associated counting function is defined as
\[
N_R : \Sigma^* \to \mathbb{N},
\qquad
N_R(x) := |R(x)| .
\]
\end{definition}
\begin{definition}[$p$-relation]
A relation $R \subseteq \Sigma^* \times \Sigma^*$ is called a \emph{$p$-relation} if
\begin{itemize}[nosep]
  \item the membership $(x,y) \in R$ can be decided in time polynomial in $|x|$, and
  \item there exists a polynomial $p$ such that for all $(x,y) \in R$, we have $|y| \le p(|x|)$.
\end{itemize}
\end{definition}
\begin{definition}
The \emph{extension counting function} associated with a relation
$R \subseteq \Sigma^* \times \Sigma^*$ is 
\[
\Ext_R : \Sigma^* \times \Sigma^* \to \mathbb{N},
\qquad
\Ext_R(x,w)
:= \bigl|\{\, z \in \Sigma^* : (x, wz) \in R \,\}\bigr| .
\]
\end{definition}
The complexity class $\#\mathsf{P}$ consists of counting problems associated
with $p$-relations~\cite{valiant1979complexity}. Then, we provide definitions of schemes for approximate counting. In the remainder of this paper, we say that an algorithm produces an output \emph{approximating $f(x)$ within ratio $1+\varepsilon$} if its output $\hat{f}(x)$ satisfies
\[
(1-\varepsilon)f(x) \le \hat{f}(x) \le (1+\varepsilon)f(x),
\]
\begin{definition}[FPTAS]\label{def:fptas}
An algorithm is called a \emph{fully polynomial-time approximation scheme} (FPTAS)
for a function $f$ if, for any input $x$ and any $\varepsilon>0$, it produces an output approximating $f(x)$ within ratio $1+\varepsilon$, and runs in time polynomial in $|x|$ and $1/\varepsilon$.
\end{definition}
\begin{definition}[FPRAS]\label{def:fpras}
An algorithm is a \emph{fully polynomial-time randomized approximation scheme}
(FPRAS) for a function $f$ if, for any input $x$ and any $\varepsilon>0$ and $\delta>0$,
it produces an output approximating $f(x)$ within ratio $1+\varepsilon$ with probability at least $1-\delta$, and runs in time polynomial in $|x|$, $1/\varepsilon$, and $\log(1/\delta)$.
\end{definition}

The following proposition formalizes the relationship between approximate counting and the extension counting function.
\begin{proposition}[\citealp{jerrum1986random}]\label{prop:fpras_ext}
Let $R$ be a $p$-relation and let $x \in \Sigma^n$. Let $m = p(n)$ and define $r = 1 + \frac{\varepsilon}{2m}$. If there exists a FPRAS that approximates $\mathrm{Ext}_R(x,w)$ within ratio $r$ for all prefixes $w$ with probability at least $1-\delta/m$, then there exists an FPRAS that approximates $N_R(x)$ within ratio $1+\varepsilon$ with probability at least $1-\delta$.
\end{proposition}

This reduction transforms the global counting problem into a sequence of local estimation steps.
\begin{proposition}\label{prop:eps_comp}
For any $0 < \varepsilon \le 1$ and any integer $m \ge 1$, it holds that 
\begin{equation}
\left(1 + \frac{\varepsilon}{2m}\right)^m < 1 + \varepsilon. 
\end{equation}
\end{proposition}
\begin{proof}
We use the standard inequality $1 + x \le e^x$, which holds for all $x \in \mathbb{R}$. Substituting $x = \frac{\varepsilon}{2m}$, we have:
\begin{equation}
\left(1 + \frac{\varepsilon}{2m}\right)^m \le \left( \exp\left( \frac{\varepsilon}{2m} \right) \right)^m = e^{\varepsilon/2}.
\end{equation}
For $0 < \varepsilon \le 1$, we show that $e^{\varepsilon/2} < 1 + \varepsilon$. Let $f(\varepsilon) = 1 + \varepsilon - e^{\varepsilon/2}$. Then $f(0) = 0$ and $f'(\varepsilon) = 1 - \frac{1}{2}e^{\varepsilon/2}$. Since $e^{\varepsilon/2} \le e^{1/2} < 2$ for all $\varepsilon \in [0, 1]$, it follows that $f'(\varepsilon) > 0$. Thus, $f$ is strictly increasing on $[0, 1]$, implying $f(\varepsilon) > f(0) = 0$, or equivalently $e^{\varepsilon/2} < 1 + \varepsilon$.
\end{proof}

\begin{proof}[Proof for \Cref{prop:fpras_ext}]
Let $x \in \Sigma^n$ and $m = p(n)$. We express $N_R(x)$ as a product of $m$ ratios. Let $w = y_1 y_2 \dots y_m$ be a witness, and let $w^{(i)}$ denote its prefix of length $i$. We can write:
\begin{equation}
N_R(x) = \mathrm{Ext}_R(x, \lambda) = \prod_{i=1}^{m} \frac{\mathrm{Ext}_R(x, w^{(i-1)})}{\mathrm{Ext}_R(x, w^{(i)})} \cdot \mathrm{Ext}_R(x, w^{(m)}).
\end{equation}
In the standard self-reducibility framework, we estimate each ratio $\rho_i = \mathrm{Ext}_R(x, w^{(i-1)}) / \mathrm{Ext}_R(x, w^{(i)})$ using the assumed approximation scheme. Let $A_i$ be the estimator for the $i$-th ratio such that:
\begin{equation}
\mathbb{P} \left[ \frac{1}{r} \le \frac{A_i}{\rho_i} \le r \right] \ge 1 - \frac{\delta}{m}.
\end{equation}
By the union bound, the probability that at least one of these $m$ estimations fails to stay within the ratio $r$ is at most $m \cdot (\delta/m) = \delta$. Therefore, with probability at least $1-\delta$, all $m$ estimations are successful. Under this condition, the final estimate $\hat{N} = \prod_{i=1}^m A_i$ satisfies:
\begin{equation}
\frac{1}{r^m} \le \frac{\hat{N}}{N_R(x)} \le r^m.
\end{equation}
From \Cref{prop:eps_comp}, we have $r^m = (1 + \frac{\varepsilon}{2m})^m < 1 + \varepsilon$. Furthermore, for $0 < \varepsilon \le 1$, it holds that $1/r^m > (1+\varepsilon)^{-1} \ge 1-\varepsilon$, ensuring a relative error of at most $\varepsilon$.

Regarding complexity, each of the $m$ calls to the FPRAS for $\mathrm{Ext}_R$ runs in time $\mathrm{poly}(n, (r-1)^{-1}, \log(m/\delta))$. Substituting $r-1 = \varepsilon/2m$, the runtime per call is $\mathrm{poly}(n, m/\varepsilon, \log(m/\delta))$. Since $m = p(n)$ is a polynomial in $n$, the total running time is $\mathrm{poly}(n, 1/\varepsilon, \log(1/\delta))$, which satisfies the requirements for an FPRAS.
\end{proof}

We have shown that approximate counting reduces to approximating the \emph{extension counting function}.
For a general relation $R$, however, the connection between the extension counting function and the original counting problem for $R$ remains unclear. This gap is bridged by the notion of \emph{self-reducibility}: for self-reducible relations, the extension counting function can be reduced
back to the original counting problem for $R$.
\begin{definition}[\citealp{Schnorr1976}]\label{def:self-red}
A relation \( R \subseteq \Sigma^* \times \Sigma^* \) is \emph{self-reducible} if:
\begin{enumerate}[leftmargin=*]
  \item There exists a polynomial-time computable function \( g \in \Sigma^* \to \mathbb{N} \) s.t., \( (x,y)\in R \Rightarrow |y| = g(x); \)
  \item There exists a polynomial-time Turing machine that decides membership in \(R\). 
  \item There exist polynomial-time computable functions \( \psi \in \Sigma^* \times \Sigma^* \to \Sigma^* \) and \( \sigma \in \Sigma^* \to \mathbb{N} \) s.t.
    \begin{align}
    \sigma(x) &= O(\log |x|), \\
    g(x) > 0 &\Rightarrow \sigma(x) > 0 \quad \forall x \in \Sigma^*, \\
    |\psi(x, w)| &\le |x| \quad \forall x, w \in \Sigma^*,
    \end{align}
  and such that, for all \( x \in \Sigma^* \), \( y = y_1 \dots y_n \in \Sigma^* \),
  \begin{equation}
    \langle x, y_1, \dots, y_n \rangle \in R \iff \langle \psi(x, y_1 \dots y_{\sigma(x)}), y_{\sigma(x)+1}, \dots, y_n \rangle \in R.
  \end{equation}
\end{enumerate}
\end{definition}
For example, SAT is self-reducible: by fixing a prefix of variables and applying the reduction map \(\psi\), the problem is simplified to a smaller instance whose solutions extend the chosen prefix.  
Consider the Boolean formula \(F = (x_1 \vee x_2)\ \wedge\ (\neg x_1 \vee x_3)\ \wedge\ (\neg x_2 \vee \neg x_3),\) and suppose we fix the first variable to \(x_1 = 1\). The residual instance is obtained by applying \(\psi(F,(1))\), which substitutes \(x_1=1\) and simplifies the formula by deleting satisfied clauses and removing falsified literals: \(\psi(F,(1)) \;=\; (x_3)\ \wedge\ (\neg x_2 \vee \neg x_3).\) The unit clause \((x_3)\) forces \(x_3=1\), which in turn simplifies \((\neg x_2 \vee \neg x_3)\) to \(\neg x_2\), yielding \(x_2=0\). Hence the unique residual assignment is \((x_2,x_3) = (0,1)\), and together with the prefix \(x_1=1\), we obtain the satisfying assignment \((x_1,x_2,x_3) = (1,0,1).\)

For self-reducible relations, the extension counting function is no harder to approximate than the original counting problem.
\begin{proposition}[\citealp{jerrum1986random}]\label{prop:ext_fpras}
Let $R$ be self-reducible. If there exists an FPRAS for $N_R$, then there exists an FPRAS for $\mathrm{Ext}_R$.
\end{proposition}
\begin{proof}
By the definition of self-reducibility, the extension function $\mathrm{Ext}_R(x, w)$, which counts the number of strings $y$ such that $(x, wy) \in R$, can be mapped to the counting problem of a modified instance. Specifically, for any prefix $w$ where $|w| \leq \sigma(x)$, there exists a polynomial-time computable mapping $\psi$ such that:
\begin{equation}
\mathrm{Ext}_R(x, w) = |\{ y \in \Sigma^{\sigma(x) - |w|} : (x, wy) \in R \}| = N_R(\psi(x, w)).
\end{equation}
Since $R$ is self-reducible, the instance $x_w = \psi(x, w)$ can be constructed in polynomial time relative to $|x|$. By the hypothesis, there exists an FPRAS for $N_R$, which provides a randomized $(1 \pm \epsilon)$-approximation of $N_R(x_w)$ in time polynomial in $|x_w|$ and $1/\epsilon$. Consequently, this algorithm serves as an FPRAS for $\mathrm{Ext}_R(x, w)$, as it runs in polynomial time and satisfies the required approximation guarantees.
\end{proof}

\subsection{Proof for Theorem~\ref{thm:fpras_sep}}

\begin{lemma}[Formal Statement of \Cref{thm:fpras_sep}]
Assume that $\mathsf{FPTAS} \subsetneq \mathsf{FPRAS}$ for self-reducible relations. There exists a self-reducible relation $R$ and an associated function $\mathrm{Ext}_R: \Sigma^* \times \Sigma^* \to \mathbb{N}$ defined by \(\mathrm{Ext}_R(x, y_{<i}) \coloneqq |\{\, z \in \Sigma^* : (x, y_{<i}z) \in R \,\}| \) such that language models with CoT using polynomially many reasoning steps, which output a distribution for a given input $(x, y_{<i}) \in \Sigma^n \times \Sigma^*$ by using a linear head for the last hidden state before emitting the output token, admit an FPRAS for $\mathrm{Ext}_R$, whereas no latent thought with polynomially many iterations admits the same approximation guarantee using a linear head for the last hidden state before emitting the output token.
\end{lemma}
\begin{proof}
By \Cref{lem:cot_ptm}, CoT can simulate any probabilistic Turing machine running in polynomial time; thus, it can implement an FPRAS for $N_R$. By \Cref{prop:ext_fpras}, the existence of an FPRAS for $N_R$ further implies the existence of an FPRAS for the extension function $\mathrm{Ext}_R$.
On the other hand, latent thought consisting of a polynomial number of iterations can always be simulated by a deterministic polynomial-time Turing machine, provided that all state transitions in the latent computation are deterministic. Consequently, if such a latent thought process were to admit an FPRAS for $\mathrm{Ext}_R$, it would effectively yield a deterministic polynomial-time approximation scheme. By \Cref{prop:ext_fpras}, the existence of such a scheme for $\mathrm{Ext}_R$ would imply the existence of an FPTAS for $N_R$.
However, under the standard complexity-theoretic assumption that $\mathsf{FPTAS} \subsetneq \mathsf{FPRAS}$ for self-reducible relations, there exists a self-reducible relation $R$ whose counting function admits an FPRAS but no FPTAS. This yields a contradiction.
\end{proof}

\subsection{Proof for \Cref{thm:fpaus_sep}}
\begin{definition}[FPAUS] Uniform generation asks to sample an element \( y \) uniformly at random from \( R(x) \). A \emph{fully polynomial almost uniform sampler} (FPAUS) for \( R \) is a randomized algorithm that, given an input \( x \in \Sigma^{*} \) and an accuracy parameter \( \varepsilon > 0 \), runs in time polynomial in \( |x| \) and \( \log(1/\varepsilon) \), and outputs a distribution \( q(\cdot \mid x) \) such that \[ \bigl\| q(\cdot \mid x) - U(R(x)) \bigr\|_{\mathrm{TV}} \;\le\; \varepsilon, \] where \( U(R(x)) \) denotes the uniform distribution over the set \( R(x) \), and \( \|\cdot\|_{\mathrm{TV}} \) denotes total variation distance. \end{definition}
For self-reducible relations, the following holds.
\begin{theorem}[\citealp{jerrum1986random}] 
Let \( R \) be a self-reducible relation. There exists an FPRAS for approximating \( |R(x)| \) if and only if there exists an FPAUS for sampling uniformly from \( R(x) \).
\end{theorem}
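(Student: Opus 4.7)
The plan is to prove both directions by exploiting the recursive tree structure induced by self-reducibility (\Cref{def:self-red}). Writing $R(x)=\bigcup_{w\in\Sigma^{\sigma(x)}}\{w\}\times R(\psi(x,w))$, the uniform distribution on $R(x)$ factorizes as a product of conditional marginals along a tree of depth $N=g(x)/\sigma(x)$, which is polynomial in $|x|$ since the solution length $g(x)$ is polynomial and $\sigma(x)=\Omega(\log|x|)$. Each node of the tree is a subinstance $\psi(x,y_{1:k\sigma(x)})$ of size at most $|x|$, so the FPRAS/FPAUS promised for the family $R$ can be invoked recursively on all of them. The overall scheme in both directions is to split a global $(\varepsilon,\delta)$-budget across the $N$ levels and combine local guarantees by a union bound.

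For the direction \emph{FPRAS implies FPAUS}, I would build a recursive sampler that, at prefix $y_{1:j}$, calls the FPRAS on each nonempty child subinstance $\psi(x,y_{1:j}\cdot w)$ for $w\in\Sigma^{\sigma(x)}$ to obtain estimates $\widehat N_w$ with relative error $\varepsilon'=\Theta(\varepsilon/N)$ and confidence $\delta'=\Theta(\delta/(N\cdot 2^{\sigma(x)}))$, then samples the next block $w$ with probability $\widehat N_w/\sum_{w'}\widehat N_{w'}$. Empty children are screened out in polynomial time using the membership-decidability clause of \Cref{def:self-red}. A union bound shows all estimates are $(1\pm\varepsilon')$-accurate with probability $\ge 1-\delta$; conditioned on this event, each conditional marginal is off by a multiplicative factor $(1\pm O(\varepsilon'))$, and taking the product over $N$ levels yields a joint distribution whose pointwise ratio to $U(R(x))$ lies in $(1\pm O(N\varepsilon'))=(1\pm O(\varepsilon))$ by \Cref{lem:exp}, which bounds the total variation distance by $O(\varepsilon)$. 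Rescaling $\varepsilon,\delta$ by constants yields the required FPAUS.

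For the converse \emph{FPAUS implies FPRAS}, I would estimate $|R(x)|$ via the telescoping identity $|R(x)|=\prod_{k=1}^{N} 1/p_k$, where at each level I fix a block $w_k^\star\in\Sigma^{\sigma(x)}$ maximizing an empirical marginal (so the true $p_k\ge 2^{-\sigma(x)}=\Omega(1/\poly(|x|))$), set $x_k=\psi(x_{k-1},w_k^\star)$, and estimate $p_k=\Pr_{Y\sim U(R(x_{k-1}))}[Y_{1:\sigma(x)}=w_k^\star]$ by drawing $m=\Theta(2^{\sigma(x)}N^2\log(N/\delta)/\varepsilon^2)$ samples from the FPAUS on $x_{k-1}$ with sampler tolerance $\eta=\Theta(\varepsilon p_k/N)$ and taking the empirical frequency. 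A multiplicative Chernoff bound combined with a union bound over the $N$ levels shows that each $\widehat p_k$ has multiplicative error $1\pm\varepsilon/(2N)$ with probability $\ge 1-\delta$, and invoking \Cref{lem:exp} once more turns the product into a $(1\pm\varepsilon)$-estimate of $|R(x)|$ in time polynomial in $|x|$, $1/\varepsilon$, and $\log(1/\delta)$.

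The principal obstacle is the delicate error budgeting across the $N$ recursive levels: per-call accuracy must be tightened to $\Theta(\varepsilon/N)$ so that the aggregate multiplicative error telescopes to $(1\pm\varepsilon)$ via $(1+\varepsilon/N)^N\le e^\varepsilon$ from \Cref{lem:exp}, while per-call confidence is tightened to $\Theta(\delta/N)$ (and $\Theta(\delta/(N\cdot 2^{\sigma(x)}))$ in the sampling direction to union-bound over children). A subtler issue in the converse direction is converting the additive total-variation guarantee of the FPAUS into a multiplicative guarantee on each marginal $p_k$: since an FPAUS with tolerance $\eta$ only bounds the additive error of event probabilities by $\eta$, multiplicative accuracy requires $\eta\ll p_k$, which is exactly why the max-child choice $w_k^\star$ is essential to secure the lower bound $p_k\ge 2^{-\sigma(x)}$.
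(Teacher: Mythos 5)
The paper itself does not prove this statement (it is imported from \citealp{jerrum1986random}), so your proposal has to be judged against the standard argument and, crucially, against the paper's own definitions. Your counting-to-sampling direction has a genuine gap with respect to the paper's definition of FPAUS, which requires running time polynomial in $|x|$ and $\log(1/\varepsilon)$. You call the FPRAS with relative error $\varepsilon'=\Theta(\varepsilon/N)$ at every level, so each call costs time polynomial in $1/\varepsilon'=\Theta(N/\varepsilon)$ and the whole sampler runs in time $\poly(|x|,1/\varepsilon)$ --- exponentially worse in $\log(1/\varepsilon)$ than the definition demands (consider $\varepsilon=2^{-|x|}$). The missing idea is to stop at \emph{constant} per-step accuracy (e.g.\ each conditional within a factor $1\pm\tfrac{1}{2N}$, so the induced distribution is within $e^{\pm1/2}$ of uniform pointwise) and then remove the residual bias by rejection sampling: the probability your sampler assigned to the string it actually produced is computable as the product of the normalized estimates it used, so one can accept with probability $\min\{1,\varphi_0/\pi(y\mid x)\}$ for a suitable computable threshold $\varphi_0$, which makes the accepted output exactly uniform; the acceptance probability is bounded below by a constant, so $O(\log(1/\varepsilon))$ independent trials, together with per-call confidence $\Theta(\varepsilon/\poly(|x|))$ (cheap, since the FPRAS is polynomial in $\log(1/\delta)$), bring the total variation error below $\varepsilon$ within the required time. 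Note also that the FPAUS has no confidence parameter, so the counter's failure probability must be folded into the TV budget rather than ``rescaled'' separately. This correction mechanism is exactly what the paper deploys in its proof of \Cref{thm:approx_sampler}.

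Your sampling-to-counting direction is essentially the standard telescoping-product argument and is sound: choosing the (near-)maximizing block secures $p_k\ge|\Sigma|^{-\sigma(x)}=1/\poly(|x|)$ (it should be $|\Sigma|^{\sigma(x)}$ rather than $2^{\sigma(x)}$ for a general alphabet, and you need a one-line high-probability argument that the \emph{empirical} argmax has true mass within a constant factor of the true maximum), the requirement $\eta\ll p_k\varepsilon/N$ is the right way to convert the additive TV guarantee into multiplicative accuracy, and the resulting time is $\poly(|x|,1/\varepsilon,\log(1/\delta))$ as the FPRAS definition requires. Two small omissions worth patching: the telescoping identity should carry the base-case factor $|R(x_N)|\in\{0,1\}$, computable by the membership clause of \Cref{def:self-red}, and the degenerate case $R(x)=\emptyset$ has to be handled by convention before the product argument applies.
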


\begin{proof}[Proof of \Cref{thm:fpaus_sep}]
The target uniform conditional distribution is defined as follows:
\begin{equation}\label{eq:p-def}
p(y_i\mid x,y_{<i})
\;:=\;
\frac{\Ext_R(x,y_{<i}y_i)}{\sum_{u\in\Sigma}\Ext_R(x,y_{<i}u)}
\qquad (y_i\in\Sigma).
\end{equation}
Assume an FPRAS $\mathcal{A}(x, \varepsilon, \delta)$ exists for the self-reducible relation $|R(x)|$. By \Cref{prop:ext_fpras}, the existence of an FPRAS for $|R(x)|$ implies the existence of an FPRAS for the extension function $\Ext_R$. We construct a CoT that samples $y \in R(x)$ by sequentially approximating these conditional probabilities. 
For each step $i \in \{1, \dots, m(n)\}$, the CoT computes $(1 \pm \varepsilon)$-accurate estimates $\widehat{\Ext}_R(x, y_{<i}u)$ for all $u \in \Sigma$ using $\mathcal{A}$, and induces the following distribution:
\begin{equation}\label{eq:pi-def}
\pi(y_i\mid x,y_{<i})
\;:=\;
\frac{\widehat{\Ext}_R(x,y_{<i}y_i)}{\sum_{u\in\Sigma}\widehat{\Ext}_R(x,y_{<i}u)}.
\end{equation}
Conditioned on the event that all estimates in \Cref{eq:pi-def} are $(1\pm\varepsilon)$-accurate, the multiplicative error is bounded by:
\begin{equation}\label{eq:mult-bound}
\frac{1-\varepsilon}{1+\varepsilon} \le \frac{\pi(y_i\mid x,y_{<i})}{p(y_i\mid x,y_{<i})} \le \frac{1+\varepsilon}{1-\varepsilon}.
\end{equation}
To ensure the cumulative approximation error remains within $(1 \pm \varepsilon')$, we set the local precision to $\varepsilon \le \frac{\varepsilon'}{2+\varepsilon'}$, which yields $\frac{1+\varepsilon}{1-\varepsilon} \le 1+\varepsilon'$ and $\frac{1-\varepsilon}{1+\varepsilon} \ge 1-\varepsilon'$. 
To ensure the failure probability is at most $\delta'$, we apply a union bound over the $m(n)$ generation steps and the $|\Sigma|$ calls per step. By setting the local confidence to $\delta \le \frac{\delta'}{m(n)(|\Sigma|+1)}$, the joint success event holds with probability at least $1 - \delta'$. Under this event, the CoT correctly simulates an FPRAS for $p(y_i\mid x,y_{<i})$ in total time $\text{poly}(n, 1/\varepsilon', \log(1/\delta'))$. Finally, since CoT can represent an FPRAS by \Cref{thm:fpras_sep}, it satisfies the requirements for the construction.

On the other hand, we show that latent thought cannot compute such an approximation. Suppose, for contradiction, that the model could compute the conditional distribution $\pi(y_i \mid x, y_{<i})$ to within a $(1 \pm \varepsilon)$ relative error in a single step. 
We define the estimator for the total count $Z(x) = |R(x)|$ as:
\begin{equation}\label{eq:z-hat-def}
\widehat Z(x)\;\coloneq\;\Biggl(\prod_{i=1}^{m(n)}\pi(y_i\mid x,y_{<i})\Biggr)^{-1}.
\end{equation}
Since the true distribution satisfies $p(y\mid x) = 1/|R(x)| = \prod_i p(y_i\mid x,y_{<i})$, the relative error of $\widehat{Z}(x)$ is governed by the product of local errors: 
\begin{equation}
(1 + \varepsilon)^{-m(n)}|R(x)| \le Z(x) \le (1 - \varepsilon)^{-m(n)}|R(x)|,
\end{equation}
By setting $\varepsilon \le \frac{\varepsilon'}{2m(n)}$, we apply \Cref{prop:eps_comp} and the properties of multiplicative error:
\begin{equation}
(1 + \varepsilon)^{-m(n)} \ge 1 - m(n)\varepsilon \ge 1 - \varepsilon'/2 > 1 - \varepsilon',
\end{equation}
and for sufficiently small $\varepsilon$,
\begin{equation}
(1 - \varepsilon)^{-m(n)} \le 1 + 2m(n)\varepsilon \le 1 + \varepsilon'.
\end{equation}
Substituting these into \Cref{eq:z-hat-def}, we obtain:
\begin{equation}
(1-\varepsilon') |R(x)| \le \widehat{Z}(x) \le (1+\varepsilon') |R(x)|.
\end{equation}
This implies that if the model could compute $\pi$ accurately, $\widehat{Z}(x)$ would constitute an FPTAS for $|R(x)|$. 
However, under the standard complexity-theoretic assumption that $\mathsf{FPTAS} \subsetneq \mathsf{FPRAS}$ for self-reducible relations, this yields a contradiction.
\end{proof}

\section{Experimental Details}\label{sec:exp_detail}

\subsection{Fundamental Algorithmic Reasoning Tasks}

\subsubsection{Task Settings}\label{sec:det:task}
\paragraph{Word Problem}
We define a sequence prediction task based on finite groups such as the symmetric group $S_5$. 
Given a sequence of group elements of length $k$, the model is required to output the cumulative products obtained by scanning the sequence from left to right. 
Formally, for an input sequence $(g_1, g_2, \ldots, g_k)$, the target sequence is 
\(
(g_1,\; g_1 g_2,\; g_1 g_2 g_3,\; \ldots,\; g_1 g_2 \cdots g_k).
\)
We follow the setting of \cite{merrill2025exact}.


\paragraph{Connectivity} To ensure that the reachability labels are approximately balanced, we generate undirected graphs according to the Erd\H{o}s--R\'enyi model~\citep{erdds1959random} \(G(n,p)\), where \(n\) is the number of vertices and each possible edge is included independently with probability \(p\). In the supercritical regime (\(pn = c > 1\)), a single ``giant'' connected component emerges, occupying a fraction \(s \in (0,1)\) of the vertices, which satisfies \(s = 1 - e^{-c s}.\)
Consequently, the probability that two uniformly random vertices are both in this component---and hence mutually reachable---is approximately \(s^2\). To target a reachability probability of \(1/2\), we set
\(
  s \approx \sqrt{\tfrac{1}{2}} \approx 0.707, 
  c \approx \tfrac{-\ln(1-s)}{s} \approx 1.74,
\)
and thus
\(
  p = \tfrac{c}{n} \approx \tfrac{1.7}{n}.
\)
In practice, for each graph of size \(n\) we fix \(p = 1.7/n\), which empirically yields \(\Pr[\text{reachable}] \approx 50\%\) for \(n \in [50,100]\). 
We follow the encoding scheme of \citet{sanford2024understanding}.
The input to the model is serialized as a flat token sequence consisting of three parts:
\(
  v_0\,v_1\,\cdots\,v_{n-1}\;
  e_1\,e_2\,\cdots\,e_m\;
  s,t
\)
where each vertex is denoted by a token \(v_i\), each edge is represented as a pair ``\texttt{u,v}'' with \(u < v\), and the final token ``\texttt{s,t}'' specifies the source--target pair for the reachability query.

\paragraph{Arithmetic Expression Evaluation}
Following \cite{feng2023towards}, we generate expressions over integers modulo $r$ using the four operations ${+,-,\times,\div}$, where multiplication and division are defined via precomputed modular tables. To guarantee that each expression evaluates to a specific target value, we grow expressions \emph{backwards}: starting from a sampled number, we iteratively replace it with a binary sub-expression that preserves its value under modular arithmetic. Different from \cite{feng2023towards}, we fix the modulus to $r=3$, as our focus lies in evaluating the reasoning over expressions rather than exploring the properties of each modular arithmetic system.

\paragraph{Edit Distance}
The Edit Distance task requires computing the minimum number of edit operations needed to transform one string into another. The allowed operations are insertion, deletion, and replacement of a single character, and the objective is to predict the total edit distance given two input strings.
To build the dataset, we follow \cite{feng2023towards}. We first generate two strings over a randomly sampled alphabet. 
The first string has a fixed length, while the second string is produced in two possible ways: with probability $0.4$, it is drawn as a random string of nearly the same length (within $\pm 3$ characters), and with probability $0.6$, it is derived from the first string by applying a given number of random edit operations. 
Each edit operation is chosen uniformly among deletion, replacement, and insertion. 
To avoid trivial cases, string pairs that are identical or whose lengths differ excessively are rejected and resampled. 
Finally, the shorter string is always placed first to maintain a consistent input format.  
An example instance in the format is shown below:
\(
\texttt{s v d h s s e e \,\ldots\, v e | s h d s s s s \,\ldots\, e s e <sep> 20}
\)
Here, the two input strings are separated by the token ``\texttt{|}'', ``\texttt{<sep>}'' marks the end of the inputs, and the final number ``\texttt{20}'' denotes the computed edit distance.

\subsubsection{Training Configuration}\label{app:det:train_conf}

\paragraph{Configuration of chain of thought}\label{app:sec:alg_cot}
For CoT models, training is performed with supervision of step-by-step algorithms.
\textbf{(1) Word problem:} for this task, the CoT algorithm proceeds by sequentially scanning the token sequence and
producing at each prefix the evaluation result of the expression step by step. Thus, the overall length of the CoT sequence matches the input length.
\textbf{(2) Graph connectivity:} Following \citet{bavandpour2025lower}, the algorithm sequence is simply the trace of a breadth-first search (BFS) starting from the source $s$. At each step, the model emits the incident edges of the currently expanded node in the order they are visited. The sequence terminates as soon as the target $t$ is discovered. To implement this algorithm, we maintain a list (``scratchpad'') initialized
with a dummy marker and the source, $(\texttt{N}, s)$. We iterate through this
list from left to right (i.e., queue order). Whenever the current node $u$ is
expanded, we append to the end of the list all incident edges $(u,v)$ for
neighbors $v$, followed by a separator token $(u,\texttt{N})$.
\textbf{(3) Arithmetic expression evaluation:} Following \cite{feng2023towards}, the CoT takes the fully expanded expression and repeatedly evaluates one innermost subexpression, writing down the simplified expression at each step until only a single numeral remains. For example,
\(2*(0+1)/2\to2*1/2\to2/2\to1.\)
The overall CoT sequence has quadratic length.
\textbf{(4) Edit distance:} Following \cite{feng2023towards}, the CoT algorithm outputs the DP table entries in the same order they are computed, i.e., row by row from top-left to bottom-right (topological order). This yields a quadratic number of steps in the input length.

\paragraph{Optimization and model details.}
We trained all models using the AdamW optimizer with a linear learning rate schedule. 
The initial learning rate was set to $1\times 10^{-4}$ with a weight decay of $0.01$, 
and a batch size of $256$. Training was continued until the training loss plateaued. 
For looped TFs, curriculum learning was applied to all tasks except edit distance: the input size was increased by $2$ for the word problem task, and by $4$ for the connectivity and arithmetic evaluation tasks.  %
The model architecture was based on standard Transformers with an embedding dimension of $256$. 
We used $4$ attention heads, and varied the number of Transformer layers depending on the task: 
two layers for word problems, a single layer for connectivity, 
and time-modulated~\citep{xu2025on} model with a single layer for looped TF, to stabilize training, on both arithmetic evaluation and the edit distance task. For CoT, we use the same configuration of the Transformer block.

\paragraph{Uniform selection.}
To estimate a lower bound on the number of reasoning steps required by CoT for each task, we first follow the procedure introduced in prior work~\cite{bavandpour2025lower}. Specifically, given a complete CoT trajectory consisting of \(T\) intermediate reasoning steps, we construct shortened trajectories by uniformly selecting \(k\) step indices from \(\{1,\dots,T\}\), i.e., by taking a uniformly spaced subsequence of the original trajectory.
Only the selected intermediate steps are used, while the remaining steps are removed. We then evaluate task performance as a function of \(k\), as shown in \Cref{tab:cot_uni}.

\begin{table}[t]
  \caption{Results for CoT on parallelizable tasks trained with uniform selection.}
  \label{tab:cot_uni}
  \centering
  \begin{tabular}{lc|cccc}
    \toprule
    \multirow{2}{*}{\textbf{Task}} & \multirow{2}{*}{\(\boldsymbol{n}\)} &
    \multicolumn{4}{c}{\textbf{Chain of Thought}} \\
    \cmidrule(lr){3-6}
     & & {8} & {16} & {32} & {64} \\ 
    \midrule
    Word Problem & 64
      & 0.8 & 0.8 & 0.8 & \textbf{100.0} \\
    Graph Connectivity & 32
      & 81.0 & 81.4 & 83.6 & \textbf{100.0} \\
    Arithmetic Evaluation & 16
      & 41.0 & 41.5 & 41.2 & \textbf{82.5} \\
    Edit Distance & 16
      & 69.2 & 70.3 & 82.6 & \textbf{94.8} \\
    \bottomrule
  \end{tabular}
\end{table}

\paragraph{Stepwise internalization.} 
We adopt stepwise internalization proposed by \cite{deng2024explicit}, a curriculum-based training procedure that gradually removes CoT tokens and encourages the model to internalize intermediate reasoning within its hidden states.
Starting from a model trained on full CoT trajectories, we progressively truncate intermediate reasoning tokens according to a predefined schedule and finetune the model at each stage.
Specifically, when the CoT length is greater than $128$, we remove $16$ tokens per stage until the remaining CoT length reaches $128$.
We then continue removing $8$ tokens per stage until the CoT length is reduced to $8$, training the model for $16$ epochs at each stage. The results for each fundamental algorithmic reasoning task are shown in Fig.\,~\ref{fig:stepwise_internalization}.

\begin{figure*}[t]
  \centering
  \includegraphics[width=\linewidth]{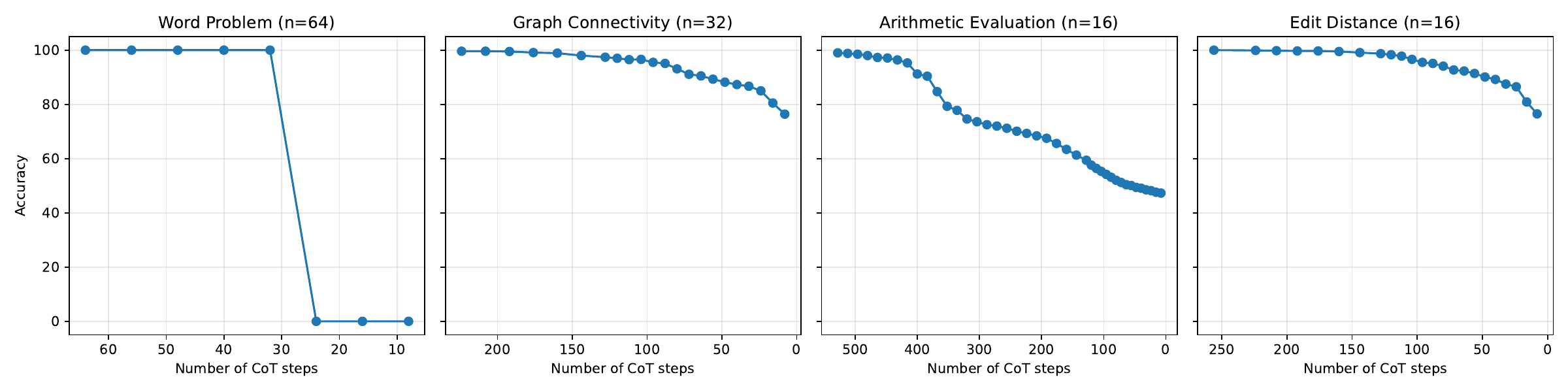}
    \caption{Accuracy as a function of the CoT steps during stepwise internalization.}
  \label{fig:stepwise_internalization}
\end{figure*}

\subsection{Approximate Counting and Approximate Sampling}\label{sec:approx_task}

\subsubsection{Approximate Counting of DNF Formulas}

To generate the dataset, we first construct a DNF formula $F$ by sampling $m$ clauses, 
each consisting of $w$ distinct literals over $n$ Boolean variables. 
Each literal is independently assigned to be either positive or negated. 
The formula is then serialized into a token sequence in which each clause is represented by its index together with variable--value pairs such as ``$2=$ $+1$'' or ``$4=$ $-1$''. 
For looped TFs, we prepare $100{,}000$ training samples and $1{,}000$ test samples. 
For CoT models, we instead generate an online dataset with the following structure.
To train the CoT models, we simulate a single trial of the randomized counting algorithm of \citet{karp1983monte}. 
The sequence concatenates the serialized formula, the sampled clause, the full assignment, and the verification outcome, separated by \texttt{<sep>} tokens and terminated by \texttt{<eos>}. CoT model is trained in an autoregressive manner, where prediction targets are defined by shifting the token sequence while masking out the formula description.
We trained the models using the AdamW optimizer with a linear learning rate schedule. 
The initial learning rate was set to $1\times 10^{-4}$, with a weight decay of $0.01$. 
We used a batch size of $256$ (reduced to $32$ for the $1000$-loop setting) 
and trained for $10{,}000$ iterations.
For inference, we count the total number of iterations used for summing output tokens (steps) in CoT across trials, and the number of loop iterations in the looped TF.

\subsubsection{Approximate Sampling of Graph Colorings}
We consider the problem of \emph{approximately sampling} a proper $k$-coloring of a graph, also known as \emph{almost-uniform generation}, a canonical randomized task closely related to $\#\mathsf{P}$-hard counting problems.
Given an undirected graph $G=(V,E)$ with $n=|V|$ vertices and maximum degree $\Delta$, a \emph{proper $k$-coloring} is an assignment of colors from $\{1,\dots,k\}$ to vertices such that no adjacent vertices share the same color.
Let $\Omega_k(G)$ denote the set of all proper $k$-colorings of $G$. We restrict attention to graphs of bounded degree and assume \(k \ge 2\Delta + 1.\) Under this condition, classical results in approximate counting show that the number of proper $k$-colorings admits a FPAUS~\cite{jerrum1995very}. The approximation relies on Markov Chain Monte Carlo (MCMC) sampling using Glauber dynamics for graph colorings.
Starting from an arbitrary proper coloring, the Markov chain repeatedly selects a vertex uniformly at random and proposes to recolor it with a randomly chosen color, accepting the update only if the resulting coloring remains proper. When $k \ge 2\Delta + 1$, this Markov chain is known to be rapidly mixing, converging to the uniform distribution over $\Omega_k(G)$ in polynomial time. 
For our experiments, we generate an undirected Erd\H{o}s--R\'enyi random graph, where each edge is included independently with probability
\(
p = \frac{1.7}{n},
\)
using a fixed random seed for reproducibility.
For simplicity and ease of analysis, we focus on small graphs with $n = 3$ and set the number of colors to $k = 5$.

Each sample is generated by running $T$ steps of Glauber dynamics on the space $\Omega_k(G)$ of proper $k$-colorings.
Starting from a greedy proper initialization, at each step we uniformly select a vertex and a color; the recoloring is accepted if and only if it preserves properness.
The final state of the Markov chain is treated as an approximate sample from the uniform distribution over $\Omega_k(G)$.
In addition to the final coloring, we optionally record the entire sequence of proposals and accept/reject outcomes for CoT and use it as supervision, which is not included for latent thought.
To train sequence models, we serialize the graph structure, the initial coloring, and either the MCMC history or the final coloring into a single token sequence.
We evaluate the distribution of solutions generated by the model rather than only solution accuracy.
Given an input $x$, we draw $N$ independent samples from the model using ancestral decoding.
Generation continues until an end-of-sequence (EOS) token is produced, and from each generated sequence we extract the final $n$ tokens immediately preceding the first EOS token, which encode a candidate solution.
The resulting samples define an empirical distribution $\hat{P}$ over generated solutions via normalized occurrence counts.
For each instance, the task provides an exact enumeration $\mathcal{Y}$ of all valid solutions.
We define the reference distribution $P_{\mathrm{true}}$ as the uniform distribution over this solution set, assigning probability $1/|\mathcal{Y}|$ to each $y \in \mathcal{Y}$ and zero otherwise.
We quantify the discrepancy between the empirical distribution $\hat{P}$ and the uniform reference distribution using the total variation distance \(
\frac{1}{2}
\sum_{y \in \mathcal{Y} \cup \hat{\mathcal{Y}}}
\left| \hat{P}(y) - P_{\mathrm{true}}(y) \right|.
\)
We trained the models using the AdamW optimizer with a linear learning rate schedule. 
The initial learning rate was set to $1\times 10^{-4}$, with a weight decay of $0.01$. 
We used a batch size of $256$ and trained for $5{,}000$ iterations. 
For inference, we measure the average number of steps (loops) per generation. 
We generate $N=50{,}000$ samples for CoT and $N=10{,}000$ samples for looped TFs. 
The resulting histograms of CoT outputs over the target support are shown in \Cref{fig:fpaus}.

\begin{figure}[t]
  \centering
  \includegraphics[width=0.3\linewidth]{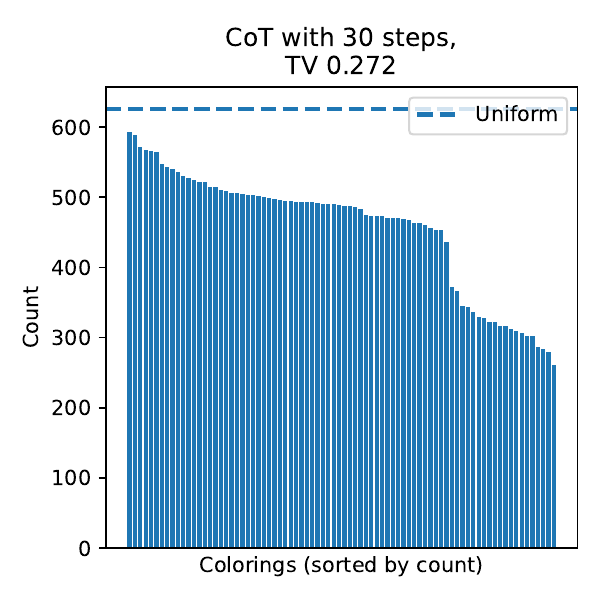}
  \includegraphics[width=0.3\linewidth]{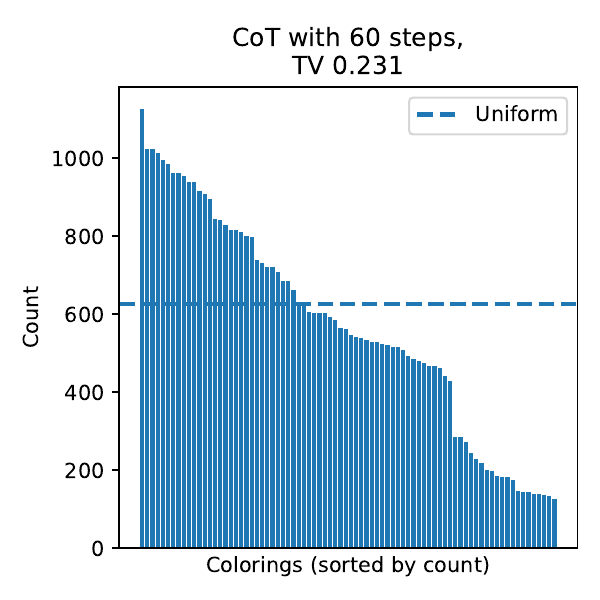}
  \includegraphics[width=0.3\linewidth]{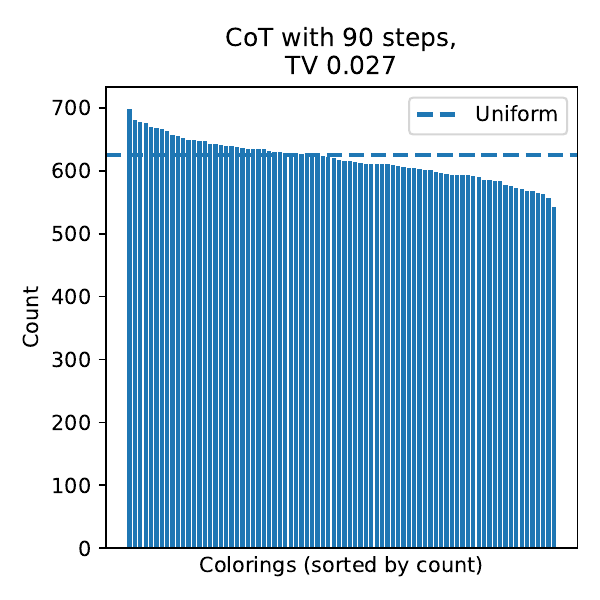}
    \caption{Output distributions of CoT compared against the uniform target distribution.}
  \label{fig:fpaus}
\end{figure}



\end{document}